
\documentclass{article}

\usepackage{times}
\usepackage{graphicx} 

\usepackage{natbib}

\usepackage{algorithm}
\usepackage[noend]{algorithmic}

\usepackage{hyperref}


\usepackage[accepted]{icml2017}


\icmltitlerunning{Lost Relatives of the Gumbel Trick}

\usepackage{amsmath}		
\usepackage{amsfonts}		
\usepackage{bbm}            
\usepackage{booktabs}		
\usepackage{dsfont}
\usepackage{mathtools}		
\usepackage{subcaption}		

\usepackage{amsthm}
\theoremstyle{plain}
\newtheorem{theorem}{Theorem}
\newtheorem{corollary}[theorem]{Corollary}
\newtheorem{lemma}[theorem]{Lemma}
\newtheorem{proposition}[theorem]{Proposition}
\theoremstyle{remark}
\newtheorem*{remark}{Remark}
\theoremstyle{definition}
\newtheorem{definition}[theorem]{Definition}

\newtheorem{example}[theorem]{Example}

\usepackage{thmtools}
\declaretheoremstyle[notefont=\bfseries,notebraces={}{},%
    headpunct={},postheadspace=1em]{mystyle}
\declaretheorem[style=mystyle,numbered=no,name=Lemma]{hlem} 
\declaretheorem[style=mystyle,numbered=no,name=Proposition]{hprop} 
\declaretheorem[style=mystyle,numbered=no,name=Corollary]{hcorl} 

\usepackage{enumitem}
\setlist{nosep}

\DeclarePairedDelimiterX{\infdivx}[2]{(}{)}{%
	#1\;\delimsize\|\;#2%
}
\newcommand{\KL}{\operatorname{KL}\infdivx}

\newcommand{\argmin}{\operatorname*{argmin}}
\newcommand{\argmax}{\operatorname*{argmax}}
\newcommand{\pow}{\operatorname*{pow}}

\renewcommand{\d}{\,\mathrm{d}}			
\newcommand{\bx}{\mathbf{x}}			
\newcommand{\calL}{\mathcal{L}}			
\newcommand{\calU}{\mathcal{U}}			
\newcommand{\calX}{\mathcal{X}}			
\newcommand{\IE}{\mathbb{E}}  			
\newcommand{\IP}{\mathbb{P}}  			
\newcommand{\IR}{\mathbb{R}}  			

\newcommand{\bias}{\operatorname{bias}}
\newcommand{\var}{\operatorname{var}}
\newcommand{\MSE}{\operatorname{MSE}}

\newcommand{\Exp}{\operatorname{Exp}}

\newcommand{\InvGamma}{\operatorname{InvGamma}}
\newcommand{\Gumbel}{\operatorname{Gumbel}}
\newcommand{\Frechet}{\operatorname{Fr\acute{e}chet}}
\newcommand{\Weibull}{\operatorname{Weibull}}

\newcommand{\qsum}{q_{\text{sum}}}		
\newcommand{\qavg}{q_{\text{avg}}}		


\begin{document}

\twocolumn[
\icmltitle{Lost Relatives of the Gumbel Trick}



\icmlsetsymbol{equal}{*}

\begin{icmlauthorlist}
\icmlauthor{Matej Balog}{cam,tueb}
\icmlauthor{Nilesh Tripuraneni}{berk}
\icmlauthor{Zoubin Ghahramani}{cam,uber}
\icmlauthor{Adrian Weller}{cam,ati}
\end{icmlauthorlist}

\icmlaffiliation{cam}{University of Cambridge, UK}
\icmlaffiliation{tueb}{MPI-IS, T\"{u}bingen, Germany}
\icmlaffiliation{berk}{UC Berkeley, USA}
\icmlaffiliation{uber}{Uber AI Labs, USA}
\icmlaffiliation{ati}{Alan Turing Institute, UK}

\icmlcorrespondingauthor{Matej Balog}{first.last@gmail.com}

\icmlkeywords{machine learning, statistics, partition function, Gumbel trick, ICML}

\vskip 0.3in
]



\printAffiliationsAndNotice{}  

\begin{abstract}
The Gumbel trick is a method to sample from a discrete probability distribution, or to estimate its normalizing partition function.
The method relies on repeatedly applying a random perturbation to the distribution in a particular way, each time solving for the most likely configuration.
We derive an entire family of related methods, of which the Gumbel trick is one member, and show that the new methods have superior properties in several settings with minimal additional computational cost.
In particular, for the Gumbel trick to yield computational benefits for discrete graphical models, Gumbel perturbations on all configurations are typically replaced with so-called low-rank perturbations.
We show how a subfamily of our new methods adapts to this setting, proving new upper and lower bounds on the log partition function and deriving a family of sequential samplers for the Gibbs distribution.
Finally, we balance the discussion by showing how the simpler analytical form of the Gumbel trick enables additional theoretical results.
\end{abstract}

\section{Introduction}
\label{sec:Introduction}


In this work we are concerned with the fundamental problem of sampling from a discrete probability distribution and evaluating its normalizing constant.
A probability distribution $p$ on a discrete sample space $\calX$ is provided in terms of its potential function $\phi: \calX \to [-\infty, \infty)$, corresponding to log-unnormalized probabilities via $p(\bx) = e^{\phi(\bx)} / Z$, where the normalizing constant $Z$ is the \emph{partition function}.
In this context, $p$ is the \emph{Gibbs distribution} on $\calX$ associated with the potential function $\phi$. The challenges of sampling from such a discrete probability distribution and estimating the partition function are fundamental problems with ubiquitous applications in machine learning, classical statistics and statistical physics (see, e.g.,~\citealp{opac-b1079282}).


\emph{Perturb-and-MAP} methods~\citep{papandreou_gaussian_2010} constitute a class of randomized algorithms for estimating partition functions and sampling from Gibbs distributions, which operate by randomly perturbing the corresponding potential functions and employing maximum a posteriori (MAP) solvers on the perturbed models to find a maximum probability configuration.
This MAP problem is NP-hard in general; however, substantial research effort has led to the development of solvers which can efficiently compute or estimate the MAP solution on many problems that occur in practice (e.g., \citealp{boykov2001fast,kolmogorov2006convergent,Dar09}).
Evaluating the partition function is a harder problem, containing for instance \#P-hard counting problems.
The general aim of perturb-and-MAP methods is to reduce the problem of partition function evaluation, or the problem of sampling from the Gibbs distribution, to repeated instances of the MAP problem (where each instance is on a different random perturbation of the original model).

The Gumbel trick~\citep{papandreou_perturb_map_2011} relies on adding Gumbel-distributed noise to each configuration's potential $\phi(\bx)$.
We derive a wider family of perturb-and-MAP methods that can be seen as perturbing the model in different
ways --  in particular
using the Weibull and Fr\'echet distributions alongside the Gumbel. We show that the new methods
can be implemented with essentially no additional computational cost by simply averaging existing Gumbel MAP perturbations in different spaces, and that they can lead to more accurate estimators of the partition function.

Evaluating or perturbing each configuration's potential with i.i.d.~Gumbel noise can be computationally expensive. One way to mitigate this is to cleverly prune computation in regions where the maximum perturbed potential is unlikely to be found \cite{maddison_ast_2014,chen_scalable_2016}.
Another approach exploits the product structure of the sample space in discrete graphical models, replacing i.i.d.~Gumbel noise with a ``low-rank" approximation. \citet{hazan_partition_2012, hazan_sampling_2013} showed that from such an approximation, upper and lower bounds on the partition function and a sequential sampler for the Gibbs distribution can still be recovered. We show that a subfamily of our new methods, consisting of Fr\'echet, Exponential and Weibull tricks, can also be used with low-rank perturbations, and use these tricks to derive new upper and lower bounds on the partition function, and to construct new sequential samplers for the Gibbs distribution.

Our main contributions are as follows:
\begin{enumerate}[leftmargin=*]
	\item A family of tricks that can be implemented by simply averaging Gumbel perturbations in different spaces, and which can lead to more accurate or more sample efficient estimators of $Z$ (Section~\ref{sec:NewTricks}).
	\item New upper and lower bounds on the partition function of a discrete graphical model computable using low-rank perturbations, and a corresponding family of sequential samplers for the Gibbs distribution (Section~\ref{sec:LowRank}).


	\item Discussion of advantages of the simpler analytical form of the Gumbel trick including new links between the errors of
	estimating $Z$,
	sampling, and entropy estimation using low-rank Gumbel perturbations (Section~\ref{sec:LowRankErrors}).
\end{enumerate}


\paragraph{Background and Related work}
\label{sec:RelatedWork}

The idea of perturbing the potential function of a discrete graphical model in order to sample from its associated Gibbs distribution was introduced by~\citet{papandreou_perturb_map_2011}, inspired by their previous work on reducing the sampling problem for Gaussian Markov random fields to the problem of finding the mean, using independent local perturbations of each Gaussian factor~\citep{papandreou_gaussian_2010}. \citet{tarlow_randomized_2012} extended this perturb-and-MAP approach to sampling, in particular by considering more general structured prediction problems. \citet{hazan_partition_2012} pointed out that MAP perturbations are useful not only for sampling the Gibbs distribution (considering the argmax of the perturbed model), but also for bounding and approximating the partition function (by considering the value of the max).

Afterwards,~\citet{hazan_sampling_2013} derived new lower bounds on the partition function and proposed a new sampler for the Gibbs distribution that samples variables of a discrete graphical model sequentially, using expected values of low-rank MAP perturbations to construct the conditional probabilities. Due to the low-rank approximation, this algorithm has the option to reject a sample. \citet{orabona_measure_2014} and~\citet{hazan_high_2016} subsequently derived measure concentration results for the Gumbel distribution that can be used to control the rejection probability. \citet{maji_active_2014} derived an uncertainty measure from random MAP perturbations, using it within a Bayesian active learning framework for interactive image boundary annotation.

Perturb-and-MAP was famously generalized to continuous spaces by~\citet{maddison_ast_2014}, replacing the Gumbel distribution with a Gumbel process and calling the resulting algorithm \emph{A* sampling}. \citet{maddison2016poisson} cast this work into a unified framework together with adaptive rejection sampling techniques, based on the notion of exponential races. This recent view generally brings together perturb-and-MAP and accept-reject samplers, exploiting the connection between the Gumbel distribution and competing exponential clocks that we also discuss in Section~\ref{sec:GumbelTrick}.

Inspired by A* sampling,~\citet{kim_2016_linear_programming} proposed an exact sampler for discrete graphical models based on lazily-instantiated random perturbations, which uses linear programming relaxations to prune the optimization space. Further recent applications of perturb-and-MAP include structured prediction in computer vision~\citep{bertasius_local_2016} and turning the discrete sampling problem into an optimization task that can be cast as a multi-armed bandit problem~\citep{chen_scalable_2016}, see Section~\ref{sec:exp:FullRank} below.

In addition to perturb-and-MAP methods, we are aware of three other approaches to estimate the partition function of a discrete graphical model via MAP solver calls.  The WISH method (weighted-integrals-and-sums-by-hashing, \citealp{Erm13icml}) relies on repeated MAP inference calls applied to the model after subjecting it to random hash constraints. The Frank-Wolfe method may be applied by iteratively updating marginals using a constrained MAP solver and line search \citep{BelSheMcC13, krishnan_barrier_2015}. \citet{WelJeb14} instead use just one MAP call over a discretized mesh of marginals to approximate the Bethe partition function, which itself is an estimate (which often performs well) of the true partition function.

\section{Relatives of the Gumbel Trick}
\label{sec:NewTricks}

In this section, we review the Gumbel trick and state the mechanism by which it can be generalized into an entire family of tricks. We show how these tricks can equivalently be viewed as averaging standard Gumbel perturbations in different spaces, instantiate several examples, and compare the various tricks' properties.

\paragraph{Notation} Throughout this paper, let $\calX$ be a finite sample space of size $N := |\calX|$. Let $\tilde{p} : \calX \to [0, \infty)$ be an unnormalized mass function over $\calX$ and let $Z := \sum_{x \in \calX} \tilde{p}(x)$ be its normalizing partition function. Write $p(x) := \tilde{p}(x) / Z$ for the normalized version of $\tilde{p}$, and $\phi(x) := \ln \tilde{p}(x)$ for the log-unnormalized probabilities, i.e. the potential function.

We write $\Exp(\lambda)$ for the exponential distribution with rate (inverse mean) $\lambda$ and $\Gumbel(\mu)$ for the Gumbel distribution with location $\mu$ and scale $1$. The latter has mean $\mu + c$, where $c \approx 0.5772$ is the Euler-Mascheroni constant.

\begin{table*}[!tb]
	\caption{New tricks for constructing unbiased estimators of different transformations $f(Z)$ of the partition function.}
	\vspace{-0.5em}
	\begin{center}
	\begin{small}
		\begin{tabular}{lllll}
			\toprule
			Trick &
			$g(x)$ &
			Mean $f(Z)$ &
			Variance of $g(T)$ &
			Asymptotic var. of $\hat{Z}$
			\\ \midrule
			Gumbel &
			$-\ln x - c$ &
			$\ln Z$ &
			$\frac{\pi^2}{6}$ &
			$\frac{\pi^2}{6} Z^2$
			\\
			Exponential &
			$x$ &
			$\frac{1}{Z}$ &
			$\frac{1}{Z^2}$ &
			$Z^2$
			\\
			Weibull $\alpha$ &
			$x^{\alpha}$, $\alpha > 0$ &
			$Z^{-\alpha} \Gamma(1 + \alpha)$ &
			$\frac{\Gamma(1 + 2\alpha) - \Gamma(1 + \alpha)^2}{Z^{2\alpha}}$ &
			$\frac{1}{\alpha^2} \left( \frac{\Gamma(1 + 2\alpha)}{\Gamma(1+\alpha)^2} - 1 \right) Z^2$
			\\
			Fr\'echet $\alpha$ &
			$x^{\alpha}$, $\alpha \in (-1, 0)$ &
			$Z^{-\alpha} \Gamma(1 + \alpha)$ &
			$\frac{\Gamma(1 + 2\alpha) - \Gamma(1 + \alpha)^2}{Z^{2\alpha}}$ for $\alpha > - \frac{1}{2}$ &
			$\frac{1}{\alpha^2} \left( \frac{\Gamma(1 + 2\alpha)}{\Gamma(1+\alpha)^2} - 1 \right) Z^2$
			\\
			Pareto &
			$e^x$ &
			$\frac{Z}{Z - 1}$ for $Z > 1$ &
			$a \frac{Z}{(Z-1)^2(Z-2)}$ for $Z > 2$ &
			$\frac{Z^2}{(Z - 2)^2}$
			\\
			Tail $t$ &
			$\mathds{1}_{\{x > t\}}$ &
			$e^{-tZ}$ &
			$e^{-tZ} (1 - e^{-tZ})$ &
			$\frac{(1 - e^{-tZ})^2}{t^2}$
			\\ \bottomrule
		\end{tabular}
	\end{small}
	\end{center}
	\vspace{-0.5em}
	\label{tab:NewTricks}
\end{table*}

\subsection{The Gumbel Trick}
\label{sec:GumbelTrick}

Similarly to the connection between the Gumbel trick and the Poisson process established by \citet{maddison2016poisson}, we introduce the Gumbel trick for discrete probability distributions using a simple and elegant construction via \emph{competing exponential clocks}.
Consider $N$ independent clocks, started simultaneously, such that the $j$-th clock rings after a random time $T_j \sim \Exp(\lambda_j)$. Then it is easy to show that (1) the time until some clock rings has $\Exp(\sum_{j=1}^N \lambda_j)$ distribution, and (2) the probability of the $j$-th clock ringing first is proportional to its rate $\lambda_j$.
These properties are also widely used in survival analysis \citep{cox1984analysis}.

Consider $N$ competing exponential clocks $\{ T_x \}_{x \in \calX}$, indexed by elements of $\calX$, with respective rates $\lambda_x = \tilde{p}(x)$. Property (1) of competing exponential clocks tells us that
\begin{equation}
\min_{x \in \calX} \{ T_x \} \sim \Exp(Z).
\label{eq:ExpClockZ}
\end{equation}
Property (2) says that the random variable $\argmin_x T_x$, taking values in $\calX$, is distributed according to $p$:
\begin{equation}
\argmin_{x \in \calX} \{ T_x \} \sim p.
\label{eq:ExpClockSample}
\end{equation}
The Gumbel trick is obtained by applying the function $g(x) = - \ln x - c$ to the equalities in distribution (\ref{eq:ExpClockZ}) and (\ref{eq:ExpClockSample}). When $g$ is applied to an $\Exp(\lambda)$ random variable, the result follows the $\Gumbel(-c + \ln \lambda)$ distribution, which can also be represented as $\ln \lambda + \gamma$, where $\gamma \sim \Gumbel(-c)$.
Defining $\{ \gamma(x) \}_{x \in \calX} \stackrel{\text{i.i.d.}}{\sim} \Gumbel(-c)$ and noting that $g$ is strictly decreasing, applying the function $g$ to equalities in distribution (\ref{eq:ExpClockZ}) and (\ref{eq:ExpClockSample}), we obtain:
\begin{align}
\max_{x \in \calX} \{ \phi(x) + \gamma(x) \}
&\sim
\Gumbel(-c + \ln Z),
\tag{\ref{eq:ExpClockZ}'}
\label{eq:GumbelZ}
\\
\argmax_{x \in \calX} \{ \phi(x) + \gamma(x) \}
&\sim
p,
\tag{\ref{eq:ExpClockSample}'}
\label{eq:GumbelSample}
\end{align}
where we have recalled that $\phi(x) = \ln \lambda_x = \ln \tilde{p}(x)$.
The distribution $\Gumbel(-c + \ln Z)$ has mean $\ln Z$, and thus the log partition function can be estimated by averaging samples~\citep{hazan_partition_2012}.

\subsection{Constructing New Tricks}

Given the equality in distribution (\ref{eq:ExpClockZ}), we can treat the problem of estimating the partition function $Z$ as a parameter estimation problem for the exponential distribution. Applying the function $g(x) = - \ln x - c$ as in the Gumbel trick to obtain a $\Gumbel(-c + \ln Z)$ random variable, and estimating its mean to obtain an unbiased estimator of $\ln Z$, is just one way of inferring information about $Z$.

We consider applying different functions $g$ to (\ref{eq:ExpClockZ}); particularly those functions $g$ that transform the exponential distribution to another distribution with known mean. As the original exponential distribution has rate $Z$, the transformed distribution will have mean $f(Z)$, where $f$ will in general no longer be the logarithm function. 
Since we often are interested in estimating various transformations $f(Z)$ of $Z$, this provides us a with a collection of unbiased estimators from which to choose. Moreover, further transforming these estimators yields a collection of (biased) estimators for other transformations of $Z$, including $Z$ itself.

\begin{example}[Weibull tricks]
For any $\alpha > 0$, applying the function $g(x) = x^{\alpha}$ to an $\Exp(\lambda)$ random variable yields a random variable with the $\Weibull(\lambda^{-\alpha}, \alpha^{-1})$ distribution with scale $\lambda^{-\alpha}$ and shape $\alpha^{-1}$, which has mean $\lambda^{-\alpha} \Gamma(1 + \alpha)$ and can be also represented as $\lambda^{-\alpha} W$, where $W \sim \Weibull(1, \alpha^{-1})$. Defining $\{ W(x) \}_{x \in \calX} \stackrel{\text{i.i.d.}}{\sim} \Weibull(1, \alpha^{-1})$ and noting that $g$ is increasing, applying $g$ to the equality in distribution (\ref{eq:ExpClockZ}) gives
\begin{equation}
\min_{x \in \calX} \{ \tilde{p}^{-\alpha} W(x) \}
\sim
\Weibull(Z^{-\alpha}, \alpha^{-1}).
\tag{\ref{eq:ExpClockZ}''}
\label{eq:WeibullZ}
\end{equation}
Estimating the mean of $\Weibull(Z^{-\alpha}, \alpha^{-1})$ yields an unbiased estimator of $Z^{-\alpha} \Gamma(1 + \alpha)$. The special case $\alpha = 1$ corresponds to the identity function $g(x) = x$; we call the resulting trick the \emph{Exponential trick}. \qed
\label{ex:WeibullTricks}
\end{example}

Table~\ref{tab:NewTricks} lists several examples of tricks derived this way. As Example~\ref{ex:WeibullTricks} shows, these tricks may not involve additive perturbation of the potential function $\phi(x)$; the Weibull tricks multiplicatively perturb exponentiated unnormalized probabilities $\tilde{p}^{-\alpha}$ with Weibull noise. As models of interest are often specified in terms of potential functions, to be able to reuse existing MAP solvers in a black-box manner with the new tricks, we seek an equivalent formulation in terms of the potential function. The following Proposition shows that by not passing the function $g$ through the minimization in equation (\ref{eq:ExpClockZ}), the new tricks can be equivalently formulated as averaging additive Gumbel perturbations of the potential function in different spaces.

\newpage

\begin{proposition}
	\label{prop:NewTricksGumbelFormulation}
	For any function $g : [0, \infty) \to \IR$ such that $f(Z) = \IE_{T \sim \Exp(Z)}[g(T)]$ exists, we have
	\begin{equation*}
	f(Z)
	= \IE_{\gamma}\left[ g\left( e^{-c} \exp\left( - \max_{x \in \calX} \{ \phi(x) + \gamma(x) \} \right) \right) \right],
	\end{equation*}
	where $\{ \gamma(x) \}_{x \in \calX} \stackrel{\text{i.i.d.}}{\sim} \operatorname{Gumbel}(-c)$.
	\begin{proof}
		As $\max_x \{ \phi(x) + \gamma(x) \} \sim \Gumbel(-c + \ln Z)$, we have $e^{-c} \exp( \max_x \{ \phi(x) + \gamma(x) \} ) \sim \Exp(Z)$ and the result follows by the assumption relating $f$ and $g$.
	\end{proof}
\end{proposition}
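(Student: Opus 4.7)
The plan is essentially a one-step reduction: the proposition follows from equation~(\ref{eq:GumbelZ}) by a one-to-one change of variables, together with the law of the unconscious statistician. First I would invoke (\ref{eq:GumbelZ}), which asserts that $M := \max_{x \in \calX}\{\phi(x) + \gamma(x)\} \sim \Gumbel(-c + \ln Z)$ whenever $\{\gamma(x)\} \stackrel{\text{i.i.d.}}{\sim} \Gumbel(-c)$.

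The key step is to verify that the deterministic map $h(y) := e^{-c}\exp(-y)$ transports $\Gumbel(-c + \ln Z)$ to $\Exp(Z)$. I would do this by a direct CDF computation: the CDF of $M$ is $F_M(y) = \exp(-\exp(-(y - (-c + \ln Z)))) = \exp(-Z e^{-c} e^{-y})$, and so for $t > 0$,
\begin{equation*}
\IP(h(M) \le t) = \IP(M \ge -c - \ln t) = 1 - F_M(-c - \ln t) = 1 - e^{-Zt},
\end{equation*}
which is exactly the $\Exp(Z)$ CDF. Alternatively, one can simply observe that $h$ is the inverse of the transformation $g_0(x) = -\ln x - c$ used in Section~\ref{sec:GumbelTrick} to pass from $\min_x\{T_x\} \sim \Exp(Z)$ to $\max_x\{\phi(x) + \gamma(x)\} \sim \Gumbel(-c + \ln Z)$, so the inverse map carries the Gumbel distribution back to $\Exp(Z)$ tautologically.

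Once the distributional identity $h(M) \sim \Exp(Z)$ is established, the conclusion is immediate: by the law of the unconscious statistician and the hypothesis on $g$,
\begin{equation*}
\IE_{\gamma}\bigl[ g(h(M)) \bigr] \;=\; \IE_{T \sim \Exp(Z)}[g(T)] \;=\; f(Z),
\end{equation*}
which is exactly the claimed equation after substituting the definition of $h$ and $M$.

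There is no real obstacle here; this is pure bookkeeping. The only thing to be mildly careful about is that the transformation $h$ is strictly monotone (decreasing) and maps onto $(0, \infty)$, so no measurability or domain issues arise, and the existence of $\IE_{T \sim \Exp(Z)}[g(T)]$ assumed in the hypothesis immediately implies existence of the expectation on the right-hand side of the displayed identity.
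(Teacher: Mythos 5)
Your proof is correct and follows essentially the same route as the paper's one-line argument: pass the $\Gumbel(-c+\ln Z)$ maximum through $y \mapsto e^{-c}e^{-y}$ to recover an $\Exp(Z)$ variable and then apply the hypothesis on $g$. Your explicit CDF check is a welcome addition, and it also makes clear that the correct transformation is $e^{-c}\exp(-\max_x\{\cdot\})$ with the minus sign, as in the proposition's statement (the paper's proof text drops that sign, evidently a typo).
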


Proposition~\ref{prop:NewTricksGumbelFormulation} shows that the new tricks can be implemented by solving the same MAP problems $\max_x \{ \phi(x) + \gamma(x) \}$ as in the Gumbel trick, and then merely passing the solutions through the function $x \mapsto g(e^{-c} \exp(x))$ before averaging them to approximate the expectation.

\subsection{Comparing Tricks}
\label{sec:ComparingTricks}


\begin{figure}[!t]
	\centering
	\includegraphics[width=\linewidth]{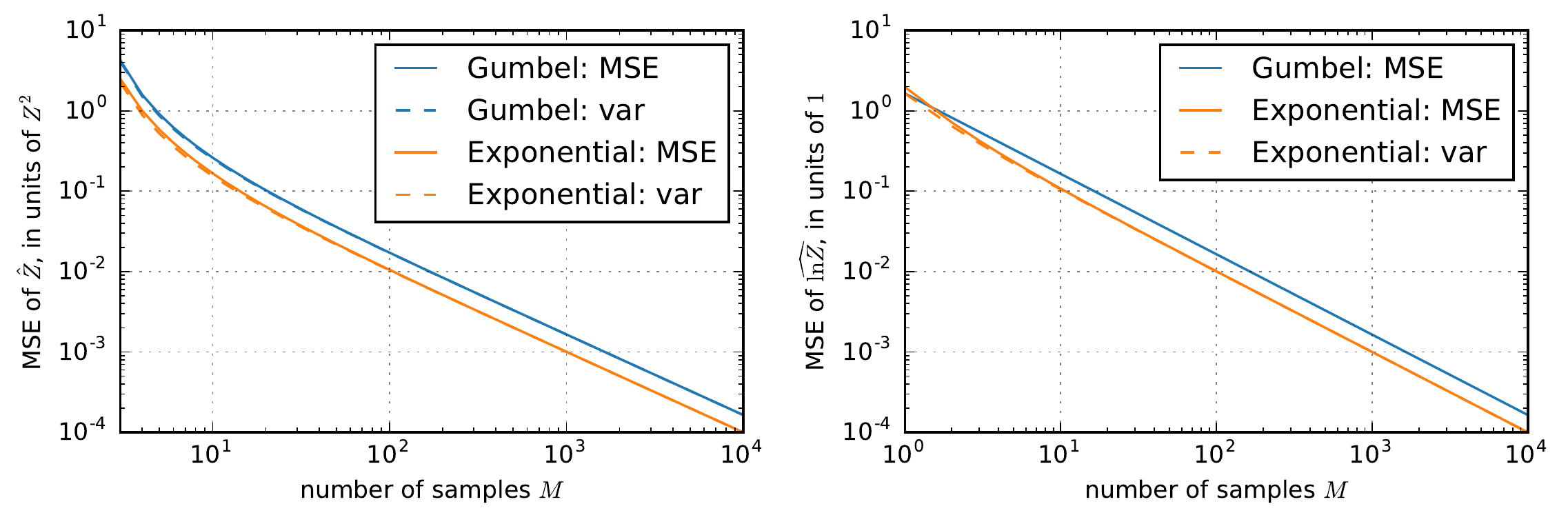}
	\caption{\small{Analytically computed MSE and variance of Gumbel and Exponential trick estimators of $Z$ (left) and $\ln Z$ (right). The MSEs are dominated by the variance, so the dashed and solid lines mostly overlap. See Section~\ref{sec:ComparingTricks:MSE} for details.}}
	\label{fig:estimators_Ms}
	\vspace{-0.1em}
\end{figure}

\subsubsection{Asymptotic efficiency}
\label{sec:ComparingTricks:Delta}

The Delta method~\citep{casella2002statistical} is a simple technique for assessing the asymptotic variance of estimators that are obtained by a differentiable transformation of an estimator with known variance.
The last column in Table~\ref{tab:NewTricks} lists asymptotic variances of corresponding tricks when unbiased estimators of $f(Z)$ are passed through the function $f^{-1}$ to yield (biased, but consistent and non-negative) estimators of $Z$ itself.
It is interesting to examine the constants that multiply $Z^2$ in some of the obtained asymptotic variance expressions for the different tricks. For example, it can be shown using Gurland's ratio~\citep{gurland_1956} that this constant is at least $1$ for the Weibull and Fr\'echet tricks, which is precisely the value achieved by the Exponential trick (which corresponds to $\alpha = 1$). Moreover, the Gumbel trick constant $\pi^2 / 6$ can be shown to be the limit as $\alpha \to 0$ of the Weibull and Fr\'echet trick constants. In particular, the constant of the Exponential trick is strictly better than that of the standard Gumbel trick: $1 < \pi^2 / 6 \approx 1.65$. This motivates us to compare the Gumbel and Exponential tricks in more detail.


\begin{figure}[!t]
	\centering
	\includegraphics[width=\linewidth]{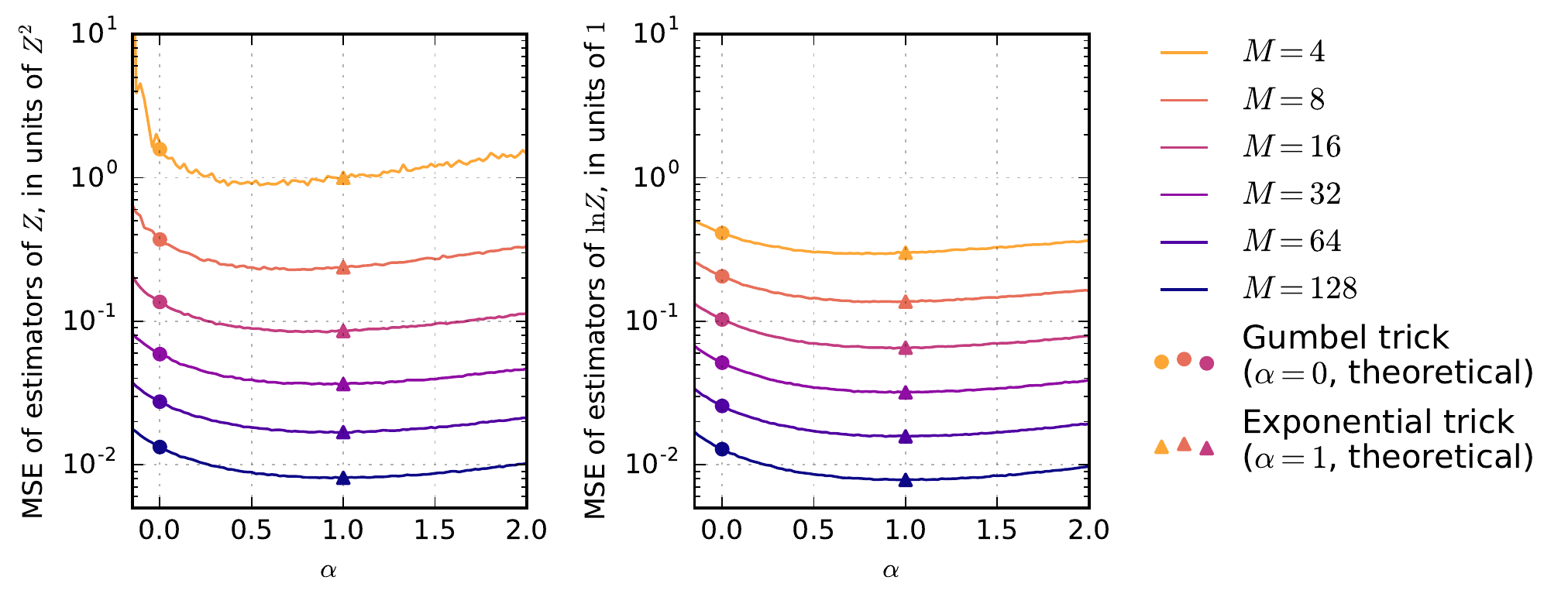}
	\caption{\small{MSE of estimators of $Z$ (left) and $\ln Z$ (right) stemming from Fr\'echet ($-\frac{1}{2} < \alpha < 0$), Gumbel ($\alpha = 0$) and Weibull tricks ($\alpha > 0$). See Section~\ref{sec:ComparingTricks:MSE} for details.}
	}
	\label{fig:estimators_alphas}
	\vspace{-0.9em}
\end{figure}

\subsubsection{Mean squared error (MSE)}
\label{sec:ComparingTricks:MSE}

For estimators $Y$, their $\MSE(Y) = \var(Y) + \bias(Y)^2$ is a commonly used comparison metric. When the Gumbel or Exponential tricks are used to estimate either $Z$ or $\ln Z$, the biases, variances, and MSEs of the estimators can be computed analytically using standard methods (Appendix~\ref{app:sec:FullRank}).

For example, the unbiased estimator of $\ln Z$ from the Gumbel trick can be turned into a consistent non-negative estimator of $Z$ by exponentiation: $Y = \exp( \frac{1}{M} \sum_{m = 1}^M X_m )$, where $X_1, \ldots, X_M \stackrel{\text{i.i.d.}}{\sim} \Gumbel(-c + \ln Z)$ are obtained using equation (\ref{eq:GumbelZ}). The bias and variance of $Y$ can be computed using independence and the moment generating functions of the $X_m$'s, see Appendix~\ref{app:sec:FullRank} for details.

Perhaps surprisingly, all estimator properties only depend on the true value of $Z$ and not on the structure of the model (distribution $p$), since the estimators rely only on i.i.d.~samples of a $\Gumbel(-c + \ln Z)$ random variable. Figure~\ref{fig:estimators_Ms} shows the analytically computed estimator variances and MSEs. For estimating $Z$ itself (left), the Exponential trick outperforms the Gumbel trick in terms of MSE for all sample sizes $M \ge 3$ (for $M \in \{ 1, 2 \}$, both estimators have infinite variance and MSE). The ratio of MSEs quickly approaches $\pi^2 / 6$, and in this regime the Exponential trick requires $1-6/\pi^2 \approx 39\%$ fewer samples than the Gumbel trick to reach the same MSE.
Also, for estimating $\ln Z$, (Figure~\ref{fig:estimators_Ms}, right), the Exponential trick provides a lower MSE estimator for sample sizes $M \ge 2$; only for $M = 1$ the Gumbel trick provides a better estimator.

Note that as biases are available analytically, the estimators can be easily debiased (by subtracting their bias). One then obtain estimators with MSEs equal to the variances of the original estimators, shown dashed in Figure~\ref{fig:estimators_Ms}. The Exponential trick would then always outperform the Gumbel trick when estimating $\ln Z$, even with sample size $M = 1$.


For Weibull tricks with $\alpha \not= 1$ and Fr\'echet tricks, we estimated the biases and variances of estimators of $Z$ and $\ln Z$ by constructing $K = 100,000$ estimators in each case and evaluating their bias and variance. Figure~\ref{fig:estimators_alphas} shows the results for varying $\alpha$ and several sample sizes $M$. We plot the analytically computed value for the Gumbel trick at $\alpha = 0$, as we observe that the Weibull trick interpolates between the Gumbel trick and the Exponential trick as $\alpha$ increases from $0$ to $1$. We note that the minimum MSE estimator is obtained by choosing a value of $\alpha$ that is close to $1$, i.e. the Exponential trick. This agrees with the finding from Section~\ref{sec:ComparingTricks:Delta} that $\alpha = 1$ is optimal as $M \to \infty$.


\subsection{Bayesian Perspective}

A Bayesian approach exposes two choices when constructing estimators of $Z$, or of its transformations $f(Z)$:
\begin{enumerate}
	\item A choice of prior distribution $p_0(Z)$, encoding prior beliefs about the value of $Z$ before any observations.
	\item A choice of how to summarize the posterior distribution $p_M(Z | X_1, \ldots, X_M)$ given $M$ samples.
\end{enumerate}

Taking the Jeffrey's prior $p_0(Z) \propto Z^{-1}$, an improper prior that it is invariant under reparametrization, observing $M$ samples $X_1, \ldots, X_M \stackrel{\text{i.i.d.}}{\sim} \Exp(Z)$ yields the posterior:
\begin{equation*}
p_M(Z | X_1, \ldots, X_M)
\propto Z^{M-1} e^{-Z \sum_{m = 1}^M X_m}.
\end{equation*}
Recognizing the density of a $\operatorname{Gamma}(M, \sum_{m = 1}^M X_m)$ random variable, the posterior mean is
\begin{equation*}
\IE[Z | X_1, \ldots, X_M]
= \frac{M}{\sum_{m = 1}^M X_m}
= \left( \frac{1}{M} \sum_{m = 1}^M X_m \right)^{-1},
\end{equation*}
coinciding with the Exponential trick estimator of $Z$.

\section{Low-rank Perturbations}
\label{sec:LowRank}

One way of exploiting perturb-and-MAP to yield computational savings is to replace independent perturbations of each configuration's potential with an approximation. Such approximations are available e.g.~in discrete graphical models, where the sampling space $\calX$ has a product space structure $\calX = \calX_1 \times \cdots \times \calX_n$, with $\calX_i$ the state space of the $i$-th variable.


\begin{definition}[ \cite{hazan_partition_2012}]
	\label{def:UnaryPerturbationMAP}
	The \emph{sum-unary perturbation MAP value} is the random variable
	\begin{equation*}
		U := \max_{\bx \in \calX} \Big\{ \phi(\bx) + \sum_{i = 1}^n \gamma_i(x_i) \Big\},
	\end{equation*}
	where $\{ \gamma_i(x_i) \mid x_i \in \calX_i, 1 \leq i \leq n \} \stackrel{\text{i.i.d}}{\sim} \operatorname{Gumbel}(-c)$.
\end{definition}
This definition involves $|\calX_1| + \cdots + |\calX_n|$ i.i.d.~Gumbel random variables, rather than $|\calX|$. (With $n = 1$ this coincides with full-rank perturbations and $U \sim \Gumbel(-c + \ln Z)$.) For $n > 2$ the distribution of $U$ is not available analytically. One can similarly define the \emph{pairwise} (or higher-order) \emph{perturbations}, where independent Gumbel noise is added to each pairwise (or higher-order) potential.

Unary perturbations provide the upper bound $\ln Z \leq \IE[U]$ on the log partition function~\citep{hazan_partition_2012}, can be used to construct a sequential sampler for the Gibbs distribution~\citep{hazan_sampling_2013}, and, if the perturbations are scaled down by a factor of $n$, a lower bound on $\ln Z$ can also be recovered~\citep{hazan_sampling_2013}.
In this section we show that a subfamily of tricks introduced in Section~\ref{sec:NewTricks}, consisting of Fr\'echet and Weibull (and Exponential) tricks, is applicable in the low-rank perturbation setting and use them to derive new families of upper and lower bounds on $
\ln Z$ and sequential samplers for the Gibbs distribution. Please note full proofs are deferred to  Appendix~\ref{app:sec:UnarySum} and~\ref{app:sec:UnaryAvg}.

\subsection{Upper Bounds on the Partition Function}
\label{sec:LowRank:UpperBounds}

The following family of upper bounds on $\ln Z$ can be derived from the Fr\'echet and Weibull tricks.
\begin{proposition}
	\label{prop:LowRankUpperBoundOnLnZ}
	For any $\alpha \in (-1, 0) \cup (0, \infty)$, the upper bound $\ln Z \leq \calU(\alpha)$ holds with
	\begin{equation*}
	\calU(\alpha) := n \frac{\ln \Gamma(1+\alpha)}{\alpha} + nc - \frac{1}{\alpha} \ln \IE_{\gamma}\left[ e^{-\alpha U} \right].
	\end{equation*}
	\begin{proof} (Sketch.) By induction on $n$, with the induction step provided by our Clamping Lemma (Lemma~\ref{lem:ClampingLemma}) below.
	\end{proof}
\end{proposition}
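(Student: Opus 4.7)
The plan is to recast the target as a moment inequality on $e^{-\alpha U}$ and prove it by induction on $n$. Rearranging $\ln Z \leq \calU(\alpha)$ and multiplying through by $\alpha$ (which flips the direction when $\alpha < 0$), the claim becomes
\begin{equation*}
\IE_{\gamma}[e^{-\alpha U}] \;\leq\; \Gamma(1+\alpha)^n\, e^{nc\alpha}\, Z^{-\alpha} \qquad (\alpha > 0),
\end{equation*}
and the reversed inequality when $\alpha \in (-1, 0)$. The base case $n = 1$ is exactly the full-rank Gumbel trick: $U \sim \Gumbel(-c + \ln Z)$, and a direct Gumbel MGF computation (using $\IE[e^{-\alpha \gamma}] = \Gamma(1+\alpha)$ for $\gamma \sim \Gumbel(0)$) yields \emph{equality} in both regimes simultaneously.

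For the inductive step I would split off the last variable. Write $U_k := \max_{x_1,\ldots,x_{n-1}}\{\phi(x_1,\ldots,x_{n-1},k) + \sum_{i=1}^{n-1}\gamma_i(x_i)\}$ for the sum-unary perturbation MAP value of the $(n-1)$-variable model clamped at $x_n = k$, with clamped partition function $Z_k$ and $Z = \sum_k Z_k$. Then $U = \max_{k \in \calX_n}\{\gamma_n(k) + U_k\}$. Conditionally on the $U_k$'s, which are functions of $\gamma_1, \ldots, \gamma_{n-1}$ and hence independent of $\gamma_n$, the standard max-of-Gumbels identity gives $U \mid \{U_k\} \sim \Gumbel(-c + \ln \sum_k e^{U_k})$, and another Gumbel MGF computation yields
\begin{equation*}
\IE[e^{-\alpha U}] \;=\; \Gamma(1+\alpha)\, e^{c\alpha}\, \IE\!\left[\Bigl(\sum_k e^{U_k}\Bigr)^{-\alpha}\right].
\end{equation*}

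The analytic core of the Clamping Lemma should then be (equivalent to) the reverse Minkowski inequality for the $L^p$ (quasi-)norm with $p = -\alpha$: for strictly positive random variables $X_k$,
\begin{equation*}
\IE\!\left[\Bigl(\sum_k X_k\Bigr)^{p}\right]^{1/p} \;\geq\; \sum_k \IE[X_k^{p}]^{1/p},
\end{equation*}
which holds both for $p < 0$ (corresponding to $\alpha > 0$) and for $p \in (0,1)$ (corresponding to $\alpha \in (-1,0)$); equivalently, this can be derived by Jensen's inequality applied to the (convex or concave) map $x \mapsto x^{-\alpha}$ after introducing an auxiliary probability vector and optimising over it. Applied with $X_k = e^{U_k}$, combined with the inductive hypothesis after raising to the $-1/\alpha$ power (which turns the $Z_k^{-\alpha}$ bound on $\IE[e^{-\alpha U_k}]$ into $\IE[e^{-\alpha U_k}]^{-1/\alpha} \geq \Gamma(1+\alpha)^{-(n-1)/\alpha} e^{-(n-1)c} Z_k$), the reverse Minkowski sum telescopes to $Z = \sum_k Z_k$, and raising back to the $-\alpha$ power plus the MGF identity above closes the induction. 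I expect the main obstacle to be bookkeeping of sign flips: raising to the powers $\alpha$, $-\alpha$, and $-1/\alpha$ at various stages flips or preserves inequalities depending on the sign of $\alpha$, and one needs to verify that in both regimes the flips cancel so that the same reverse-Minkowski direction drives the inductive step. A secondary subtlety is verifying the reverse Minkowski in the $p < 0$ quasi-norm regime, but the assumption $\alpha > -1$ together with $e^{U_k} > 0$ a.s.\ ensures the relevant integrals are well-defined via the Gumbel MGF.
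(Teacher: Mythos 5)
Your proposal is correct, and it shares the paper's overall skeleton---an induction on $n$ that peels off one variable, with the induction step controlling the sum over the clamped variable's values---but it executes that step by a genuinely different route. The paper proves its Clamping Lemma by returning to competing exponential clocks: the identity $\pow_{-\alpha}\sum_y \pow_{-1/\alpha} h(y) = \IE[\min_y h(y)W(y)/\Gamma(1+\alpha)]$ (Weibull trick for $\alpha>0$, Fr\'echet for $\alpha\in(-1,0)$) rewrites the relevant power sum as an expectation of a min over Weibull-perturbed values, a min/expectation exchange supplies the inequality, and the Weibull variables are then converted back to Gumbels. You instead (a) factor out the newly added layer of perturbation \emph{exactly}, via Gumbel max-stability conditional on the inner perturbations, obtaining $\IE[e^{-\alpha U}]=\Gamma(1+\alpha)e^{c\alpha}\,\IE[(\sum_k e^{U_k})^{-\alpha}]$, and (b) identify the remaining analytic content as the reverse Minkowski inequality for the $L^p$ quasi-norm with $p=-\alpha$, applied to the (dependent, which is harmless) variables $X_k=e^{U_k}$. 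Unwinding the paper's Clamping Lemma shows it is precisely this reverse Minkowski statement in disguise, so the two arguments are morally equivalent. What yours buys is transparency: the base case is an exact MGF identity, all slack is isolated in the single reverse-Minkowski step, and the sign bookkeeping for $\alpha>0$ versus $\alpha\in(-1,0)$ reduces to the two standard regimes $p<0$ and $p\in(0,1)$ of a classical inequality. What the paper's formulation buys is that the induction step is stated directly in perturb-and-MAP language, which lets the Clamping Lemma be reused verbatim for Corollary~\ref{cor:clamp} and for showing Algorithm~\ref{alg:SequentialSampler} is well-defined. The only loose ends in your sketch are ones you flag yourself: for $p<0$ one should note that $\IE[X_k^{-\alpha}]=\IE[e^{-\alpha U_k}]$ is finite (it is, since $U_k$ is bounded below by any single configuration's perturbed potential, whose relevant exponential moment is finite for $\alpha>-1$), and configurations with $\tilde p=0$ should simply be dropped from the sum.
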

To evaluate these bounds in practice, $\IE[e^{-\alpha U}]$ is estimated using samples of $U$. Corollary 9 of~\citet{hazan_high_2016} can be used to show that $\var(e^{-\alpha U})$ is finite for $\alpha > - \frac{1}{2\sqrt{n}}$, and so then the estimation is well-behaved.

A natural question is how these new bounds relate to the Gumbel trick upper bound $\ln Z \leq \IE[U]$ by~\citet{hazan_partition_2012}. The following result aims to answers this:
\begin{proposition}
\label{prop:LowRank:Ulimit0}
The limit of $\calU(\alpha)$ as $\alpha \to 0$ exists and equals $\calU(0) := \IE[U]$, i.e. the Gumbel trick upper bound.
\end{proposition}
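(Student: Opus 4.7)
The plan is to show that each of the three summands in $\calU(\alpha)$ has a well-defined limit as $\alpha \to 0$, and that these limits add up to $\IE[U]$.

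For the deterministic piece $n \tfrac{\ln\Gamma(1+\alpha)}{\alpha}+nc$, I would use the Taylor expansion of $\ln\Gamma(1+\alpha)$ about $\alpha=0$. Since $\Gamma(1)=1$ and the digamma function satisfies $\psi(1)=-c$, we have $\ln\Gamma(1+\alpha)=-c\,\alpha+O(\alpha^2)$, so $\tfrac{\ln\Gamma(1+\alpha)}{\alpha}\to -c$. Hence the first two summands together tend to $-nc+nc=0$. This step is routine and can be written in one line.

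For the third summand $-\tfrac{1}{\alpha}\ln\IE[e^{-\alpha U}]$, the plan is to expand the moment generating function of $U$ near zero. If $M_U(t):=\IE[e^{tU}]$ is finite in a neighborhood of $t=0$, then it is smooth there with $M_U(0)=1$ and $M_U'(0)=\IE[U]$, so that $\IE[e^{-\alpha U}]=1-\alpha\,\IE[U]+O(\alpha^2)$. Taking logarithms gives $\ln\IE[e^{-\alpha U}]=-\alpha\,\IE[U]+O(\alpha^2)$, and dividing by $-\alpha$ yields the limit $\IE[U]$. Combining this with the vanishing contribution of the first two terms produces $\calU(\alpha)\to\IE[U]$, as required.

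The main obstacle is the existence and regularity of $M_U$ near zero, i.e.~justifying that $\IE[e^{-\alpha U}]$ is finite on an open interval around $\alpha=0$ and can be differentiated by passing the derivative under the expectation. This is precisely the kind of statement that is already invoked just before the proposition: by the sub-Gumbel concentration result of \citet{hazan_high_2016} (Corollary 9), $\IE[e^{-\alpha U}]<\infty$ at least for $\alpha>-\tfrac{1}{2\sqrt n}$, and a symmetric upper-tail bound on $U$ gives finiteness for small positive $\alpha$ as well; this lets dominated convergence justify both the expansion and differentiation under the integral sign. Once this is in place the rest is a clean two-term Taylor computation.

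Finally, I would note that the same argument shows $\calU$ is continuous (in fact smooth) at $0$ when extended by the value $\IE[U]$, so defining $\calU(0):=\IE[U]$ is consistent with the family of bounds in Proposition~\ref{prop:LowRankUpperBoundOnLnZ} and recovers the Gumbel trick upper bound of \citet{hazan_partition_2012} as a boundary case.
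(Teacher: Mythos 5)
Your proposal is correct and follows essentially the same route as the paper's proof: both decompose $\calU(\alpha)$ into the deterministic term (handled via $\psi(1)=-c$, whether by Taylor expansion or L'H\^{o}pital), the constant $nc$, and the cumulant term $-\tfrac{1}{\alpha}\ln\IE[e^{-\alpha U}]$, whose limit $\IE[U]$ follows from the first-order behaviour of the moment generating function at $0$. Your extra care about finiteness of $\IE[e^{tU}]$ near $t=0$ is a point the paper leaves implicit (it holds easily since $U$ is sandwiched between fixed finite sums of Gumbel variables, whose moment generating functions are finite for all $t<1$), but it does not change the argument.
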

The question remains: When is it advantageous to use a value $\alpha \not= 0$ to obtain a tighter bound on $\ln Z$ than the Gumbel trick bound?
The next result can provide guidance:
\begin{proposition}
\label{prop:UpperBoundDerivativeAt0}
The function $\calU(\alpha)$ is differentiable at $\alpha = 0$ and the derivative equals
\begin{equation*}
\frac{\d}{\d \alpha} \calU(\alpha) \bigg\rvert_{\alpha = 0}
= \frac{1}{2} \left( n \frac{\pi^2}{6} - \var(U) \right).
\end{equation*}
\end{proposition}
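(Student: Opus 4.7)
The plan is to compute the derivative via a second-order Taylor expansion of each constituent function around $\alpha = 0$, and then read off the linear coefficient of $\mathcal{U}(\alpha)$.

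First I would handle the deterministic $\Gamma$ piece. Using $\Gamma'(1) = -c$ and $\psi'(1) = \pi^2/6$, the standard expansion $\Gamma(1+\alpha) = 1 - c\alpha + \tfrac{1}{2}(c^2 + \pi^2/6)\alpha^2 + O(\alpha^3)$ combined with $\ln(1+u) = u - u^2/2 + O(u^3)$ gives
\begin{equation*}
\ln \Gamma(1+\alpha) = -c\alpha + \frac{\pi^2}{12}\alpha^2 + O(\alpha^3),
\qquad
\frac{\ln\Gamma(1+\alpha)}{\alpha} = -c + \frac{\pi^2}{12}\alpha + O(\alpha^2).
\end{equation*}

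Next I would expand $M(\alpha) := \mathbb{E}_\gamma[e^{-\alpha U}]$. Computing derivatives at $\alpha = 0$ formally yields $M(0) = 1$, $M'(0) = -\mathbb{E}[U]$, $M''(0) = \mathbb{E}[U^2]$, so
\begin{equation*}
\ln M(\alpha) = -\mathbb{E}[U]\,\alpha + \frac{\operatorname{var}(U)}{2}\alpha^2 + O(\alpha^3),
\qquad
\frac{1}{\alpha}\ln M(\alpha) = -\mathbb{E}[U] + \frac{\operatorname{var}(U)}{2}\alpha + O(\alpha^2).
\end{equation*}
Substituting both expansions into the definition of $\mathcal{U}(\alpha)$, the $\pm nc$ terms cancel and the $-\mathbb{E}[U]$ term reconstructs $\mathcal{U}(0)$, leaving
\begin{equation*}
\mathcal{U}(\alpha) = \mathbb{E}[U] + \frac{1}{2}\left(n\frac{\pi^2}{6} - \operatorname{var}(U)\right)\alpha + O(\alpha^2),
\end{equation*}
which yields the claimed derivative.

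The main obstacle is rigorously justifying the MGF expansion: one must show that $M(\alpha)$ is twice continuously differentiable at $\alpha = 0$ with derivatives obtained by differentiating under the expectation, so that $M'(0) = -\mathbb{E}[U]$ and $M''(0) = \mathbb{E}[U^2]$. This requires a dominating-function / moment-control argument on a neighborhood of $0$. Fortunately, the Gumbel concentration result already cited in the paper (Corollary 9 of \citealp{hazan_high_2016}, used just above to assert $\operatorname{var}(e^{-\alpha U}) < \infty$ for $\alpha > -1/(2\sqrt{n})$) gives exponential tail control on $U$, from which one obtains $\mathbb{E}[|U|^k e^{-\alpha U}] < \infty$ uniformly on a small symmetric interval about $0$ for every $k$; standard dominated convergence then legitimizes differentiation under the integral and therefore the Taylor expansion of $\ln M(\alpha)$. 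Finally, one appeals to Proposition~\ref{prop:LowRank:Ulimit0} to identify the constant term with $\mathcal{U}(0)$, completing the argument.
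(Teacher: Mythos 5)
Your proposal is correct, and the computation checks out: the second-order coefficients $\frac{\pi^2}{12}$ for $\ln\Gamma(1+\alpha)$ and $\frac{\var(U)}{2}$ for the cumulant generating function are exactly right, and they combine to give $\frac{1}{2}\bigl(n\frac{\pi^2}{6}-\var(U)\bigr)$. Your route differs from the paper's in how differentiability at $\alpha=0$ is established. The paper differentiates $\calU(\alpha)$ for $\alpha\neq 0$ via the quotient rule, computes $\lim_{\alpha\to 0}\calU'(\alpha)$ by two applications of L'H\^{o}pital's rule, and then invokes continuity of $\calU$ at $0$ to conclude that the one-sided derivatives at $0$ exist and equal that limit. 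You instead Taylor-expand $\calU(\alpha)=\IE[U]+C\alpha+O(\alpha^2)$ and read the derivative directly off the difference quotient; this is more self-contained (it avoids the ``limit of derivatives'' step and simultaneously re-derives Proposition~\ref{prop:LowRank:Ulimit0} as the constant term), at the cost of needing the second-order expansion of $\ln\IE[e^{-\alpha U}]$ to be valid, which the paper's proof also implicitly requires when it asserts $K''(0)=\var(U)$. On that point, the obstacle you flag is genuine but is handled more simply than via the concentration result of \citet{hazan_high_2016}: since $\calX$ is finite, $|U|$ is bounded by a constant plus a finite sum of absolute values of independent $\Gumbel(-c)$ variables, whose moment generating functions are finite on a neighbourhood of $0$; hence $\IE[e^{tU}]<\infty$ on an open interval around $0$, and a moment generating function finite on an open interval is automatically smooth there with derivatives given by moments. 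Either justification closes the gap; the paper itself simply asserts differentiability of the cumulant generating function without elaboration.
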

While the variance of $U$ is generally not tractable, in practice one obtains samples from $U$ to estimate the expectation in $\calU(\alpha)$ and these samples can be reused to assess $\var(U)$. Interestingly, $\var(U)$ equals $n \pi^2 / 6$ for both the uniform distribution and the distribution concentrated on a single configuration, and in our empirical investigations always $\var(U) \leq n \pi^2 / 6$. Then the derivative at $0$ is non-negative and Fr\'echet tricks provide tighter bounds on $\ln Z$. However, as $\calU(\alpha)$ is estimated with samples, the question of estimator variance arises. We investigate the trade-off between tightness of the bound $\ln Z \leq \calU(\alpha)$ and the variance incurred in estimating $\calU(\alpha)$ empirically in Section~\ref{sec:exp:LowRankBounds}.

\subsection{Clamping}
\label{sec:LowRank:Clamping}

Consider the \emph{partial sum-unary perturbation MAP values}, where the values of the first $j - 1$ variables have been fixed, and only the rest are perturbed:
\begin{equation*}
	U_j(x_1, \ldots, x_{j - 1}) := \max_{x_j, \ldots, x_n} \left\{ \phi(\bx) + \sum_{i = j}^n \gamma_i(x_i) \right\}.
\end{equation*}
The following lemma involving the $U_j$'s serves three purposes: (I.) it provides the induction step for Proposition~\ref{prop:LowRankUpperBoundOnLnZ}, (II.) it shows that clamping never hurts partition function estimation with Fr\'echet and Weibull tricks, and (III.) it will be used to show that a sequential sampler constructed in Section~\ref{sec:lowRank:sampler} below is well-defined.

\begin{lemma}[Clamping Lemma]\label{lem:clamping}
For any $j \in \{ 1, \ldots, n \}$ and $(x_1, \ldots, x_{j - 1}) \in \calX_1 \times \cdots \times \calX_{j - 1}$, the following inequality holds with any $\alpha \in (-1, 0) \cup (0, \infty)$:
\begin{align*}
\sum_{x_j \in \calX_j} \IE_{\gamma}\left[ e^{-(n-j) \ln \Gamma(1 + \alpha) - \alpha (n-j) c)} e^{-\alpha U_{j + 1}} \right]^{-1/\alpha}
\\ \leq
\IE_{\gamma}\left[ e^{-(n-(j-1)) \ln \Gamma(1 + \alpha) - \alpha (n-(j-1)) c)} e^{-\alpha U_j} \right]^{-1/\alpha}
\end{align*}
\begin{proof} This follows directly from the Fr\'echet trick ($\alpha \in (-1, 0)$) or the Weibull trick ($\alpha > 0$) and representing the Fr\'echet resp. Weibull random variables in terms of Gumbel random variables. See Appendix~\ref{app:sec:sum_unary:upper} for more details.
\end{proof}
\label{lem:ClampingLemma}
\end{lemma}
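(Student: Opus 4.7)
The plan is to reduce the claimed inequality to Minkowski's inequality, in its reverse form, for the quasi-norm $\|W\|_p := \IE[W^p]^{1/p}$ at exponent $p = -\alpha$, by exploiting the recursive structure of the partial maxima. I would begin from the elementary recursion
\[
U_j(x_1, \ldots, x_{j-1}) \;=\; \max_{x_j \in \calX_j}\{\gamma_j(x_j) + U_{j+1}(x_1, \ldots, x_{j-1}, x_j)\},
\]
which peels off the maximization over the single coordinate $x_j$. Conditioning on $\{\gamma_i(\cdot)\}_{i>j}$ makes the family $\{U_{j+1}(x_1, \ldots, x_{j-1}, x_j)\}_{x_j \in \calX_j}$ deterministic, so the remaining outer max is a full-rank Gumbel perturbation over $\calX_j$ with those values playing the role of potentials. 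The Gumbel max identity~(\ref{eq:GumbelZ}) then supplies the conditional law
\[
U_j \mid \{\gamma_i\}_{i>j} \;\sim\; \Gumbel(-c + \ln \tilde Z), \qquad \tilde Z := \sum_{x_j \in \calX_j} e^{U_{j+1}(x_1, \ldots, x_{j-1}, x_j)}.
\]

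Next, I would invoke the moment identity that underpins the Fr\'echet/Weibull tricks: for $\alpha > -1$ and $G \sim \Gumbel(-c + \ln \lambda)$, $\IE[e^{-\alpha G}] = e^{\alpha c}\Gamma(1+\alpha)\lambda^{-\alpha}$. This is the statement $e^{-c}e^{-G} \sim \Exp(\lambda)$ of Proposition~\ref{prop:NewTricksGumbelFormulation} composed with the exponential moment $\IE[T^\alpha] = \lambda^{-\alpha}\Gamma(1+\alpha)$, and its validity for $\alpha > -1$ can also be read off the Gumbel MGF $\IE[e^{tG}] = e^{\mu t}\Gamma(1-t)$, valid for $t < 1$. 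Conditional expectation followed by iterated expectation yields
\[
\IE[e^{-\alpha U_j}] \;=\; e^{\alpha c}\Gamma(1+\alpha)\,\IE\!\left[\Bigl(\sum_{x_j \in \calX_j} e^{U_{j+1}(x_1, \ldots, x_{j-1}, x_j)}\Bigr)^{\!-\alpha}\right].
\]

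Substituting this into the right-hand side of the claim, the prefactor $\Gamma(1+\alpha)^{(n-j+1)/\alpha}e^{(n-j+1)c}$ produced by raising $e^{-(n-j+1)\ln\Gamma(1+\alpha) - \alpha(n-j+1)c}$ to $-1/\alpha$ absorbs exactly the extra factor $(e^{\alpha c}\Gamma(1+\alpha))^{-1/\alpha} = e^{-c}\Gamma(1+\alpha)^{-1/\alpha}$ coming from the moment identity, leaving precisely the prefactor $\Gamma(1+\alpha)^{(n-j)/\alpha}e^{(n-j)c}$ that already multiplies the left-hand side. Dividing through by this common positive constant collapses the full claim to
\[
\sum_{x_j \in \calX_j} \IE[W_{x_j}^{-\alpha}]^{-1/\alpha} \;\leq\; \IE\!\left[\Bigl(\sum_{x_j \in \calX_j} W_{x_j}\Bigr)^{\!-\alpha}\right]^{-1/\alpha}, \qquad W_{x_j} := e^{U_{j+1}(x_1, \ldots, x_{j-1}, x_j)} > 0.
\]

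Finally, I would recognize this reduced inequality as Minkowski's inequality at exponent $p := -\alpha$ in its \emph{reverse} form for the quasi-norm $\|W\|_p := \IE[W^p]^{1/p}$, namely $\|\sum_{x_j} W_{x_j}\|_p \geq \sum_{x_j} \|W_{x_j}\|_p$. For $\alpha \in (-1, 0)$ the exponent $p$ lies in $(0, 1)$ and for $\alpha > 0$ it lies in $(-\infty, 0)$; in both regimes Minkowski's inequality, when applied to strictly positive random variables, reverses direction and delivers exactly the required bound. The main obstacle is more bookkeeping than substance: handling the two sign regimes of $\alpha$ uniformly while keeping the conditional Gumbel max calculus and the prefactor algebra straight, and verifying the finiteness of $\IE[e^{-\alpha U_{j+1}}]$ throughout, which is guaranteed by the hypothesis $\alpha > -1$ together with the Gumbel MGF.
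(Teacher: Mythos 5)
Your proof is correct, and it takes a genuinely different route from the paper's. The paper proves the Clamping Lemma by invoking the Weibull/Fr\'echet trick representation (Lemma~\ref{lem:WeibullFrechetMagic}): it rewrites the power--sum--power expression over $x_j$ as an expectation of a minimum over auxiliary Weibull (resp.\ Fr\'echet) random variables indexed by $\calX_j$, applies the swap $\IE[\min] \leq \min \IE$ to merge the two layers of expectation and minimization, and only then converts the Weibull variables back to Gumbels via $W = e^{-\alpha(\gamma + c)}$. You instead stay entirely within the Gumbel calculus: the recursion $U_j = \max_{x_j}\{\gamma_j(x_j) + U_{j+1}\}$, conditional max-stability giving $U_j \mid \{\gamma_i\}_{i>j} \sim \Gumbel(-c + \ln \sum_{x_j} e^{U_{j+1}})$, and the negative-moment identity $\IE[e^{-\alpha G}] = e^{\alpha c}\Gamma(1+\alpha)\lambda^{-\alpha}$ reduce the claim, after the (correctly tracked) prefactor cancellation, to the reverse Minkowski inequality $\sum_{x_j}\|W_{x_j}\|_{-\alpha} \leq \|\sum_{x_j} W_{x_j}\|_{-\alpha}$ at exponent $p = -\alpha \in (-\infty,0)\cup(0,1)$, which holds for arbitrary (not necessarily independent) strictly positive random variables on a common space --- and indeed the $W_{x_j} = e^{U_{j+1}(x_1,\ldots,x_j)}$ share the perturbations $\{\gamma_i\}_{i>j}$, so it matters that no independence is needed. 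What each approach buys: the paper's argument is self-contained given its Lemma~\ref{lem:WeibullFrechetMagic} and fits the ``family of tricks'' narrative, but requires a case split between Weibull and Fr\'echet and some $\pow$ bookkeeping; your argument handles both sign regimes of $\alpha$ uniformly, makes the source of slack transparent (the claim is an equality exactly when reverse Minkowski is, i.e.\ when the $W_{x_j}$ are almost surely proportional), and exposes the fact that the Weibull/Fr\'echet trick step is essentially a probabilistic proof of reverse Minkowski in disguise. The only point to be careful about is citing the reverse Minkowski inequality in the less familiar negative-exponent regime $p<0$, which does hold for strictly positive integrands but deserves an explicit reference or a short proof.
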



\begin{corollary}\label{cor:clamp}
Clamping never hurts $\ln Z$ estimation using any of the Fr\'echet or Weibull upper bounds $\calU(\alpha)$.
\end{corollary}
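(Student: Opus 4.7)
The plan is to reduce the statement directly to the Clamping Lemma with $j = 1$, which has essentially been set up to make this corollary immediate. Fix any variable to be clamped, say $x_1$ (by symmetry it doesn't matter which), and for every $x_1 \in \calX_1$ let $Z_{x_1} := \sum_{x_2,\ldots,x_n} e^{\phi(x_1,\ldots,x_n)}$, so that $Z = \sum_{x_1} Z_{x_1}$. Applying Proposition~\ref{prop:LowRankUpperBoundOnLnZ} to the $(n-1)$-variable graphical model that remains after clamping gives the per-value upper bound
\begin{equation*}
\ln Z_{x_1} \;\leq\; \calU^{(x_1)}(\alpha) \;:=\; (n-1)\tfrac{\ln\Gamma(1+\alpha)}{\alpha} + (n-1)c - \tfrac{1}{\alpha}\ln \IE_\gamma\bigl[e^{-\alpha U_2(x_1)}\bigr],
\end{equation*}
where the corresponding sum-unary MAP value for the clamped model is precisely the partial sum-unary MAP $U_2(x_1)$ from Section~\ref{sec:LowRank:Clamping}. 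Monotonicity of $\ln$ and $Z_{x_1} \leq e^{\calU^{(x_1)}(\alpha)}$ then give the clamped bound
\begin{equation*}
\ln Z \;\leq\; \calU^{\text{clamp}}(\alpha) \;:=\; \ln \sum_{x_1 \in \calX_1} e^{\calU^{(x_1)}(\alpha)}.
\end{equation*}

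The substantive step is to verify $\calU^{\text{clamp}}(\alpha) \leq \calU(\alpha)$. To do this I rewrite $e^{\calU^{(x_1)}(\alpha)}$ by pulling the additive constants into the exponent, obtaining
\begin{equation*}
e^{\calU^{(x_1)}(\alpha)} \;=\; \IE_\gamma\bigl[e^{-(n-1)\ln\Gamma(1+\alpha) - \alpha(n-1)c}\, e^{-\alpha U_2(x_1)}\bigr]^{-1/\alpha},
\end{equation*}
so that $\sum_{x_1} e^{\calU^{(x_1)}(\alpha)}$ is exactly the left-hand side of Lemma~\ref{lem:ClampingLemma} with $j = 1$. The lemma's right-hand side is likewise $\IE_\gamma[e^{-n\ln\Gamma(1+\alpha) - \alpha n c} e^{-\alpha U}]^{-1/\alpha} = e^{\calU(\alpha)}$. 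Applying the Clamping Lemma and taking logarithms therefore yields $\calU^{\text{clamp}}(\alpha) \leq \calU(\alpha)$, and stringing this together with the previous display proves the corollary.

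Honestly there is no serious obstacle: once Lemma~\ref{lem:ClampingLemma} is in hand, this corollary is essentially a relabelling. The one point worth a sentence of care is the sign of $-1/\alpha$ in the Fr\'echet regime $\alpha \in (-1,0)$: the quantities on both sides of the Clamping Lemma are nonetheless positive (the arguments of the $(-1/\alpha)$-powers are positive expectations), so taking $\ln$ is valid and preserves the inequality direction in both the Weibull and Fr\'echet cases. Finally, iterating the single-variable clamping argument shows that the analogous statement holds when any subset of variables is clamped, so the corollary is robust in the sense one would hope.
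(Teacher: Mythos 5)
Your proof is correct and follows essentially the same route as the paper's: the paper likewise applies $x \mapsto \ln(x)$ to both sides of the Clamping Lemma with $j=1$, identifying the right-hand side with $e^{\calU(\alpha)}$ and the left-hand side with the clamped estimate of $Z$. You merely make explicit two points the paper leaves implicit --- that the clamped quantity is itself an upper bound on $\ln Z$ via Proposition~\ref{prop:LowRankUpperBoundOnLnZ} applied to each $(n-1)$-variable sub-model, and that positivity of both sides makes taking $\ln$ valid for $\alpha \in (-1,0)$ as well --- which is careful but not a different argument.
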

\begin{proof} Applying the function $x \mapsto \ln(x)$ to both sides of the Clamping Lemma~\ref{lem:ClampingLemma} with $j = 1$, the right-hand side equals $\calU(\alpha)$, while the left-hand side is the estimate of $\ln Z$ after clamping variable $x_1$.
\end{proof}
This was shown previously in restricted settings \citep{hazan_sampling_2013,Zhao16}. 
Similar results showing that clamping improves partition function estimation have been obtained for the mean field and TRW approximations \cite{WelDom16}, and in certain settings for the Bethe approximation \cite{zbn} and \textsc{L-Field}~\citep{Zhao16}.

\subsection{Sequential Sampling}
\label{sec:lowRank:sampler}

\citet{hazan_sampling_2013} derived a sequential sampling procedure for the Gibbs distribution by exploiting the $\calU(0)$ Gumbel trick upper bound on $\ln Z$. In the same spirit, one can derive sequential sampling procedures from the Fr\'echet and Weibull tricks, leading to the following algorithm.

\begin{algorithm}[H]
\begin{algorithmic}[1]
\REQUIRE $\alpha \in (-1, 0) \cup (0, \infty)$, potential function $\phi$ on $\calX$
\ENSURE a sample $\bx$ from the Gibbs distribution $\propto e^{\phi(\bx)}$
\FOR{$j = 1$ \textbf{to} $n$}
\FOR{$x_j \in \calX_j$}
\STATE $p_j(x_j) \gets \frac{e^{-c}}{\Gamma(1 + \alpha)^{1/\alpha}} \frac{\IE_{\gamma}\left[ e^{-\alpha U_{j + 1}(x_1, \ldots, x_j)} \right]^{-1/\alpha}}{\IE_{\gamma}\left[ e^{-\alpha U_j(x_1, \ldots, x_{j-1})} \right]^{-1/\alpha}}$
\ENDFOR
\STATE $p_j(\text{reject)} \gets 1 - \sum_{x_j \in \calX_j} p_j(x_j)$
\STATE $x_j \gets $ sample according to $p_j$
\IF{$x_j == \text{reject}$}
\STATE \textsc{Restart} (goto 1)
\ENDIF
\ENDFOR
\end{algorithmic}
\caption{Sequential sampler for Gibbs distribution}
\label{alg:SequentialSampler}
\end{algorithm}

This algorithm is well-defined if $p_j(\text{reject}) \geq 0$ for all $j$, which can be shown by canceling terms in the Clamping Lemma~\ref{lem:ClampingLemma}. We discuss correctness in Appendix~\ref{app:sec:sum_unary:sampler}.
As for the Gumbel sequential sampler of~\citet{hazan_sampling_2013}, the expected number of restarts (and hence the running time) only depend on the quality of the upper bound $(\calU(\alpha) - \ln Z)$, and not on the ordering of variables.

\subsection{Lower Bounds on the Partition Function}

Similarly as in the Gumbel trick case~\citep{hazan_sampling_2013}, one can derive lower bounds on $\ln Z$ by perturbing an arbitrary subset $S$ of variables.

\begin{proposition}
	\label{prop:LowRankLowerBoundAlphaSubset}
	Let $\calX = \calX_1 \times \cdots \calX_n$ be a product space and $\phi$ a potential function on $\calX$. Let $\alpha \in (-1, 0) \cup (0, \infty)$. For any subset $S \subseteq \{ 1, \ldots, n \}$ of the variables $x_1, \ldots, x_n$ we have $\ln Z \geq$
	\begin{equation*}
	c + \frac{\ln \Gamma(1 + \alpha)}{\alpha} - \frac{1}{\alpha} \ln \IE\left[ e^{- \alpha \max_{\bx} \{ \phi(\bx) + \gamma_{S} (\bx_{S}) \} } \right],
	\end{equation*}
	where $\bx_{S} := \{ x_i : i \in S \}$ and $\gamma_{S} (\bx_{S}) \sim \Gumbel(-c)$ independently for each setting of $\bx_{S}$.
\end{proposition}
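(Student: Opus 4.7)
My plan is to reduce the subset-perturbation claim to the full-rank Fr\'echet/Weibull identity via a deterministic max-marginalization. Define an auxiliary potential $\phi_S$ on $\prod_{i \in S} \calX_i$ by $\phi_S(\bx_S) := \max_{\bx_{-S}} \phi(\bx_S, \bx_{-S})$, where $\bx_{-S}$ denotes the components outside $S$, and let $Z_S := \sum_{\bx_S} e^{\phi_S(\bx_S)}$. The first step is to observe the deterministic identity
\begin{equation*}
\max_{\bx \in \calX} \{ \phi(\bx) + \gamma_S(\bx_S) \} = \max_{\bx_S} \{ \phi_S(\bx_S) + \gamma_S(\bx_S) \},
\end{equation*}
which holds because $\gamma_S$ depends only on $\bx_S$, so the inner maximization over $\bx_{-S}$ can be pushed through the outer max.

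Next I would apply the full-rank Fr\'echet/Weibull identity obtained from Proposition~\ref{prop:NewTricksGumbelFormulation} with $g(x) = x^\alpha$ to the potential $\phi_S$ on the sample space $\prod_{i \in S} \calX_i$. For every $\alpha \in (-1,0) \cup (0,\infty)$ this yields the \emph{exact} equality
\begin{equation*}
\ln Z_S = c + \frac{\ln \Gamma(1+\alpha)}{\alpha} - \frac{1}{\alpha} \ln \IE\bigl[ e^{-\alpha \max_{\bx_S} \{\phi_S(\bx_S) + \gamma_S(\bx_S)\}} \bigr].
\end{equation*}
Substituting the deterministic identity from the previous step, the right-hand side becomes exactly the expression in the proposition statement.

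The final step is the elementary bound $Z \ge Z_S$. Indeed, for every $\bx_S$, $\sum_{\bx_{-S}} e^{\phi(\bx_S, \bx_{-S})} \ge \max_{\bx_{-S}} e^{\phi(\bx_S, \bx_{-S})} = e^{\phi_S(\bx_S)}$, and summing over $\bx_S$ gives $Z \ge Z_S$. Taking logarithms and chaining with the equality above delivers the claim.

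The only inequality in the whole argument is the final $Z \ge Z_S$; everything else is an equality, so there is no need to worry about the sign of $\alpha$ flipping an inequality. The main thing I would be careful about is verifying that the full-rank identity used in the middle step is a genuine equality (the content of Proposition~\ref{prop:NewTricksGumbelFormulation} with $f(Z_S) = Z_S^{-\alpha}\Gamma(1+\alpha)$), and that my rearrangement $\ln Z_S = c + \frac{\ln\Gamma(1+\alpha)}{\alpha} - \frac{1}{\alpha}\ln \IE[\cdot]$ follows correctly from that identity for both signs of $\alpha$.
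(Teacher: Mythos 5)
Your proof is correct and follows essentially the same route as the paper's: both rest on the elementary bound $Z \ge \sum_{\bx_S}\max_{\bx_{-S}} e^{\phi(\bx)}$ combined with the Fr\'echet/Weibull identity applied over the marginal space indexed by $\bx_S$. Your reorganization --- isolating an exact identity for $\ln Z_S$ and then invoking monotonicity of the logarithm --- is a clean way to avoid the case analysis on the sign of $\alpha$ that the paper performs when threading the inequality through $x \mapsto x^{-\alpha}$.
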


By averaging $n$ such lower bounds corresponding to singleton sets $S = \{ i \}$ together, we obtain a lower bound on $\ln Z$ that involves the \emph{average-unary perturbation MAP value}
\begin{equation*}
L := \max_{\bx \in \calX} \left\{ \phi(\bx) + \frac{1}{n} \sum_{i = 1}^n \gamma_i(x_i) \right\}.
\end{equation*}

\begin{corollary}
	\label{corl:LowRankLowerBound}
	For any $\alpha \in (-1, 0) \cup (0, \infty)$, we have the lower bound $\ln Z \geq \calL(\alpha)$, where
	\begin{equation*}
	\calL(\alpha) :=
	c + \frac{\ln \Gamma(1 + \alpha)}{\alpha} - \frac{1}{n \alpha} \ln \IE\left[ \exp\left( - n \alpha L \right) \right].
	\end{equation*}
\end{corollary}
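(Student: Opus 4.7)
The plan is to instantiate Proposition~\ref{prop:LowRankLowerBoundAlphaSubset} with each singleton $S = \{i\}$, average the $n$ resulting lower bounds, and then relate the averaged right-hand side to $\calL(\alpha)$ via a simple pointwise comparison between $nL$ and the sum of the single-variable perturbation maxima.

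\textbf{Step 1: Apply the subset bound to singletons.} Fix a collection $\{\gamma_i(x_i) : x_i \in \calX_i,\ 1 \le i \le n\} \stackrel{\text{i.i.d.}}{\sim} \Gumbel(-c)$. For each $i \in \{1,\dots,n\}$, Proposition~\ref{prop:LowRankLowerBoundAlphaSubset} with $S = \{i\}$ gives
\begin{equation*}
\ln Z \;\ge\; c + \frac{\ln \Gamma(1+\alpha)}{\alpha} - \frac{1}{\alpha}\ln \IE\bigl[e^{-\alpha A_i}\bigr],
\qquad A_i := \max_{\bx}\{\phi(\bx) + \gamma_i(x_i)\}.
\end{equation*}
Importantly, the $A_i$ use disjoint collections of the $\gamma_j$'s, so the $A_i$ are mutually independent. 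Averaging over $i = 1,\dots,n$ yields
\begin{equation*}
\ln Z \;\ge\; c + \frac{\ln \Gamma(1+\alpha)}{\alpha} - \frac{1}{n\alpha}\sum_{i=1}^n \ln \IE\bigl[e^{-\alpha A_i}\bigr].
\end{equation*}

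\textbf{Step 2: Pointwise inequality $\sum_i A_i \ge nL$.} For any $\bx^\star \in \calX$ and each $i$, $A_i \ge \phi(\bx^\star) + \gamma_i(x_i^\star)$. Summing over $i$ gives $\sum_i A_i \ge n\phi(\bx^\star) + \sum_i \gamma_i(x_i^\star)$. Taking the maximum over $\bx^\star$ on the right gives
\begin{equation*}
\sum_{i=1}^n A_i \;\ge\; \max_{\bx}\Big\{ n\phi(\bx) + \sum_{i=1}^n \gamma_i(x_i) \Big\} \;=\; nL,
\end{equation*}
almost surely.

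\textbf{Step 3: Combine via a product-of-expectations rewriting.} By independence of the $A_i$,
\begin{equation*}
\prod_{i=1}^n \IE\bigl[e^{-\alpha A_i}\bigr] \;=\; \IE\Bigl[e^{-\alpha \sum_i A_i}\Bigr].
\end{equation*}
The pointwise bound $\sum_i A_i \ge nL$ implies, after multiplying by $-\alpha$ and exponentiating, that $e^{-\alpha \sum_i A_i} \le e^{-n\alpha L}$ when $\alpha > 0$, and $\ge$ when $\alpha \in (-1,0)$. Thus
\begin{equation*}
\sum_{i=1}^n \ln \IE\bigl[e^{-\alpha A_i}\bigr] \;=\; \ln \IE\Bigl[e^{-\alpha \sum_i A_i}\Bigr]
\;\begin{cases}\le\\\ge\end{cases}\; \ln \IE\bigl[e^{-n\alpha L}\bigr]
\qquad\text{for }\alpha\gtrless 0.
\end{equation*}
Dividing by $-n\alpha$ (which flips the inequality precisely in the case $\alpha > 0$), both cases yield
\begin{equation*}
-\frac{1}{n\alpha}\sum_{i=1}^n \ln \IE\bigl[e^{-\alpha A_i}\bigr] \;\ge\; -\frac{1}{n\alpha}\ln \IE\bigl[e^{-n\alpha L}\bigr].
\end{equation*}
Substituting back into Step 1 gives $\ln Z \ge \calL(\alpha)$.

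\textbf{Main obstacle.} The only subtlety is bookkeeping the sign of $\alpha$: the factor $-\tfrac{1}{n\alpha}$ reverses orientation across $\alpha = 0$, and the pointwise inequality $\sum_i A_i \ge nL$ also reverses orientation after multiplication by $-\alpha$. These two reversals cancel, so the same conclusion holds for both $\alpha \in (-1,0)$ and $\alpha > 0$, but this needs to be tracked carefully. Everything else is essentially a one-line ``max of sums $\ge$ sum of maxes'' argument combined with independence of the singleton perturbations.
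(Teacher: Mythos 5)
Your proposal is correct and follows essentially the same route as the paper's proof: apply Proposition~\ref{prop:LowRankLowerBoundAlphaSubset} to each singleton $S=\{i\}$, average, use independence of the per-variable perturbations to turn the product of expectations into the expectation of a product, and then use the pointwise bound $\frac{1}{n}\sum_i A_i \geq L$ (which the paper phrases as Jensen's inequality for the convex $\max$) while tracking the sign of $\alpha$. Your explicit pointwise derivation of $\sum_i A_i \geq nL$ and your careful bookkeeping of the two sign reversals are exactly the content the paper compresses into its final sentence.
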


Again, $\calL(0) := \IE[L]$ can be defined by continuity, where $\IE[L] \leq \ln Z$ is the Gumbel trick lower bound by~\citet{hazan_sampling_2013}.

\section{Advantages of the Gumbel Trick}
\label{sec:LowRankErrors}

We have seen how the Gumbel trick can be embedded into a continuous family of tricks, consisting of Fr\'echet, Exponential, and Weibull tricks. We showed that the new tricks can provide more efficient estimators of the partition function in the full-rank perturbation setting (Section~\ref{sec:NewTricks}), and in the low-rank perturbation setting lead to sequential samplers and new bounds on $\ln Z$, which can be also more efficient, as we investigate in Section~\ref{sec:exp:LowRankBounds}. To balance the discussion of merits of different tricks, in this section we briefly highlight advantages of the Gumbel trick that stem from its simpler analytical form.

First, by consulting Table~\ref{tab:NewTricks} we see that the function $g(x) = -\ln x - c$ has the property that the variance of the resulting estimator (of $\ln Z$) does not depend on the value of $Z$; the function $g$ is a variance stabilizing transformation for the Exponential distribution.

Second, exploiting the fact that the logarithm function leads to additive perturbations,~\citet{maji_active_2014} showed that the entropy of $x^{*}$, the configuration with maximum potential after sum-unary perturbation in the sense of Definition~\ref{def:UnaryPerturbationMAP}, can be bounded as $H(x^{*}) \leq B(p) := \sum_{i = 1}^n \IE_{\gamma_i}\left[ \gamma_i(x^{*}_i) \right]$. We extend this result to show how the errors of bounding $\ln Z$, sampling, and entropy estimation are related:

\begin{proposition}
	\label{prop:UnarySumGapsLink}
	Writing $p$ for the Gibbs distribution and $B(p) := \IE_{\gamma_i}\left[ \gamma_i(x^{*}_i) \right]$ for the entropy bound, we have
	\begin{align*}
	&
	\underbrace{(\calU(0) - \ln Z)}_{\text{error in } \ln Z \text{ bound}}
	+
	\underbrace{\KL{x^{*}}{p}}_{\text{sampling error}}
	&=
	\underbrace{B(p) - H(x^{*})}_{\text{error in entropy estimation}}.
	\end{align*}
\end{proposition}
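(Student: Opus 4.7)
The plan is to observe that the quantities on both sides of the identity arise from the same random variable $U = \phi(x^*) + \sum_{i=1}^n \gamma_i(x_i^*)$ by taking expectations, and then to relate $\IE[\phi(x^*)]$ to $H(x^*)$ and $\KL{x^*}{p}$ via the standard cross-entropy decomposition.

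First I would note that by definition of the argmax, we may decompose the perturbed optimum as $U = \phi(x^*) + \sum_{i=1}^n \gamma_i(x_i^*)$ pointwise in the noise, so taking the expectation over $\gamma$ yields
\begin{equation*}
\calU(0) \;=\; \IE[U] \;=\; \IE\bigl[\phi(x^*)\bigr] + \sum_{i=1}^n \IE_\gamma\bigl[\gamma_i(x_i^*)\bigr] \;=\; \IE\bigl[\phi(x^*)\bigr] + B(p).
\end{equation*}
Next I would substitute $\phi(x) = \ln p(x) + \ln Z$ (which just unpacks $\phi(x) = \ln \tilde p(x) = \ln(p(x) Z)$) to obtain
\begin{equation*}
\calU(0) \;=\; \IE\bigl[\ln p(x^*)\bigr] + \ln Z + B(p).
\end{equation*}

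Letting $q$ denote the distribution of $x^*$, the cross-entropy identity gives
\begin{equation*}
\IE\bigl[\ln p(x^*)\bigr] \;=\; \sum_{\bx} q(\bx)\ln p(\bx) \;=\; -H(q,p) \;=\; -H(x^*) - \KL{x^*}{p}.
\end{equation*}
Plugging this into the previous display produces $\calU(0) - \ln Z = B(p) - H(x^*) - \KL{x^*}{p}$, which on rearrangement is exactly the claim.

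There is no serious obstacle here: the argument is essentially bookkeeping, with the single conceptual step being the identification of $\sum_\bx q(\bx) \ln p(\bx)$ as the negative cross-entropy and its split into entropy plus KL. The only minor subtlety is ensuring that $x^*$ is almost surely unique (so that $q$ is well defined and the decomposition $U = \phi(x^*) + \sum_i \gamma_i(x_i^*)$ is unambiguous), which follows since the Gumbel distribution is continuous and the sum-unary perturbations break ties with probability one.
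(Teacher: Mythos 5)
Your proof is correct and follows essentially the same route as the paper's: you write $\calU(0)=\IE[\phi(x^*)]+\sum_i\IE_\gamma[\gamma_i(x_i^*)]$ by evaluating the maximum at the argmax (the paper does this by conditioning on $x^*$), and your cross-entropy split of $\IE[\ln p(x^*)]$ is exactly the paper's expansion of $\KL{\qsum}{p}=-H(\qsum)-\sum_{\bx}\qsum(\bx)\phi(\bx)+\ln Z$. The only cosmetic difference is that you correctly include the sum over $i$ in $B(p)$, which the proposition statement itself omits by apparent typo.
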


Third, the additive character of the Gumbel perturbations can also be used to derive a new result relating the error of the lower bound $\calL(0)$ and of sampling $x^{**}$ as the configuration achieving the maximum average-unary perturbation value $L$, instead of sampling from the Gibbs distribution $p$:

\begin{proposition}
	\label{prop:UnaryAvgGapsLink}
	Writing $p$ for the Gibbs distribution,
	\begin{equation*}
	\underbrace{\ln Z - \calL(0)}_{\text{error in } \ln Z \text{ bound}}
	\geq
	\underbrace{\KL{x^{**}}{p}}_{\text{sampling error}}
	\geq 0.
	\end{equation*}
	\begin{remark}
		While we knew from~\citet{hazan_sampling_2013} that $\ln Z - \calL(0) \geq 0$, this is a stronger result showing that the size of the gap is an upper bound on the KL divergence between the approximate sampling distribution of $x^{**}$ and the Gibbs distribution $p$.
	\end{remark}
\end{proposition}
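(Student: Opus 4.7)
The plan is to reduce the stated inequality to an analogue of the Maji et al.\ entropy bound adapted to the \emph{average-unary} setting. Let $q$ denote the distribution of $x^{**}$. Using $p(\bx) = e^{\phi(\bx)}/Z$, a direct expansion gives $\KL{x^{**}}{p} = \ln Z - H(q) - \IE_q[\phi]$, and from the definition of $L$,
\[
\IE[L] = \IE_q[\phi] + \frac{1}{n}\sum_{i=1}^n \IE_\gamma[\gamma_i(x^{**}_i)].
\]
Substituting both into $\ln Z - \calL(0) \geq \KL{x^{**}}{p}$ and cancelling the $\ln Z$ and $\IE_q[\phi]$ terms, the whole statement reduces to the single entropy inequality
\[
\frac{1}{n}\sum_{i=1}^n \IE_\gamma[\gamma_i(x^{**}_i)] \leq H(q).
\]
I will actually prove the stronger coordinate-wise bound $\IE_\gamma[\gamma_i(x^{**}_i)] \leq H(q)$ for each $i$ and then average.

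The core of the argument will be the identity $\IE_\gamma[\gamma_i(x^{**}_i)\mid V_i] = H(r_i)$, where $V_i(x_i) := \max_{\bx_{-i}}\{\phi(\bx) + \tfrac{1}{n}\sum_{j\neq i}\gamma_j(x_j)\}$ (a function of $\gamma_{-i}$ alone) and $r_i(x_i) := \IP(x^{**}_i = x_i \mid V_i)$. To derive it I would note that $nL = \max_{x_i}\{nV_i(x_i) + \gamma_i(x_i)\}$ and $x^{**}_i = \argmax_{x_i}\{nV_i(x_i) + \gamma_i(x_i)\}$, i.e.\ conditionally on $\gamma_{-i}$ we have a fresh \emph{full-rank} Gumbel perturbation on the single-variable space $\calX_i$ with rescaled potentials $nV_i$. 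Applying equations (\ref{eq:GumbelZ})--(\ref{eq:GumbelSample}) gives $r_i(x_i) \propto e^{nV_i(x_i)}$ and $\IE[nL\mid V_i] = \ln\sum_{x_i} e^{nV_i(x_i)}$. Writing $\gamma_i(x^{**}_i) = nL - nV_i(x^{**}_i)$, taking conditional expectation, and simplifying with the log-sum-exp variational identity $\ln\sum_{x_i} e^{nV_i(x_i)} = n\sum_{x_i} r_i(x_i) V_i(x_i) + H(r_i)$ makes the log-sum-exp terms cancel and leaves exactly $H(r_i)$.

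Once the identity is in hand, I conclude by stacking two standard information-theoretic inequalities. Taking expectation over $V_i$ gives $\IE[\gamma_i(x^{**}_i)] = H(x^{**}_i \mid V_i) \leq H(x^{**}_i) = H(q_i)$ (conditioning reduces entropy, with $V_i$ a deterministic function of $\gamma_{-i}$), and then $H(q_i) \leq H(q)$ by the chain rule $H(q) = H(q_i) + H(q_{-i}\mid q_i) \geq H(q_i)$. Averaging over $i$ closes the first inequality, and the second inequality $\KL{x^{**}}{p} \geq 0$ is simply Gibbs' inequality.

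I expect the main obstacle to be cleanly establishing the identity $\IE_\gamma[\gamma_i(x^{**}_i)\mid V_i] = H(r_i)$: the key insight is that \emph{conditioning on} $\gamma_{-i}$ \emph{collapses the low-rank perturbation back into a full-rank one} on $\calX_i$, with the averaging factor $n$ re-absorbed into the rescaled potentials $nV_i$. After that move, everything else is routine algebraic manipulation of log-sum-exp plus invocation of well-known entropy inequalities.
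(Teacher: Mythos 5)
Your proposal is correct, and it reaches the paper's key intermediate result by a genuinely different route. Both arguments reduce the claim, via the same expansion of $\KL{\qavg}{p}$ and of $\calL(0)=\IE[L]$ conditioned on the argmax, to the entropy lower bound $H(\qavg)\geq\frac{1}{n}\sum_{i=1}^n\IE_{\gamma}[\gamma_i(x^{**}_i)]$ (Theorem~\ref{thm:UnaryAvgEntropyLowerBound} in the appendix). The paper proves that bound by convex duality: it writes $H(q)=\inf_{\varphi}\{\ln Z_{\varphi}-\IE_q[\varphi]\}$, replaces $\ln Z_{\varphi}$ by $\calL_{\varphi}(0)$, and shows the infimum of the resulting functional is attained at $\varphi=\phi$ --- which requires the auxiliary machinery of Appendix~\ref{app:sec:TechnicalResults} (convexity of $\calL_{\varphi}(0)$, the gradient identity $\partial\calL_{\varphi}(0)/\partial\varphi(\bx)=q_{\varphi}(\bx)$, and continuity of that gradient). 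You instead condition on $\gamma_{-i}$, observe that the average-unary perturbation collapses to a full-rank Gumbel perturbation on $\calX_i$ with rescaled potential $nV_i$, and read off $\IE[\gamma_i(x^{**}_i)\mid\gamma_{-i}]=H(r_i)$ from the exact Gumbel identities plus the log-sum-exp/entropy duality on a single coordinate; then ``conditioning reduces entropy'' and ``marginal entropy is at most joint entropy'' finish the job. Your route is more elementary and self-contained --- it sidesteps the delicate differentiability argument entirely --- and it yields the slightly stronger per-coordinate statement $\IE[\gamma_i(x^{**}_i)]\leq H(\qavg)$ for each $i$. What the paper's duality argument buys in exchange is reusable structure: the convexity and gradient results are of independent interest and parallel the sum-unary analysis of Proposition~\ref{prop:UnarySumGapsLink}. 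The only point worth stating explicitly in a write-up is that $H(x^{**}_i\mid V_i)\leq H(x^{**}_i)$ for a discrete variable conditioned on a continuous one follows from concavity of entropy via Jensen's inequality; with that noted, your argument is complete.
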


Proofs of the new results appear in Appendix~\ref{app:sec:sum_unary:errors} and~\ref{app:sec:sum_average:errors}.

Fourth, viewed as a function of the Gumbel perturbations $\gamma$, the random variable $U$ has a bounded gradient, allowing earlier measure concentration results \citep{orabona_measure_2014,hazan_high_2016}. Proving similar measure concentration results for the expectations $\IE[e^{-\alpha U}]$ appearing in $\calU(\alpha)$ for $\alpha \not= 0$ may be more challenging.

\section{Experiments}
\label{sec:Experiments}

We conducted experiments with the following aims:
\begin{enumerate}
	\item To show that the higher efficiency of the Exponential trick in the full-rank perturbation setting is useful in practice, we compared it to the Gumbel trick in A* sampling \citep{maddison_ast_2014} (Section~\ref{sec:exp:Astar}) and in the large-scale discrete sampling setting of~\citet{chen_scalable_2016} (Section~\ref{sec:exp:FullRank}).
	\item To show that non-zero values of $\alpha$ can lead to better estimators of $\ln Z$ in the low-rank perturbation setting as well, we compare the Fr\'echet and Weibull trick bounds $\calU(\alpha)$ to the Gumbel trick bound $\calU(0)$ on a common discrete graphical model with different coupling strengths; see Section~\ref{sec:exp:LowRankBounds}.
\end{enumerate}

\subsection{A* Sampling}
\label{sec:exp:Astar}

A* sampling~\citep{maddison_ast_2014} is a sampling algorithm for continuous distributions that perturbs the log-unnormalized density $\phi$ with a continuous generalization of the Gumbel trick, called the Gumbel process, and uses a variant of A* search to find the location of the maximum of the perturbed $\phi$. Returning the location yields an exact sample from the original distribution, as in the discrete Gumbel trick. Moreover, the corresponding maximum value also has the $\Gumbel(-c+\ln Z)$ distribution \citep{maddison_ast_2014}. Our analysis in Section~\ref{sec:ComparingTricks} tells us that the Exponential trick yields an estimator with lower MSE than the Gumbel trick; we briefly verified this on the Robust Bayesian Regression experiment of \citet{maddison_ast_2014}. We constructed estimators of $\ln Z$ from the Gumbel and Exponential tricks (debiased version, see Section~\ref{sec:ComparingTricks:MSE}), and assessed their variances by constructing each estimator $K = 1000$ times and looking at the sample variance. Figure~\ref{fig:exp:full_rank_astar} shows that the Exponential trick requires up to 40\% fewer samples to reach a given MSE.

\subsection{Scalable Partition Function Estimation}
\label{sec:exp:FullRank}

\citet{chen_scalable_2016} considered sampling from a discrete distribution of the form $p(x) \propto f_0(x) \prod_{s = 1}^S f_s(x)$ when the number of factors $S$ is large relative to the sample space size $|\calX|$. Computing i.i.d.~Gumbel perturbations $\gamma(x)$ for each $x \in \calX$ is then relatively cheap compared to evaluating all potentials $\phi(x) = f_0(x) + \sum_{s = 1}^S \ln f_s(x)$. \citet{chen_scalable_2016} observed that each (perturbed) potential can be estimated by subsampling the factors, and potentials that appear unlikely to yield the MAP value can be pruned off from the search early on. The authors formalized the problem as a Multi-armed bandit problem with a finite reward population and derived approximate algorithms for efficiently finding the maximum perturbed potential with a probabilistic guarantee.

While \citet{chen_scalable_2016} considered sampling, by modifying their procedure to return the value of the maximum perturbed potential rather than the argmax (cf equations (\ref{eq:ExpClockZ}) and (\ref{eq:ExpClockSample})), we can estimate the partition function instead. However, the approximate algorithm only guarantees to find the MAP configuration with a probability $1 - \delta$. Figure~\ref{fig:exp:full_rank_bandits} shows the results of running the Racing-Normal algorithm of \citet{chen_scalable_2016} on the synthetic dataset considered by the authors with the ``very hard" noise setting $\sigma = 0.1$. For low error bounds $\delta$ the Exponential trick remained close to optimal, but for a larger error bound the Weibull trick interpolation between the Gumbel and Exponential tricks proved useful to provide an estimator with lower MSE.

\begin{figure}[!t]
	\centering
	\begin{subfigure}{.5\linewidth}
		\includegraphics[width=\linewidth]{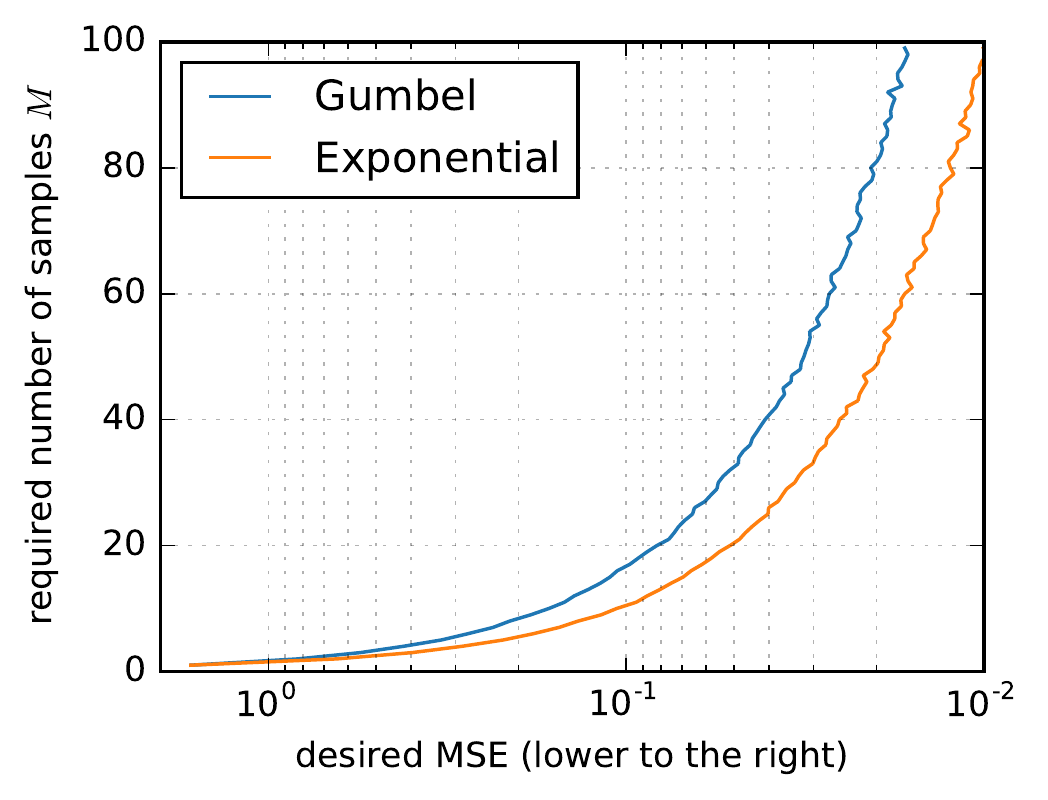}
		\caption{}
		\label{fig:exp:full_rank_astar}
	\end{subfigure}%
	\begin{subfigure}{.5\linewidth}
		\includegraphics[width=\linewidth]{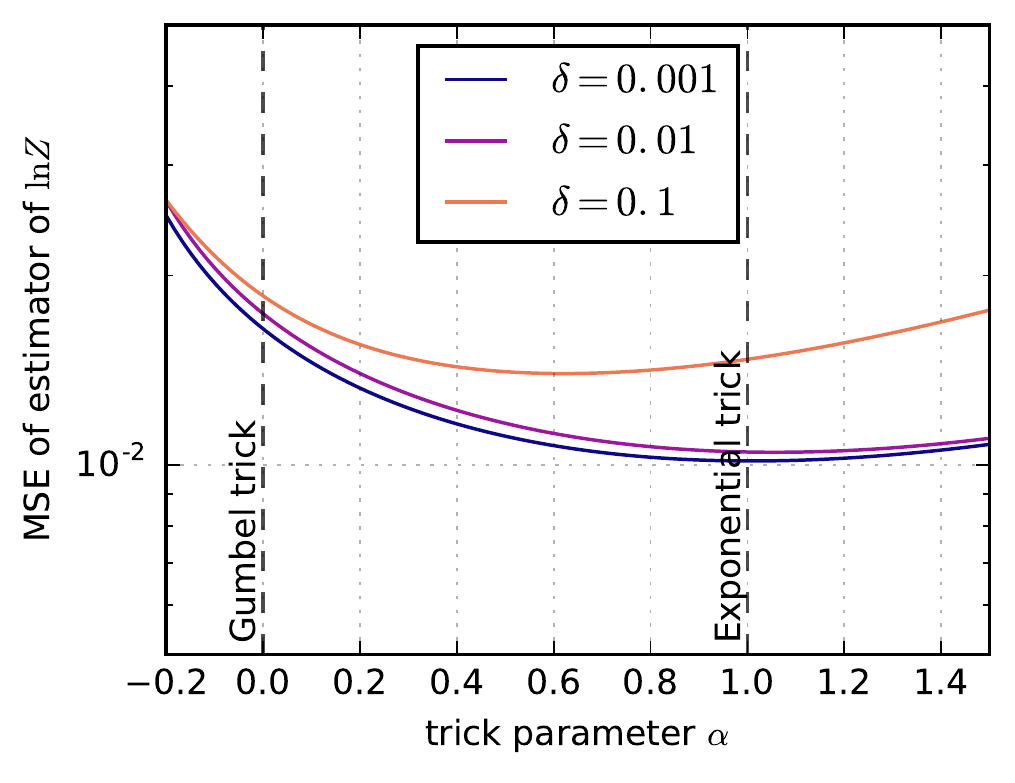}
		\caption{}
		\label{fig:exp:full_rank_bandits}
	\end{subfigure}
	\caption{\small{(a) Sample size $M$ required to reach a given MSE using Gumbel and Exponential trick estimators of $\ln Z$, using samples from \emph{$A^{*}$ sampling} (see Section~\ref{sec:exp:Astar}) on a Robust Bayesian Regression task. The Exponential trick is more efficient, requiring up to 40\% fewer samples to reach a given MSE.
	(b) MSE of $\ln Z$ estimators for different values of $\alpha$, using $M = 100$ samples from the approximate MAP algorithm discussed in Section~\ref{sec:exp:FullRank}, with different error bounds $\delta$. For small $\delta$, the Exponential trick is close to optimal, matching the analysis of Section~\ref{sec:ComparingTricks:MSE}. For larger $\delta$, the Weibull trick interpolation between the Gumbel and Exponential tricks can provide an estimator with lower MSE.
	\vspace{-0.75em}
	}}
	\label{fig:exp:full_rank_alpha}
\end{figure}

\subsection{Low-rank Perturbation Bounds on $\ln Z$}
\label{sec:exp:LowRankBounds}

\citet{hazan_partition_2012} evaluated tightness of the Gumbel trick upper bound $\calU(0) \geq \ln Z$ on $10 \times 10$ binary spin glass models. We show one can obtain more accurate estimates of $\ln Z$ on such models by choosing $\alpha \not= 0$. To account for the fact that in practice an expectation in $\calU(\alpha)$ is replaced with a sample average, we treat $\calU(\alpha)$ as an estimator of $\ln Z$ with asymptotic bias equal to the bound gap $(\calU(\alpha) - \ln Z)$, and estimate its MSE.

Figure~\ref{fig:exp:spin_glass_MSE} shows the MSEs of $\calU(\alpha)$ as estimators of $\ln Z$ on $10 \times 10$ ($n = 100$) binary pairwise grid models with unary potentials sampled uniformly from $[-1, 1]$ and pairwise potentials from $[0, C]$ (\emph{attractive} models) or from $[-C, C]$ (\emph{mixed} models), for varying coupling strengths $C$. We replaced the expectations in $U(\alpha)$'s with sample averages of size $M = 100$, using \mbox{libDAI}~\citep{Mooij_libDAI_10} to solve the MAP problems yielding these samples. We constructed each estimator $1000$ times to assess its variance.

\begin{figure}[!t]
	\centering
	\includegraphics[width=\linewidth]{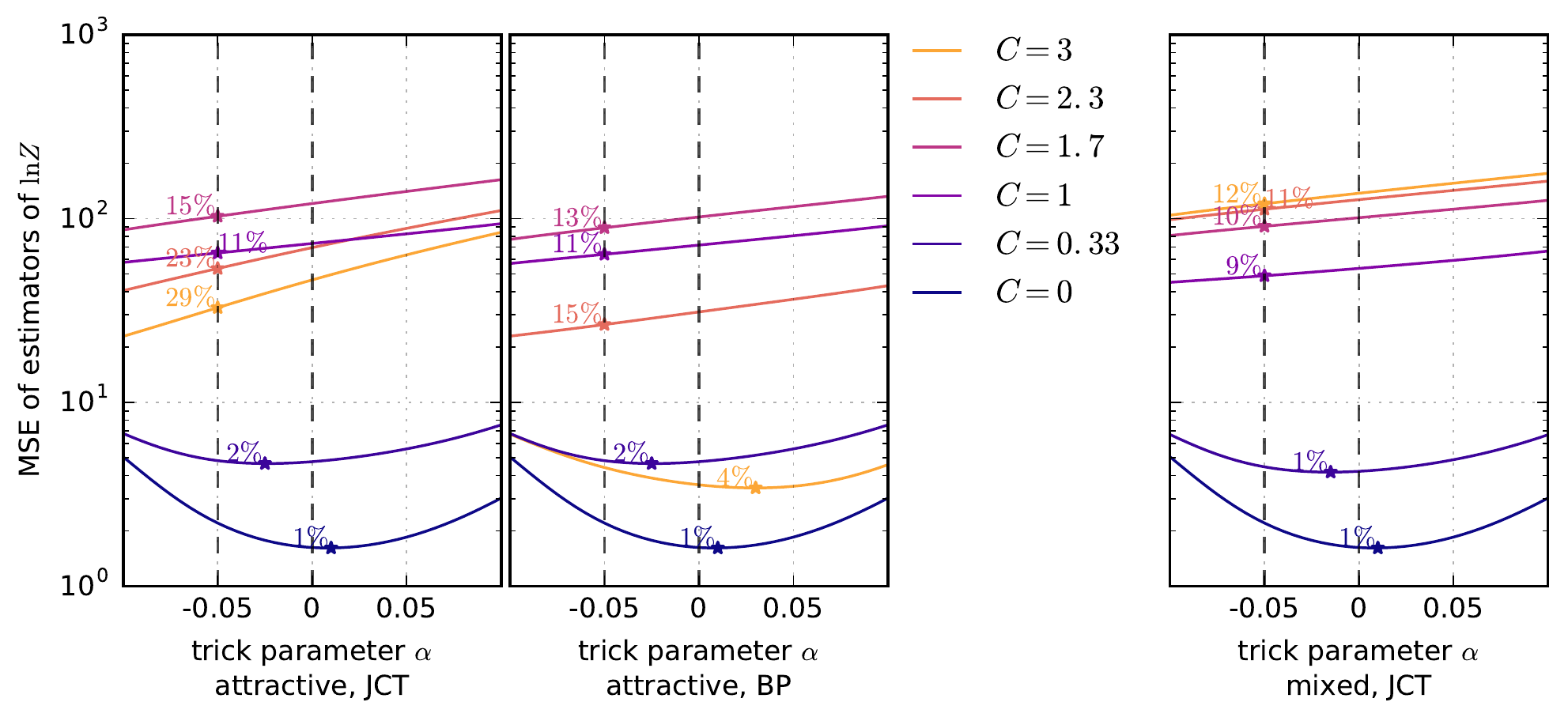}
	\caption{\small{MSEs of $\calU(\alpha)$ as estimators of $\ln Z$ on $10 \times 10$ attractive (left, middle) and mixed (right) spin glass model with different coupling strengths $C$ (see Section~\ref{sec:exp:LowRankBounds}). We also show the percentage of samples saved by using the best $\alpha$ in place of the Gumbel trick estimator $\calU(0)$, assuming the asymptotic regime. For this we only considered $\alpha > -1/(2\sqrt{n}) = -0.05$, where variance is provably finite, see Section~\ref{sec:LowRank:UpperBounds}. The MAP problems were solved using the exact junction tree algorithm (JCT, left and right), or approximate belief propagation (BP, middle). 
In all cases, when coupling is very low, $\alpha$ close to $0$ is optimal. This also holds for BP when coupling is high. 
In other regimes, upper bounds for the Fr\'echet trick, i.e.~$\alpha < 0$, provide more accurate estimators.
	\vspace{-0.5em}
	}}
	\label{fig:exp:spin_glass_MSE}
\end{figure}

\section{Discussion}
\label{sec:Discussion}

By casting partition function evaluation as a parameter estimation problem for the exponential distribution, we derived a family of methods of which the Gumbel trick is a special case. These methods can be equivalently seen as (1) perturbing models using different distributions, or as (2) averaging standard Gumbel perturbations in different spaces, allowing implementations with little additional cost.

We showed that in the full-rank perturbation setting, the new Exponential trick provides an estimator with lower MSE, or instead allows using up to 40\% fewer samples than the Gumbel trick estimator to reach the same MSE.

In the low-rank perturbation setting, we used our Fr\'echet, Exponential and Weibull tricks to derive new bounds on $\ln Z$ and sequential samplers for the Gibbs distribution, and showed that these can also behave better than the corresponding Gumbel trick results. However, the optimal trick to use (as specified by $\alpha$) depends on the model, sample size, and MAP solver used (if approximate). Since in practice the dominant computational cost is carried by solving repeated instances of the MAP problem, one can try and assess different values of $\alpha$ on the problem at hand. That said, we believe that investigating when different tricks yield better results is an interesting avenue for future work.

Finally, we balanced the discussion by pointing out that the Gumbel trick has a simpler analytical form which can be exploited to derive more interesting theoretical statements in the low-rank perturbation setting. Beyond existing results, we derived new connections between errors of different procedures using low-rank Gumbel perturbations.

\section*{Acknowledgements}
The authors thank Tamir Hazan for helpful discussions, and Mark Rowland, Maria Lomeli, and the anonymous reviewers for helpful comments. AW acknowledges support by the Alan Turing Institute under EPSRC grant EP/N510129/1, and by the Leverhulme Trust via the CFI.

\begin{small}
\bibliography{gumbel_ICML}
\bibliographystyle{icml2017}
\end{small}

\clearpage
\newpage
\onecolumn
\appendix

\section*{APPENDIX: Lost Relatives of the Gumbel Trick}

Here we provide proofs for the results stated in the main text, together with additional supporting lemmas required for these proofs.

\section{Comparison of Gumbel and Exponential tricks}
\label{app:sec:FullRank}

In Section~\ref{sec:ComparingTricks:Delta} we analyzed the asymptotic efficiency of different estimators of $Z$ by measuring their asymptotic variance. (As all our estimators in the full-rank perturbation setting are consistent, their bias is $0$ in the limit of infinite data, and so this asymptotic variance equals the asymptotic MSE.) In the non-asymptotic regime, where an estimator $\hat{Z}$ is constructed from a finite set of $M$ samples, we can analyze both the variance $\var(\hat{Z})$ and the bias $(\IE[\hat{Z}] - Z)$ of the estimator. While in most cases these cannot be obtained analytically and there we can resort to an empirical evaluation, for the estimators stemming from the Gumbel and Exponential tricks analytical treatment turns out to be possible using standard methods.

\subsection{Estimating $Z$}

\paragraph{Gumbel trick} The Gumbel trick yields an unbiased estimator for $\ln Z$, and we can turn it into a consistent estimator of $Z$ by exponentiating it:
\begin{equation*}
\hat{Z}
:= \exp\left( \frac{1}{M} \sum_{m = 1}^M X_m \right)
\hspace{2em}\text{where}\hspace{2em}
X_1, \ldots, X_M \stackrel{\text{iid}}{\sim} \Gumbel(- c + \ln Z)
.
\end{equation*}
Recalling that the moment generating function of a $\Gumbel(\mu)$ distribution is $G(t) = \Gamma(1 - t) e^{\mu t}$, we can obtain by using independence of the samples:
\begin{align*}
\IE[\hat{Z}] &=
\prod_{m = 1}^M \IE[e^{X_m / M}]
= \left( \Gamma(1 - 1/M) e^{(\ln Z - c) / M} \right)^M
= \Gamma(1 - 1/M)^M e^{- c} Z
,
\\
\IE[\hat{Z}^2] &=
\prod_{m = 1}^M \IE[e^{2 X_m / M}]
= \left( \Gamma(1 - 2/M) e^{2 (\ln Z - c) / M} \right)^M
= \Gamma(1 - 2/M)^M e^{- 2 c} Z^2
.
\end{align*}
Therefore the squared bias, variance and MSE of the estimator $\hat{Z}$ are, respectively:
\begin{align*}
\bias(\hat{Z})^2 &=
(\IE[\hat{Z}] - Z)^2
= Z^2 \left( \Gamma(1 - 1/M)^M e^{- c} - 1 \right)
,
\\
\var(\hat{Z})
&= \IE[\hat{Z}^2] - \IE[\hat{Z}]^2
= Z^2 \left( \Gamma(1 - 2/M)^M e^{- 2 c} - \Gamma(1 - 1/M)^{2M} e^{-2 c} \right)
,
\\
\MSE(\hat{Z}) &=
\bias(\hat{Z})^2 + \var(\hat{Z})
= Z^2 \left( \Gamma(1 - 2/M)^M e^{- 2 c} - 2 \Gamma(1 - 1/M)^M e^{- c}  + 1 \right)
.
\end{align*}
These formulas hold for $M > 2$ where the moment generating functions are defined. For $M = 1$ the estimator has infinite bias (and infinite variance), and for $M = 2$ it has infinite variance. Figure~\ref{fig:estimators_Ms} (left) shows the functional dependence of $\MSE(\hat{Z})$ on the number of samples $M \geq 3$, in units of $Z^2$.

\paragraph{Exponential trick} The Exponential trick yields an unbiased estimator of $1/Z$, and we can turn it into a consistent estimator of $Z$ by inverting it:
\begin{equation*}
\hat{Z}
:= \left( \frac{1}{M} \sum_{m = 1}^M X_m \right)^{-1}
\hspace{2em}\text{where}\hspace{2em}
X_1, \ldots, X_M \stackrel{\text{iid}}{\sim} \Exp(Z)
.
\end{equation*}
As $X_1, \ldots, X_M$ are independent and exponentially distributed with identical rates $Z$, their sum follows the Gamma distribution with shape $M$ and rate $Z$. Therefore the estimator $\hat{Z}$ can be written as $\hat{Z} = M Y$, where $Y \sim \InvGamma(M, Z)$. Recalling the mean and variance of the Inverse-Gamma distribution, we obtain:
\begin{align*}
\bias(\hat{Z})^2 &=
(\IE[\hat{Z}] - Z)^2
= Z^2 \left( \frac{M}{M - 1} - 1 \right)
= Z^2 \frac{1}{M - 1}
,
\\
\var(\hat{Z})
&= Z^2 M^2 \frac{1}{(M-1)^2 (M-2)}
,
\\
\MSE(\hat{Z}) &=
\bias(\hat{Z})^2 + \var(\hat{Z})
= Z^2 \frac{M - 2 + M^2}{(M-1)^2 (M-2)}
= Z^2 \frac{M + 2}{(M-1) (M-2)}
.
\end{align*}
Again these formulas hold for $M > 2$ where the relevant expectations are defined: for $M = 1$ the estimator has infinite bias, and for $M \in \{1, 2\}$ it has infinite variance. Figure~\ref{fig:estimators_Ms} (left) shows the functional dependence of $\MSE(\hat{Z})$ on the number of samples $M \geq 3$, in units of $Z^2$. By inspecting the curves we observe that the Gumbel trick estimator requires roughly 45\% more samples to yield the same MSE as the Exponential trick estimator.

\subsection{Estimating $\ln Z$}

A similar analysis can be performed for estimating $\ln Z$ rather than $Z$. In that case the Gumbel trick estimator of $\ln Z$ is unbiased and has variance (and thus MSE) equal to $\frac{1}{M} \frac{\pi^2}{6}$. On the other hand, the Exponential trick estimator is
\begin{equation*}
\widehat{\ln Z}
= -\ln\left( \frac{1}{M} \sum_{m = 1}^M X_m \right)
\hspace{2em}\text{where}\hspace{2em}
X_1, \ldots, X_M \stackrel{\text{iid}}{\sim} \Exp(Z)
.
\end{equation*}
Again $\sum_{m = 1}^M X_m \sim \operatorname{Gamma}(M, Z)$ and by reference to properties of the Gamma distribution,
\begin{align*}
\bias(\widehat{\ln Z})^2 &=
(\IE[\hat{Z}] - Z)^2
= \left( \ln M - (\psi(M) - \ln Z) - \ln Z \right)^2
= \left(\ln M - \psi(M) \right)^2
,
\\
\var(\widehat{\ln Z})
&= \psi_1(M)
,
\\
\MSE(\widehat{\ln Z}) &=
\bias(\widehat{\ln Z})^2 + \var(\widehat{\ln Z})
= \left(\ln M - \psi(M) \right)^2 + \psi_1(M)
,
\end{align*}
where $\psi(\cdot)$ is the digamma function and $\psi_1(\cdot)$ is the trigamma function. Note that the estimator can be debiased by subtracting its bias $(\ln M - \psi(M))$. Figure~\ref{fig:estimators_Ms} (right) compares the MSE of the Gumbel and Exponential trick estimators of $\ln Z$. We observe that the Gumbel trick estimator performs better only for $M = 1$, and even in that case the Exponential trick estimator is better when debiased.

\section{Sum-unary perturbations}
\label{app:sec:UnarySum}

Recall that \emph{sum-unary perturbations} refer to the setting where each variable's unary potentials are perturbed with Gumbel noise, and the perturbed potential of a configuration sums the perturbations from all variables (see Definition~\ref{def:UnaryPerturbationMAP} in the main text). Using sum-unary perturbations we can derive a family $\calU(\alpha)$ of upper bounds on the log partition function (Proposition~\ref{prop:LowRankUpperBoundOnLnZ}) and construct sequential samplers for the Gibbs distribution (Algorithm~\ref{alg:SequentialSampler}). Here we provide proofs for the related results stated in Sections~\ref{sec:LowRank:UpperBounds} and~\ref{sec:LowRank:Clamping}.

\paragraph{Notation} We will write $\pow_{\beta} x$ for $x^{\beta}$, where $x, \beta \in \IR$, when we find this increases clarity of our exposition.

\begin{lemma}[Weibull and Fr\'echet tricks]
\label{lem:WeibullFrechetMagic}
For any finite set $\mathcal{Y}$ and any function $h$, we have
\begin{align*}
	\pow_{-\alpha} \; \sum_{y \in \mathcal{Y}} \; \pow_{-1/\alpha} h(y)
	&=
	\IE_W\left[ \min_{y} \left\{ h(y) \frac{W(y)}{\Gamma(1 + \alpha)} \right\} \right]
	\hspace{1em}\text{where }
	\{ W(y) \}_{y \in \mathcal{Y}} \stackrel{\text{i.i.d.}}{\sim} \Weibull(1, \alpha^{-1})
	\hspace{1.4em}\text{for }
	\alpha \in (0, \infty)
,
\\
	\pow_{-\alpha} \; \sum_{y \in \mathcal{Y}} \; \pow_{-1/\alpha} h(y)
	&=
	\IE_F\left[ \max_{y} \left\{ h(y) \frac{F(y)}{\Gamma(1 + \alpha)} \right\} \right]
	\hspace{1em}\text{where }
	\{ F(y) \}_{y \in \mathcal{Y}} \stackrel{\text{i.i.d.}}{\sim} \Frechet(1, -\alpha^{-1})
	\hspace{1em}\text{for }
	\alpha \in (-1, 0)
.
\end{align*}
\begin{proof}
This follows from setting up competing exponential clocks with rates $\lambda_y = h(y)^{-1/\alpha}$ and then applying the function $g(x) = x^{\alpha}$ as in Example~\ref{ex:WeibullTricks} for the case of the Weibull trick. The case of the Fr\'echet trick is similar, except that $g$ is strictly decreasing for $\alpha \in (-1, 0)$, hence the maximization in place of the minimization.
\end{proof}
\end{lemma}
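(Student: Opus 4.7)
The plan is to reduce both identities to a direct application of the competing exponential clocks construction from Section~\ref{sec:GumbelTrick}, combined with the Weibull/Fr\'echet transformation of exponential random variables introduced in Example~\ref{ex:WeibullTricks}. Concretely, I would introduce independent clocks $\{T_y\}_{y\in\mathcal{Y}}$ with carefully chosen rates $\lambda_y := h(y)^{-1/\alpha}$, so that by property (1) of competing exponential clocks the minimum $\min_y T_y$ is $\Exp(Z)$-distributed with $Z := \sum_{y\in\mathcal{Y}} h(y)^{-1/\alpha}$. The reason for this choice is that the target LHS $\pow_{-\alpha}\sum_{y} \pow_{-1/\alpha} h(y)$ is exactly $Z^{-\alpha}$, so the problem reduces to computing $\IE[(\min_y T_y)^{\alpha}]$ and matching factors.

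For the Weibull case $\alpha > 0$, I would push the strictly increasing function $g(x) = x^{\alpha}$ through the minimum, giving $(\min_y T_y)^{\alpha} = \min_y T_y^{\alpha}$. Writing $T_y = E_y/\lambda_y$ with $E_y \sim \Exp(1)$, a short computation yields $T_y^{\alpha} = \lambda_y^{-\alpha} E_y^{\alpha} = h(y)\, E_y^{\alpha}$, and by Example~\ref{ex:WeibullTricks} the variable $W(y) := E_y^{\alpha}$ has the $\Weibull(1,\alpha^{-1})$ distribution. Taking expectations of both sides of $(\min_y T_y)^{\alpha} = \min_y h(y) W(y)$ and using $\IE[(\min_y T_y)^{\alpha}] = Z^{-\alpha} \IE[E^{\alpha}] = Z^{-\alpha}\Gamma(1+\alpha)$ for $E \sim \Exp(1)$ yields the claimed identity after dividing through by $\Gamma(1+\alpha)$.

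For the Fr\'echet case $\alpha \in (-1,0)$, the only change is that $g(x) = x^{\alpha}$ is strictly \emph{decreasing}, hence $(\min_y T_y)^{\alpha} = \max_y T_y^{\alpha}$, which produces the maximum appearing on the RHS. The same rescaling gives $T_y^{\alpha} = h(y) F(y)$ with $F(y) := E_y^{\alpha}$, which is $\Frechet(1,-\alpha^{-1})$-distributed, and the moment identity $\IE[E^{\alpha}] = \Gamma(1+\alpha)$ still holds provided $1+\alpha > 0$, which is precisely the range $\alpha > -1$ stipulated in the lemma.

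The main obstacle I anticipate is purely bookkeeping: checking that the Weibull and Fr\'echet parametrizations match the paper's conventions, so that $E^{\alpha}$ for $E \sim \Exp(1)$ lands in $\Weibull(1,\alpha^{-1})$ or $\Frechet(1,-\alpha^{-1})$ with the correct scale/shape, and verifying that the factor $\lambda_y^{-\alpha}$ simplifies to exactly $h(y)$ (so that the ``$h(y)$'' appears in the right place inside the min/max on the RHS). There is also a minor sanity check that $\IE[E^{\alpha}] = \Gamma(1+\alpha)$ is finite on the stated range, which excludes $\alpha = 0$ and $\alpha \leq -1$ --- precisely the cases the statement carves out. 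Beyond these checks, the argument is essentially a restatement of the Weibull/Fr\'echet trick with the partition function $Z$ replaced by the abstract sum $\sum_y h(y)^{-1/\alpha}$.
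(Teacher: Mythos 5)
Your proposal is correct and follows essentially the same route as the paper's own (sketched) proof: competing exponential clocks with rates $\lambda_y = h(y)^{-1/\alpha}$, followed by applying $g(x) = x^{\alpha}$ and using that $g$ is increasing for $\alpha > 0$ (giving the min) and decreasing for $\alpha \in (-1,0)$ (giving the max). The bookkeeping you flag --- $T_y^{\alpha} = h(y)E_y^{\alpha}$ with $E_y^{\alpha}$ landing in $\Weibull(1,\alpha^{-1})$ resp.\ $\Frechet(1,-\alpha^{-1})$, and $\IE[E^{\alpha}] = \Gamma(1+\alpha)$ finite exactly for $\alpha > -1$ --- all checks out against the paper's parametrization in Example~\ref{ex:WeibullTricks}.
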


\subsection{Upper bounds on the partition function}
\label{app:sec:sum_unary:upper}

\begin{hprop}[\ref{prop:LowRankUpperBoundOnLnZ}.]
	For any $\alpha \in (-1, 0) \cup (0, \infty)$, the upper bound $\ln Z \leq \calU(\alpha)$ holds with
	\begin{equation*}
	\calU(\alpha) := n \frac{\ln \Gamma(1+\alpha)}{\alpha} + nc - \frac{1}{\alpha} \ln \IE_{\gamma}\left[ e^{-\alpha U} \right].
	\end{equation*}%
	\begin{proof} We show the result for $\alpha \in (0, \infty)$ using the Weibull trick; the case of $\alpha \in (-1, 0)$ can be proved similarly using the Fr\'echet trick. The idea is to prove by induction on $n$ that $Z^{-\alpha} \geq e^{-\alpha \calU(\alpha)}$, so that the claimed result follows by applying the monotonically decreasing function $x \mapsto - \ln(x) / \alpha$.

	The base case $n = 1$ is the Clamping Lemma~\ref{lem:ClampingLemma} below with $j = n = 1$. Now assume the claim for $n-1 \geq 1$ and for $x_n \in \calX_n$ define
\begin{equation*}
\calU_{n-1}(\alpha, x_1)
:= (n-1) \frac{\ln \Gamma(1+\alpha)}{\alpha} + (n-1)c - \frac{1}{\alpha} \ln \IE_{\gamma}\left[ \exp\left( -\alpha \max_{x_2, \ldots, x_n} \left\{ \phi(x) + \sum_{i = 2}^{n} \gamma_i(x_i) \right\} \right) \right].
\end{equation*}
With this definition, the Clamping Lemma with $j = 1$ states that $\sum_{x_1} \pow_{-1/\alpha} e^{-\alpha \calU_{n-1}(\alpha, x_1)} \leq \pow_{-1/\alpha} e^{-\alpha \calU(\alpha)}$, so:
\begin{align*}
Z^{-\alpha} &\geq
  \pow_{-\alpha} \sum_{x_1 \in \calX_1} \pow_{-1/\alpha} e^{-\alpha \calU_{n-1}(\alpha, x_1)}
  & \text{[inductive hypothesis]}
\\ &\geq
  \pow_{-\alpha} \pow_{-1/\alpha} e^{-\alpha \calU(\alpha)}
  & \text{[Clamping Lemma]}
\\ &=
  e^{-\alpha \calU(\alpha)},
\end{align*}
as required to complete the inductive step.
\end{proof}
\end{hprop}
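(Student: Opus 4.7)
My plan is to induct on the number of variables $n$, with all sign-of-$\alpha$ bookkeeping collected into a single manipulation. Observing that $x\mapsto-\tfrac{1}{\alpha}\ln x$ is monotonically decreasing for every $\alpha\in(-1,0)\cup(0,\infty)$, the claim $\ln Z\le\calU(\alpha)$ is equivalent to
\begin{equation*}
Z^{-\alpha} \;\geq\; \Gamma(1+\alpha)^{-n}\,e^{-n\alpha c}\,\IE_\gamma\left[e^{-\alpha U}\right] \quad\text{for } \alpha>0,
\end{equation*}
with the inequality reversed for $\alpha\in(-1,0)$. The base case $n=1$ is immediate, since a sum-unary perturbation on one variable is a full-rank Gumbel perturbation, giving $U\sim\Gumbel(-c+\ln Z)$; the Gumbel moment generating function evaluated at $-\alpha$ then yields $\IE[e^{-\alpha U}]=\Gamma(1+\alpha)\,e^{\alpha c}\,Z^{-\alpha}$, which in fact attains equality with $\calU(\alpha)=\ln Z$.

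For the inductive step, I would clamp the first variable. For each $x_1\in\calX_1$, the clamped model on the remaining $n-1$ variables has potential $\phi(x_1,\cdot,\ldots,\cdot)$, partition function $Z_{x_1}:=\sum_{x_2,\ldots,x_n} e^{\phi(x_1,\ldots,x_n)}$, and sum-unary perturbation MAP value precisely $U_2(x_1)$. Applying the inductive hypothesis to each clamped model and then $\pow_{-1/\alpha}$ (decreasing for $\alpha>0$, increasing for $\alpha\in(-1,0)$) yields the pointwise bound $Z_{x_1}\le\Gamma(1+\alpha)^{(n-1)/\alpha}e^{(n-1)c}\,\IE_\gamma[e^{-\alpha U_2(x_1)}]^{-1/\alpha}$. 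Summing over $x_1$ via $Z=\sum_{x_1} Z_{x_1}$ and then invoking the Clamping Lemma (Lemma~\ref{lem:ClampingLemma}) with $j=1$ to absorb the remaining sum into a single factor of $\Gamma(1+\alpha)^{1/\alpha}e^c\,\IE_\gamma[e^{-\alpha U}]^{-1/\alpha}$, the multiplicative constants combine to the required $\Gamma(1+\alpha)^{n/\alpha}e^{nc}$ and the induction closes.

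The main obstacle is the Clamping Lemma itself; once it is available the inductive engine runs mechanically. To prove it I would use the recursion $U_j=\max_{x_j}\{\gamma_j(x_j)+U_{j+1}(x_1,\ldots,x_j)\}$ to rewrite $e^{-\alpha U_j}=e^{\alpha c}\min_{x_j}\{W_j(x_j)\,e^{-\alpha U_{j+1}}\}$ with $W_j(x_j):=e^{-\alpha(\gamma_j(x_j)+c)}$, which is i.i.d.~$\Weibull(1,1/\alpha)$ for $\alpha>0$; I would then condition on $\gamma_{j+1},\ldots,\gamma_n$ and apply Lemma~\ref{lem:WeibullFrechetMagic} to the inner $\min_{x_j}$. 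The final step exploits concavity of the power-mean functional $\mathbf{a}\mapsto(\sum_{x_j} a_{x_j}^{-1/\alpha})^{-\alpha}$ (for $\alpha>0$) via Jensen's inequality, which lets one pull the outer expectation past the sum-and-power-transform. For $\alpha\in(-1,0)$ the Weibull trick is replaced by the Fr\'echet trick, the $\min$ becomes a $\max$, and the power mean switches convexity, but all three reversals occur in lockstep so the overall inequality direction is preserved.
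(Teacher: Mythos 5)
Your proposal is correct and follows essentially the same route as the paper: induction on $n$ in which the Clamping Lemma with $j=1$ supplies the inductive step, after reducing $\ln Z \leq \calU(\alpha)$ to a power-transformed inequality whose direction is tracked by the sign of $\alpha$. The only (harmless) deviations are that you verify the base case $n=1$ directly from the Gumbel moment generating function (where the paper invokes the Clamping Lemma with $j=n=1$) and that you sketch the Clamping Lemma's own proof, which matches the paper's appendix argument via Lemma~\ref{lem:WeibullFrechetMagic} and Jensen's inequality.
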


\begin{hprop}[\ref{prop:LowRank:Ulimit0}.]
The limit of $\calU(\alpha)$ as $\alpha \to 0$ exists and equals $\calU(0) := \IE[U]$, i.e. the Gumbel trick upper bound.
\begin{proof} Recall that $\calU(\alpha) = n \frac{\ln \Gamma(1+\alpha)}{\alpha} + nc - \frac{1}{\alpha} \ln \IE\left[ e^{-\alpha U} \right]$. The first term tends to $n \psi(1) = -cn$ as $\alpha \to 0$ by L'H\^{o}pital's rule, where $\psi$ is the digamma function. The second term is constant in $\alpha$. In the last term, $\IE\left[ e^{-\alpha U} \right]$ is the moment generating function of $U$ evaluated at $-\alpha$, and as such its derivative at $\alpha = 0$ exists and equals the negative of the mean of $U$. Hence by L'H\^{o}pital's rule,
\begin{align*}
- \lim_{\alpha \to 0} \frac{1}{\alpha} \ln \IE\left[ e^{-\alpha U} \right]
= - \lim_{\alpha \to 0} \frac{-\IE[U]}{\IE\left[ e^{-\alpha U} \right]}
= \IE[U]
= \calU(0).
\end{align*}
The claimed result then follows by the Algebra of Limits, as the contributions of the first two terms cancel.
\end{proof}
\end{hprop}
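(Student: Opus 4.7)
The plan is to decompose $\calU(\alpha) = n\alpha^{-1}\ln\Gamma(1+\alpha) + nc - \alpha^{-1}\ln\IE_\gamma[e^{-\alpha U}]$ into its three summands and take the limit of each as $\alpha \to 0$ separately. Both non-constant summands are $0/0$ indeterminate forms at $\alpha = 0$, so L'H\^{o}pital's rule is the natural tool. The constant $nc$ will cancel against the limit of the first summand, leaving only the contribution of the third one.

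For the $\Gamma$-summand I would use $\ln\Gamma(1) = 0$ together with $\frac{d}{d\alpha}\ln\Gamma(1+\alpha) = \psi(1+\alpha)$, the digamma function, satisfying $\psi(1) = -c$. L'H\^{o}pital then gives $\lim_{\alpha \to 0} n\ln\Gamma(1+\alpha)/\alpha = -nc$, cancelling the $+nc$ from the middle summand exactly.

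For the expectation summand I would view $M_U(s) := \IE_\gamma[e^{sU}]$ as the moment generating function of the sum-unary perturbation MAP value $U$ and rewrite the term as $-\alpha^{-1}\ln M_U(-\alpha)$. Since $M_U(0) = 1$ (so the numerator vanishes at $\alpha=0$) and a differentiable MGF satisfies $M_U'(0) = \IE[U]$, one application of L'H\^{o}pital yields $\lim_{\alpha\to 0} -\alpha^{-1}\ln M_U(-\alpha) = -M_U'(0)\cdot(-1)/M_U(0) = \IE[U]$. Summing the three limits gives $-nc + nc + \IE[U] = \IE[U] =: \calU(0)$, matching the Gumbel-trick upper bound.

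The one step that needs care, rather than depth, is justifying that $M_U$ is differentiable in a neighborhood of $0$ with $M_U'(0) = \IE[U]$; this is what both L'H\^{o}pital's rule and the identification of $M_U'(0)$ with $\IE[U]$ require. Since the product space $\calX_1 \times \cdots \times \calX_n$ is finite and $U \le \max_\bx \phi(\bx) + \sum_{i=1}^n \max_{x_i} \gamma_i(x_i)$, one can dominate $|e^{-\alpha U}|$ by an integrable envelope on some interval $\alpha \in (-\tfrac{1}{2}, \tfrac{1}{2})$ using the known finite MGF of a maximum of finitely many Gumbel variables (this is essentially the same integrability fact that the paper invokes via \citet{hazan_high_2016} to control $\var(e^{-\alpha U})$), and then differentiate under the expectation. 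This is the only argument beyond routine calculus.
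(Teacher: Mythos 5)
Your proposal is correct and follows essentially the same route as the paper's proof: the same three-term decomposition, L'H\^{o}pital's rule applied to the $\ln\Gamma(1+\alpha)/\alpha$ term via $\psi(1)=-c$, and the identification of $\IE_\gamma[e^{-\alpha U}]$ as the moment generating function of $U$ to handle the last term. Your added justification that the MGF of $U$ is differentiable near $0$ (via a dominating envelope on a finite product space) is a point the paper asserts without proof, so it is a welcome but inessential refinement rather than a different argument.
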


\begin{hprop}[\ref{prop:UpperBoundDerivativeAt0}.]
	The function $\calU(\alpha)$ is differentiable at $\alpha = 0$ and the derivative equals
	\begin{equation*}
	\frac{\d}{\d \alpha} \calU(\alpha) \bigg\rvert_{\alpha = 0}
	= n \frac{\pi^2}{12} - \frac{\var(U)}{2}
	.
	\end{equation*}
	\begin{proof}
	First we show that $\calU(\alpha)$ is differentiable on $(-1, 0) \cup (0,
	\infty)$, and that the limit of the derivative as $\alpha \to 0$ exists and equals $n \pi^2 / 12 - \var(U) / 2$.

	The first term of $\calU(\alpha)$ is $n \frac{\ln \Gamma(1+\alpha)}{\alpha}$, which is differentiable for $\alpha \in (-1, 0) \cup (0, \infty)$ by the Quotient Rule, and its derivative equals
	\begin{equation*}
	\frac{\d}{\d \alpha} n \frac{\ln \Gamma(1+\alpha)}{\alpha}
	= n \frac{\psi(1 + \alpha) \alpha - \ln \Gamma(1+\alpha)}{\alpha^2},
	\end{equation*}
	where $\psi$ is the digamma function (logarithmic derivative of the gamma function). Applying L'H\^{o}pital's rule we note that
	\begin{align*}
	\lim_{\alpha \to 0} \frac{\d}{\d \alpha} n \frac{\ln \Gamma(1+\alpha)}{\alpha} &=
	n \lim_{\alpha \to 0} \frac{\psi(1 + \alpha) + \alpha \psi^{(1)}(1 + \alpha) - \psi(1 + \alpha)}{2 \alpha}
	=
	n \frac{\psi^{(1)}(1)}{2}
	= n \frac{\zeta(2)}{2}
	= n \frac{\pi^2}{12},
	\end{align*}
	where $\psi^{(1)}$ is the trigamma function (derivative of the digamma function), whose value at $1$ is known to be $\zeta(2) = \pi^2 / 6$, the Riemann zeta function evaluated at $2$.

	The second term of $\calU(\alpha)$ is constant in $\alpha$. The last term can be written as $K(-\alpha) / (-\alpha)$, where $K$ is the cumulant generating function (logarithm of the moment generating function) of the random variable $U$. The cumulant generating function is differentiable, and by the Quotient rule
	\begin{equation*}
	\frac{\d}{\d \alpha} \frac{K(-\alpha)}{-\alpha}
	= - \frac{\alpha K'(-\alpha) - K(-\alpha)}{\alpha^2}.
	\end{equation*}
	Applying L'H\^{o}pital's rule we note that
	\begin{align*}
	\lim_{\alpha \to 0} \frac{\d}{\d \alpha} \frac{K(-\alpha)}{-\alpha} &=
	\lim_{\alpha \to 0} \frac{K'(-\alpha) + \alpha K''(-\alpha) - K'(-\alpha)}{2 \alpha}
	=
	\frac{K''(0)}{2}
	= \frac{\var(U)}{2},
	\end{align*}
	where we have used that the second derivative of the cumulant generating function is the variance.

	As $\calU(\alpha)$ is continuous at $0$ by construction, the above implies that it has left and right derivatives at $0$. As the values of these derivatives coincide, the function is differentiable at $0$ and the derivative has the stated value.
	\end{proof}
\end{hprop}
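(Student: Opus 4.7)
The plan is to Taylor-expand each of the two non-constant pieces of $\calU(\alpha)$ about $\alpha = 0$ to second order; summing the expansions immediately yields an identity of the form $\calU(\alpha) = \IE[U] + B\alpha + O(\alpha^2)$, which simultaneously reproves Proposition~\ref{prop:LowRank:Ulimit0} and identifies $\calU'(0) = B = \frac{1}{2}(n\pi^2/6 - \var(U))$. This bypasses the need to compute left and right derivatives separately on $(-1,0)$ and $(0,\infty)$ and then glue them, because a quadratic-with-remainder expansion directly certifies both continuity and differentiability at $\alpha=0$.

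For the first piece I would use the Taylor series $\ln\Gamma(1+\alpha) = \psi(1)\alpha + \tfrac{1}{2}\psi^{(1)}(1)\alpha^2 + O(\alpha^3)$ with the classical values $\psi(1) = -c$ and $\psi^{(1)}(1) = \zeta(2) = \pi^2/6$; dividing by $\alpha$ and multiplying by $n$ gives $n\ln\Gamma(1+\alpha)/\alpha = -nc + (n\pi^2/12)\alpha + O(\alpha^2)$. For the third piece, writing $K(t) := \ln \IE[e^{tU}]$ for the cumulant generating function of $U$, the standard identity $K(t) = \IE[U]\,t + \tfrac{1}{2}\var(U)\,t^2 + O(t^3)$ gives $-K(-\alpha)/\alpha = \IE[U] - (\var(U)/2)\alpha + O(\alpha^2)$. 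Adding these expansions and the constant $nc$, the $-nc$ and $+nc$ cancel, producing
\begin{equation*}
\calU(\alpha) = \IE[U] + \frac{\alpha}{2}\left(n\frac{\pi^2}{6} - \var(U)\right) + O(\alpha^2),
\end{equation*}
from which the proposition is read off.

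The only real subtlety is justifying that $K$ is $C^2$ (in fact analytic) on a neighbourhood of $0$, so that the quadratic Taylor expansion with remainder is valid and the coefficients are the stated moments/cumulants. This follows from the fact that $U$ is sandwiched between two affine combinations of finitely many i.i.d.~$\Gumbel(-c)$ variables (since $|\calX|$ is finite), and Gumbel random variables have moment generating functions finite on $(-\infty,1)$; hence the MGF of $U$ is finite on an open neighbourhood of $0$, so all cumulants exist and $K$ is analytic there. This is essentially the regularity already invoked in the paragraph following Proposition~\ref{prop:LowRankUpperBoundOnLnZ} to ensure $\var(e^{-\alpha U})$ is finite for small $\alpha$. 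Once this regularity is in place, the derivation is a short bookkeeping exercise; I do not expect any other technical obstacle.
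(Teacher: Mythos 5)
Your proposal is correct, and it takes a genuinely different (and arguably cleaner) route than the paper. The paper differentiates each term of $\calU(\alpha)$ on the punctured interval $(-1,0)\cup(0,\infty)$ via the quotient rule, evaluates the limit of the derivative as $\alpha\to 0$ by L'H\^{o}pital's rule, and then concludes differentiability at $0$ from continuity together with the existence and agreement of the one-sided limits of the derivative (an implicit mean-value-theorem step). You instead expand $\ln\Gamma(1+\alpha)$ and the cumulant generating function $K(-\alpha)$ to second order about $0$ and read off $\calU(\alpha)=\IE[U]+\bigl(n\pi^2/12-\var(U)/2\bigr)\alpha+O(\alpha^2)$, which certifies the limit of the difference quotient directly --- no L'H\^{o}pital, no gluing of one-sided derivatives, and Proposition~\ref{prop:LowRank:Ulimit0} falls out as the zeroth-order term. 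The two arguments use the same ingredients ($\psi(1)=-c$, $\psi^{(1)}(1)=\pi^2/6$, $K''(0)=\var(U)$) and the same underlying regularity; your approach makes that regularity requirement explicit and localizes it in one place (analyticity of $K$ near $0$, which you justify correctly by bounding $e^{tU}$ above, for each sign of $t$, by the exponential of a finite combination of Gumbel variables whose MGFs are finite near $0$). What the paper's route buys in exchange is an explicit formula for $\calU'(\alpha)$ at all $\alpha\neq 0$, not just at $0$; what yours buys is a shorter derivation and a quantitative second-order statement. No gap.
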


Recall that for a variable index $j \in \{1, \ldots, n\}$ we also defined \emph{partial sum-unary perturbations}
\begin{equation*}
	U_j(x_1, \ldots, x_{j - 1}) := \max_{x_j, \ldots, x_n} \left\{ \phi(\bx) + \sum_{i = j}^n \gamma_i(x_i) \right\},
\end{equation*}
which fix the variables $x_1, \ldots, x_{j-1}$ and perturb the remaining ones.

\begin{hlem}[\ref{lem:ClampingLemma} {\normalfont (Clamping Lemma)}.]
For any $j \in \{ 1, \ldots, n \}$ and any fixed partial variable assignment $(x_1, \ldots, x_{j - 1}) \in \calX_1 \times \cdots \times \calX_{j - 1}$, the following inequality holds with any trick parameter $\alpha \in (-1, 0) \cup (0, \infty)$:
\begin{align*}
	&
	\sum_{x_j \in \calX_j} \IE_{\gamma}\left[ e^{-(n-j) \ln \Gamma(1 + \alpha) - \alpha (n-j) c)} e^{-\alpha U_{j + 1}(x_1, \ldots, x_j)} \right]^{-1/\alpha}
	\\ & \leq
	\IE_{\gamma}\left[ e^{-(n-(j-1)) \ln \Gamma(1 + \alpha) - \alpha (n-(j-1)) c)} e^{-\alpha U_j(x_1, \ldots, x_{j-1})} \right]^{-1/\alpha}
.
\end{align*}
\begin{proof}
	For $\alpha > 0$, from the Weibull trick (Lemma~\ref{lem:WeibullFrechetMagic}), using independence of the perturbations and Jensen's inequality,
	\begin{align*}
	& \pow_{-\alpha} \sum_{x_j \in \calX_j} \pow_{-1/\alpha} \IE_W\left[ \min_{x_{j+1}, \ldots, x_n} \tilde{p}(\bx)^{-\alpha} \prod_{i = j+1}^n \frac{W(x_i)}{\Gamma(1 + \alpha)} \right]
	\\ &=
	\IE_W\left[ \min_{x_j \in \calX_j} \left\{ \IE_W\left[ \min_{x_{j+1}, \ldots, x_n} \tilde{p}(\bx)^{-\alpha} \prod_{i = j+1}^n \frac{W(x_i)}{\Gamma(1 + \alpha)} \right] \frac{W(x_j)}{\Gamma(1 + \alpha)} \right\} \right]
	\\ & \leq
	\IE_W\left[ \min_{x_j, \ldots, x_n} \tilde{p}(\bx)^{-\alpha} \prod_{i = j}^n \frac{W(x_i)}{\Gamma(1 + \alpha)} \right]
	\end{align*}
	Representing the Weibull random variables in terms of Gumbel random variables using the transformation $W = e^{-(\gamma+c)\alpha}$, where $\gamma \sim \Gumbel(-c)$, and manipulating the obtained expressions yields the claimed result.
\end{proof}
\end{hlem}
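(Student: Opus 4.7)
} The plan is to treat the two signs of $\alpha$ separately using the two halves of Lemma~\ref{lem:WeibullFrechetMagic}. I will write the argument for $\alpha > 0$ using the Weibull representation; the case $\alpha \in (-1, 0)$ is completely analogous, with the Fr\'echet identity in place of the Weibull one and $\max$ replacing $\min$. The first step is a bookkeeping one: use the change of variables $W_i(x_i) := e^{-\alpha(\gamma_i(x_i) + c)}$, which converts independent $\Gumbel(-c)$ noise into independent $\Weibull(1, \alpha^{-1})$ noise, to rewrite both sides of the target inequality in a common ``Weibull'' form. Concretely, $e^{-\alpha U_j} = e^{\alpha(n-j+1)c}\min_{x_j,\ldots,x_n}\{\tilde p(\bx)^{-\alpha}\prod_{i=j}^n W_i(x_i)\}$, so after absorbing the $e^{-\alpha(n-j+1)c}$ and $\Gamma(1+\alpha)^{-(n-j+1)}$ factors appearing on the right-hand side of the lemma, the RHS becomes $B^{-1/\alpha}$ with
\begin{equation*}
B \;=\; \IE_W\!\left[ \min_{x_j,\ldots,x_n} \tilde p(\bx)^{-\alpha} \prod_{i=j}^n \frac{W_i(x_i)}{\Gamma(1+\alpha)} \right],
\end{equation*}
and each summand on the LHS, after the same absorption, becomes $A_j^{-1/\alpha}$ with $A_j$ the analogous expectation over $\min_{x_{j+1},\ldots,x_n}$ and the product running from $i = j+1$ to $n$. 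So the claim reduces to $\sum_{x_j\in\calX_j} A_j^{-1/\alpha} \le B^{-1/\alpha}$.

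The second step is to invoke the Weibull trick identity (Lemma~\ref{lem:WeibullFrechetMagic}) a second time, this time applied to the outer sum over $x_j$ with $h(x_j) := A_j$ and $\mathcal{Y} := \calX_j$:
\begin{equation*}
\Bigl(\textstyle\sum_{x_j\in\calX_j} A_j^{-1/\alpha}\Bigr)^{-\alpha}
\;=\; \IE_{W_j}\!\left[ \min_{x_j\in\calX_j}\frac{A_j\, W_j(x_j)}{\Gamma(1+\alpha)} \right].
\end{equation*}
Substituting the definition of $A_j$ as an expectation over $W_{j+1},\ldots,W_n$ and pulling the outer $W_j$ inside (which is fine since $A_j$ does not depend on $W_j$), the right-hand side becomes
\begin{equation*}
\IE_{W_j}\!\left[ \min_{x_j} \IE_{W_{j+1:n}}\!\left[\frac{W_j(x_j)}{\Gamma(1+\alpha)} \min_{x_{j+1:n}} \tilde p(\bx)^{-\alpha}\prod_{i=j+1}^n\frac{W_i(x_i)}{\Gamma(1+\alpha)}\right]\right].
\end{equation*}

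The third and crucial step is the inequality $\min_{x} \IE_W[Y(x,W)] \ge \IE_W[\min_{x} Y(x,W)]$, which holds for any family of integrable random variables indexed by a finite $x$ (apply $\min_x Y(x,\omega) \le Y(x,\omega)$ pointwise, take $\IE$, then $\min_x$). Applying this to the bracketed quantity above, pulling the $W_{j+1:n}$ expectation outside, and merging the two minima and the two expectations, yields exactly $B$. Thus $(\sum_{x_j} A_j^{-1/\alpha})^{-\alpha} \ge B$; raising to the power $-1/\alpha$, which is negative for $\alpha>0$ and therefore reverses the inequality, gives $\sum_{x_j} A_j^{-1/\alpha} \le B^{-1/\alpha}$, as desired.

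The main obstacle I expect is not the analytic content — the whole proof boils down to Jensen's inequality for the concave function $\min$ — but the careful bookkeeping of the $\Gamma(1+\alpha)$ and $e^{\alpha c}$ constants when passing between the Gumbel and Weibull parametrisations, and getting the direction of the inequality correct when raising to a negative power. For $\alpha\in(-1,0)$, the same argument goes through with the Fr\'echet identity: $W_i$ is replaced by $F_i \sim \Frechet(1,-\alpha^{-1})$, $\min$ by $\max$, and the two sign flips (in the Weibull trick and in raising to $-1/\alpha$ with $-1/\alpha>0$) compose consistently to preserve the final inequality.
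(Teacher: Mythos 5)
Your proof is correct and follows essentially the same route as the paper's: rewrite both sides in the Weibull/Fr\'echet parametrisation, apply Lemma~\ref{lem:WeibullFrechetMagic} a second time to the outer sum over $x_j$, and conclude via $\min_{x}\IE[\cdot] \ge \IE[\min_{x}\cdot]$ followed by the reversal coming from the negative exponent $-1/\alpha$. If anything, your bookkeeping is the more careful one: the intermediate inequality in the paper's display is written as $\le$ where Jensen in fact gives $\ge$ (which is the direction needed once one raises to the power $-1/\alpha < 0$ for $\alpha>0$), exactly as you derive.
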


\newpage

\subsection{Sequential samplers for the Gibbs distribution}
\label{app:sec:sum_unary:sampler}

The family of sequential samplers for the Gibbs distribution presented in the main text as Algorithm~\ref{alg:SequentialSampler} has the same overall structure as the sequential sampler derived by~\citet{hazan_sampling_2013} from the Gumbel trick upper bound $\calU(0)$, and hence correctness can be argued similarly. Conditioned on accepting the sample, the probability that $\bx = (x_1, \ldots, x_n)$ is returned is
\begin{align*}
\prod_{i = 1}^n p_i(x_i) &=
  \prod_{i = 1}^n \frac{e^{-c}}{\Gamma(1 + \alpha)^{1/\alpha}} \frac{\IE_{\gamma}\left[ e^{-\alpha U_{i + 1}(x_1, \ldots, x_i)} \right]^{-1/\alpha}}{\IE_{\gamma}\left[ e^{-\alpha U_i(x_1, \ldots, x_{i-1})} \right]^{-1/\alpha}}
= \frac{e^{-nc}}{\Gamma(1 + \alpha)^{n/\alpha}}  \frac{\left( e^{-\alpha \phi(x_1, \ldots, x_n) }\right)^{-1/\alpha}}{\IE[e^{-\alpha U}]^{-1/\alpha}}
\propto p(x),
\end{align*}
as required to show that the produced samples follow the Gibbs distribution $p$. Note, however, that in practice one introduces an approximation by replacing expectations with sample averages.

\subsection{Relationship between errors of sum-unary Gumbel perturbations}
\label{app:sec:sum_unary:errors}

We write $\bx^{*}$ for the (random) MAP configuration after sum-unary perturbation of the potential function, i.e.,
\begin{equation*}
\bx^{*} := \argmax_{\bx \in \calX} \left\{ \phi(\bx) + \sum_{i = 1}^n \gamma_i(x_i) \right\}.
\end{equation*}
Let $\qsum(\bx) := \IP[\bx = \bx^{*}]$ be the probability mass function of $\bx^{*}$.

The following results links together the errors acquired when using summed unary perturbations to upper bound the log partition function $\ln Z \leq \calU(0)$ using the Gumbel trick upper bound by~\citet{hazan_partition_2012}, to approximately sample from the Gibbs distribution by using $\qsum$ instead, and to upper bound the entropy of the approximate distribution $\qsum$ using the bound due to~\citet{maji_active_2014}.

\begin{hprop}[\ref{prop:UnarySumGapsLink}.]
	Writing $p$ for the Gibbs distribution, we have
	\begin{align*}
	\underbrace{(\calU(0) - \ln Z)}_{\text{error in } \ln Z \text{ bound}}
	+
	\underbrace{\KL{\qsum}{p}}_{\text{sampling error}}
	=
	\underbrace{\IE_{\gamma_i}\left[ \gamma_i(\bx^{*}_i) \right] - H(\qsum)}_{\text{error in entropy estimation}}.
	\end{align*}
\begin{proof}
	By conditioning on the maximizing configuration $\bx^{*}$, we can rewrite the Gumbel trick upper bound $\calU(0)$ as follows:
	\begin{align*}
	\calU(0) &=
	\IE_{\gamma}\left[ \max_{\bx \in \calX}\left\{ \theta(\bx) + \sum_{i = 1}^n \gamma_i(x_i) \right\} \right]
	\\ &=
	\sum_{\bx \in \calX} \qsum(\bx) \left( \theta(\bx) + \IE_{\gamma}\left[ \sum_{i = 1}^n \gamma_i(x_i) \mid \bx = \bx^{*} \right] \right)
	\\ &=
	\sum_{\bx \in \calX} \qsum(\bx) \theta(\bx) + \sum_{i = 1}^n \IE_{\gamma_i}\left[ \gamma_i(x^{*}_i) \right]
	.
	\end{align*}
	At the same time, the KL divergence between $\qsum$ and the Gibbs distribution $p$ generally expands as
	\begin{align*}
	\KL{\qsum}{p}
	&=
	- H(\qsum) - \sum_{\bx \in \calX} \qsum(\bx) \ln \frac{\exp \left( \theta(\bx) \right)}{\sum_{\tilde{\bx} \in \calX} \exp \left( \theta(\tilde{\bx}) \right)}
	\\&=
 - H(\qsum) -  \sum_{\bx \in \calX} \qsum(\bx) \theta(\bx) + \ln Z.
	\end{align*}
	Adding the two equations together and rearranging yields the claimed result.
\end{proof}
\end{hprop}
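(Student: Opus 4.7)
The plan is to prove the identity by expanding both the upper bound $\calU(0)$ and the KL divergence $\KL{\qsum}{p}$ in terms of expectations over $\qsum$, so that a symbolic cancellation leaves exactly $B(p) - H(\qsum) - \ln Z$. At the end I rearrange to obtain the stated equation.

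First, I would rewrite $\calU(0) = \IE_\gamma[U]$ by observing that the maximum is attained at $\bx^{*}$, so
\begin{equation*}
U = \phi(\bx^{*}) + \sum_{i=1}^n \gamma_i(x^{*}_i).
\end{equation*}
Taking expectations, linearity gives $\calU(0) = \IE[\phi(\bx^{*})] + \sum_{i=1}^n \IE_\gamma[\gamma_i(x^{*}_i)]$. Since $\bx^{*} \sim \qsum$ by definition, the first term is $\sum_{\bx} \qsum(\bx) \phi(\bx)$, and the second term is exactly $B(p)$.

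Second, I would expand $\KL{\qsum}{p}$ in the usual way, using $p(\bx) = e^{\phi(\bx)}/Z$:
\begin{equation*}
\KL{\qsum}{p} = -H(\qsum) - \sum_{\bx} \qsum(\bx) \phi(\bx) + \ln Z.
\end{equation*}

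Third, I add the two expressions. The $\sum_{\bx} \qsum(\bx) \phi(\bx)$ contributions cancel, yielding
\begin{equation*}
\calU(0) + \KL{\qsum}{p} = B(p) - H(\qsum) + \ln Z,
\end{equation*}
which rearranges to the claimed identity.

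The main obstacle is really just a bookkeeping subtlety in the very first step: although each $\gamma_i(x^{*}_i)$ involves the random index $x^{*}_i$ which itself depends on all the $\gamma_j$'s, linearity of expectation still lets us split the sum over $i$ cleanly, and the tower property (conditioning on $\bx^{*}$) justifies writing $\IE[\phi(\bx^{*})]$ as an expectation under $\qsum$. Once this decomposition of $\IE[U]$ is in place, the rest is algebra.
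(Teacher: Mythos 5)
Your proof is correct and follows essentially the same route as the paper's: both decompose $\calU(0)=\IE[U]$ by conditioning on the maximizing configuration $\bx^{*}\sim\qsum$ to obtain $\sum_{\bx}\qsum(\bx)\phi(\bx)+\sum_{i=1}^n\IE_{\gamma}[\gamma_i(x^{*}_i)]$, expand $\KL{\qsum}{p}$ in the standard way, and add so that the $\sum_{\bx}\qsum(\bx)\phi(\bx)$ terms cancel. No gaps; your remark about justifying the split via the tower property is exactly the conditioning step the paper performs.
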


\newpage

\section{Averaged unary perturbations}
\label{app:sec:UnaryAvg}

\subsection{Lower bounds on the partition function}
\label{app:sec:sum_unary:lower}

In the main text we stated the following two lower bounds on the log partition function $\ln Z$.

\begin{hprop}[\ref{prop:LowRankLowerBoundAlphaSubset}.]
	Let $\alpha \in (-1, 0) \cup (0, \infty)$. For any subset $S \subseteq \{ 1, \ldots, n \}$ of the variables $x_1, \ldots, x_n$ we have $\ln Z \geq$
	\begin{equation*}
	c + \frac{\ln \Gamma(1 + \alpha)}{\alpha} - \frac{1}{\alpha} \ln \IE\left[ e^{- \alpha \max_{\bx} \{ \phi(\bx) + \gamma_{S} (\bx_{S}) \} } \right],
	\end{equation*}
	where $\bx_{S} := \{ x_i : i \in S \}$ and $\gamma_{S} (\bx_{S}) \sim \Gumbel(-c)$ independently for each setting of $\bx_{S}$.
	\begin{proof}
		Let $\bar{S} := \{ 1, \ldots, n \} \setminus S$. First we handle the case $\alpha > 0$. We have trivially that
		\begin{align*}
		\operatorname{pow}_{-\alpha} Z
		&=
		\operatorname{pow}_{-\alpha} \sum_{\bx_{S}}  \sum_{\bx_{\bar{S}}} e^{\phi(\bx_{S}, \bx_{\bar{S}})}
		\leq
		\operatorname{pow}_{-\alpha} \sum_{\bx_{S}}  \max_{\bx_{\bar{S}}} e^{\phi(\bx_{S}, \bx_{\bar{S}})}.
		\end{align*}
		The Weibull trick tells us that $\operatorname{pow}_{-\alpha} \sum_y \operatorname{pow}_{-1/\alpha} h(y) = \IE_W[ \min_y \frac{h(y)}{\Gamma(1 + \alpha)} W(y) ]$ where $\{W(y)\}_{y} \stackrel{iid}{\sim} \Weibull(1, \alpha^{-1})$. Applying this to the summation over $\bx_{S}$ on the right-hand side of the above inequality, we obtain
		\begin{equation*}
		\operatorname{pow}_{-\alpha} Z
		\leq \IE_{W}\left[ \min_{\bx_{S}} \frac{\operatorname{pow}_{-\alpha} \max_{\bx_{\bar{S}}} e^{\phi(\bx_{S}, \bx_{\bar{S}})}}{\Gamma(1 + \alpha)} W(\bx_{S}) \right].
		\end{equation*}
		Expressing the Weibull random variable $W(\bx_{S})$ as $e^{- \alpha (\gamma_{S}(\bx_S) + c)}$ with $\gamma_{S}(\bx_S) \sim \Gumbel(-c)$, the right-hand side can be simplified as follows:
		\begin{align*}
		\operatorname{pow}_{-\alpha} Z &\leq
		\frac{1}{\Gamma(1 + \alpha)} \IE_{\gamma}\left[ \operatorname{pow}_{-\alpha} \max_{\bx_{S}} \max_{\bx_{\bar{S}}} e^{\phi(\bx_{S}, \bx_{\bar{S}})} e^{\gamma_{S}(\bx_S) + c} \right]
		\\&=
		\frac{e^{-\alpha c}}{\Gamma(1 + \alpha)} \IE_{\gamma}\left[ \exp\left( -\alpha \max_{\bx} \left\{ \phi(\bx) + \gamma_{S}(\bx_S) \right\} \right) \right].
		\end{align*}
		Taking the logarithm and dividing by $- \alpha < 0$ yields the claimed result for positive $\alpha$. For $\alpha \in (-1, 0)$ we proceed similarly, obtaining that
		\begin{align*}
		\operatorname{pow}_{-\alpha} Z
		&\geq
		\operatorname{pow}_{-\alpha} \sum_{\bx_{S}}  \max_{\bx_{\bar{S}}} e^{\phi(\bx_{S}, \bx_{\bar{S}})}
		\\ &=
		\IE_{F}\left[ \min_{\bx_{S}} \frac{\operatorname{pow}_{-\alpha} \max_{\bx_{\bar{S}}} e^{\phi(\bx_{S}, \bx_{\bar{S}})}}{\Gamma(1 + \alpha)} F(\bx_{S}) \right],
		\end{align*}
		where $F(\bx(S)) \sim \Frechet(1, -\alpha^{-1})$. Representing these random variables as $e^{- \alpha (\gamma_{S}(\bx_S) + c)}$ with $\gamma_{S}(\bx_S) \sim \Gumbel(-c)$, simplifying as in the previous case and finally dividing the inequality by $-\alpha > 0$ yields the claimed result for $\alpha \in (-1, 0)$.
	\end{proof}
	\label{app:prop:LowRankLowerBoundAlphaSubset}
\end{hprop}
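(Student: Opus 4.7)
My plan is to split the configurations into those indexed by $S$ and those indexed by $\bar S := \{1,\ldots,n\}\setminus S$, then apply the Weibull or Fr\'echet trick (Lemma~\ref{lem:WeibullFrechetMagic}) to the summation over $\bx_S$ after first replacing the inner sum over $\bx_{\bar S}$ with its maximum. The starting point is the trivial bound
\begin{equation*}
Z \;=\; \sum_{\bx_S}\sum_{\bx_{\bar S}} e^{\phi(\bx_S,\bx_{\bar S})} \;\geq\; \sum_{\bx_S} \max_{\bx_{\bar S}} e^{\phi(\bx_S,\bx_{\bar S})},
\end{equation*}
obtained because each inner sum is at least its largest term. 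From here everything is a manipulation that preserves the direction of the inequality after dividing by $-\alpha$ at the end.

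Next I would apply $\pow_{-\alpha}$ to both sides, splitting into two cases. For $\alpha>0$ the map $x\mapsto x^{-\alpha}$ on $(0,\infty)$ is decreasing, so the inequality reverses, and I would invoke the Weibull identity $\pow_{-\alpha}\sum_y \pow_{-1/\alpha} h(y) = \IE_W[\min_y \{h(y)W(y)/\Gamma(1+\alpha)\}]$ with $h(\bx_S) := \pow_{-\alpha}\max_{\bx_{\bar S}} e^{\phi(\bx_S,\bx_{\bar S})}$, yielding
\begin{equation*}
\pow_{-\alpha} Z \;\leq\; \IE_W\!\left[\min_{\bx_S} \frac{\pow_{-\alpha}\max_{\bx_{\bar S}} e^{\phi(\bx_S,\bx_{\bar S})}}{\Gamma(1+\alpha)}\, W(\bx_S)\right].
\end{equation*}
For $\alpha\in(-1,0)$, $\pow_{-\alpha}$ is increasing, so the inequality is preserved, and I would apply the analogous Fr\'echet identity, obtaining the same expression with $\min$ replaced by $\max$ and $W$ replaced by $F\sim\Frechet(1,-\alpha^{-1})$.

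The crucial step is then to substitute the representation $W(\bx_S)=F(\bx_S)=e^{-\alpha(\gamma_S(\bx_S)+c)}$ with $\gamma_S(\bx_S)\sim\Gumbel(-c)$ (this is how both Weibull$(1,\alpha^{-1})$ for $\alpha>0$ and Fr\'echet$(1,-\alpha^{-1})$ for $\alpha<0$ are generated from standard Gumbels). Multiplying the noise inside the exponent of $\max_{\bx_{\bar S}} e^{\phi(\bx_S,\bx_{\bar S})}$ lets me combine the two exponents into $\exp\{-\alpha[\phi(\bx)+\gamma_S(\bx_S)]\}$ up to the factor $e^{-\alpha c}/\Gamma(1+\alpha)$. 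For $\alpha>0$ the inner $\min$ over $\bx_S$ of these terms (each of which is $\pow_{-\alpha}$ of something positive, hence a decreasing function of that something) corresponds to a $\max$ over $\bx_S$ of $\phi(\bx)+\gamma_S(\bx_S)$; for $\alpha<0$ the inner $\max$ over $\bx_S$ again corresponds to the same joint maximum since now $\pow_{-\alpha}$ is increasing. In both cases one arrives at
\begin{equation*}
\pow_{-\alpha} Z \;\lessgtr\; \frac{e^{-\alpha c}}{\Gamma(1+\alpha)}\,\IE_\gamma\!\left[\exp\!\left(-\alpha\max_{\bx}\{\phi(\bx)+\gamma_S(\bx_S)\}\right)\right],
\end{equation*}
with $\leq$ when $\alpha>0$ and $\geq$ when $\alpha\in(-1,0)$.

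Finally I would take logarithms on both sides and divide by $-\alpha$. Dividing by a negative number flips the inequality exactly once more in the $\alpha>0$ case (so we end up with $\ln Z \geq \cdots$), while for $\alpha\in(-1,0)$ dividing by the positive $-\alpha$ preserves the $\geq$. Rearranging the constants gives the stated lower bound. The main obstacle is bookkeeping: tracking the sign of $-\alpha$ in $\pow_{-\alpha}$ and in the final division, and correctly translating the $\min$/$\max$ under $\pow_{-\alpha}$ into a single joint $\max$ over $\bx=(\bx_S,\bx_{\bar S})$ after substituting the Gumbel representation. Once the casework is laid out carefully, both cases collapse to the same final formula.
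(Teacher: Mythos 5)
Your proof is correct and follows essentially the same route as the paper's: bound the inner sum over $\bx_{\bar S}$ by its maximum, apply the Weibull/Fr\'echet identity of Lemma~\ref{lem:WeibullFrechetMagic} to the outer sum over $\bx_S$, substitute the Gumbel representation $e^{-\alpha(\gamma_S(\bx_S)+c)}$ of the noise, and track the inequality direction through $\pow_{-\alpha}$ and the final division by $-\alpha$. (Your use of $\max$ over $\bx_S$ in the Fr\'echet case is the correct reading of the lemma; the $\min$ displayed at that point in the paper's proof appears to be a typo.)
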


\begin{hcorl}[\ref{corl:LowRankLowerBound}.]
	For any $\alpha \in (-1, 0) \cup (0, \infty)$, we have the lower bound $\ln Z \geq \calL(\alpha)$, where
	\begin{equation*}
	\calL(\alpha) :=
	c + \frac{\ln \Gamma(1 + \alpha)}{\alpha} - \frac{1}{n \alpha} \ln \IE\left[ \exp\left( - n \alpha L \right) \right],
	\end{equation*}
	\begin{proof}
		Applying Proposition~\ref{prop:LowRankLowerBoundAlphaSubset} $n$ times with all singleton sets $S = \{ i \}$ and averaging the obtained lower bounds yields
		\begin{align*}
		\ln Z &\geq
		c + \frac{\ln \Gamma(1 + \alpha)}{\alpha}
		- \frac{1}{n} \sum_{i = 1}^n \frac{1}{\alpha} \ln \IE\left[ \exp\left( - \alpha \max_{\bx} \{ \phi(\bx) + \gamma_i (x_i) \} \right) \right]
		\\&=
		c + \frac{\ln \Gamma(1 + \alpha)}{\alpha}
		- \frac{1}{n \alpha} \ln \IE\left[ \exp\left( - \sum_{i = 1}^n \alpha \max_{\bx} \{ \phi(\bx) + \gamma_i (x_i) \} \right) \right]
		\\&=
		c + \frac{\ln \Gamma(1 + \alpha)}{\alpha}
		- \frac{1}{n \alpha} \ln \IE\left[ \exp\left( - n \alpha \frac{1}{n} \sum_{i = 1}^n \max_{\bx} \{ \phi(\bx) + \gamma_i (x_i) \} \right) \right],
		\end{align*}
		where the first equality used the fact that the perturbations $\gamma_i(x_i)$ are mutually independent for different indices $i$ to replace the product of expectations with the expectation of the product. The claimed result follows by applying Jensen's inequality to swap the summation and the convex $\max_{\bx}$ function, noting that the inequality works out the right way for both positive and negative $\alpha$.
	\end{proof}
\end{hcorl}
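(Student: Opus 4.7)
}
The plan is to derive $\calL(\alpha)$ as the average of $n$ singleton instances of Proposition~\ref{prop:LowRankLowerBoundAlphaSubset}. Concretely, I would apply that proposition with $S = \{i\}$ for each $i \in \{1, \ldots, n\}$, obtaining $n$ lower bounds of the form
\begin{equation*}
\ln Z \;\geq\; c + \frac{\ln \Gamma(1+\alpha)}{\alpha} - \frac{1}{\alpha} \ln \IE\left[ e^{-\alpha \max_{\bx}\{\phi(\bx) + \gamma_i(x_i)\}} \right],
\end{equation*}
and then average them over $i = 1, \ldots, n$. Since the constant term $c + \ln\Gamma(1+\alpha)/\alpha$ is unchanged by averaging, the work is in showing that the averaged log-expectation term is at least $-\frac{1}{n\alpha} \ln \IE[\exp(-n\alpha L)]$.

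Next, I would use the fact that the unary perturbation families $\{\gamma_i(\cdot)\}$ for distinct $i$ are mutually independent to convert the arithmetic average of logarithms of expectations into a single log-expectation: $\sum_{i=1}^{n} \ln \IE[e^{-\alpha M_i}] = \ln \IE[\exp(-\alpha \sum_{i=1}^{n} M_i)]$, where $M_i := \max_{\bx}\{\phi(\bx) + \gamma_i(x_i)\}$. This produces the intermediate bound
\begin{equation*}
\ln Z \;\geq\; c + \frac{\ln \Gamma(1+\alpha)}{\alpha} - \frac{1}{n\alpha} \ln \IE\Big[\exp\Big(-\alpha \sum_{i=1}^{n} M_i\Big)\Big].
\end{equation*}

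To bring the sum of maxes into a single max (and hence recover $nL$), I would apply the elementary inequality $\sum_{i=1}^{n} \max_{\bx} f_i(\bx) \geq \max_{\bx} \sum_{i=1}^{n} f_i(\bx)$ to $f_i(\bx) := \phi(\bx) + \gamma_i(x_i)$. The right-hand side evaluates to $\max_{\bx}\{n\phi(\bx) + \sum_i \gamma_i(x_i)\} = nL$, so $\sum_i M_i \geq nL$. Equivalently, this is Jensen's inequality applied to the convex function $\max_{\bx}$ composed with the affine averaging over $i$, exactly as the author's sketch notes.

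Finally, I would propagate this inequality through $x \mapsto e^{-\alpha x}$, the expectation, and $x \mapsto \ln x$, carefully tracking the sign of $\alpha$. For $\alpha > 0$, the map $x \mapsto e^{-\alpha x}$ is decreasing, so $\IE[\exp(-\alpha \sum_i M_i)] \leq \IE[e^{-n\alpha L}]$, and the prefactor $-1/(n\alpha)$ is negative, flipping the inequality back to $-\frac{1}{n\alpha}\ln \IE[\exp(-\alpha \sum_i M_i)] \geq -\frac{1}{n\alpha} \ln \IE[e^{-n\alpha L}]$. For $\alpha \in (-1, 0)$ both reversals happen in the opposite direction and again compose to the same final inequality, yielding $\ln Z \geq \calL(\alpha)$. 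The only subtle step is this sign-tracking at the end; everything else is routine manipulation of the singleton bound from Proposition~\ref{prop:LowRankLowerBoundAlphaSubset}.
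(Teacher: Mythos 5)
Your proposal is correct and follows essentially the same route as the paper's proof: average the $n$ singleton instances of Proposition~\ref{prop:LowRankLowerBoundAlphaSubset}, use independence of the families $\{\gamma_i(\cdot)\}$ across $i$ to merge the log-expectations, and then apply the convexity of $\max_{\bx}$ (your elementary inequality $\sum_i \max_{\bx} f_i \geq \max_{\bx}\sum_i f_i$ is exactly the Jensen step the paper invokes), with the sign of $\alpha$ tracked correctly in both cases.
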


Jensen's inequality can be used to relate the general lower bound $\calL(\alpha)$ to the Gumbel trick lower bound $\calL(0)$, showing that the former cannot be arbitrarily worse than the latter:

\begin{proposition}
	For all $\alpha \in (-1, 0)$, the lower bound $\calL(\alpha)$ on $\ln Z$ satisfies
	\begin{equation*}
	\calL(\alpha) \geq \calL(0) + \frac{\ln \Gamma(1+\alpha)}{\alpha} + c
	\end{equation*}
	\begin{proof}
		Apply Jensen's inequality with the convex function $x \mapsto e^{-n \alpha}$ to the last term in the definition of $\calL(\alpha)$, noting that the inequality works out the stated way for $\alpha < 0$.
	\end{proof}
\end{proposition}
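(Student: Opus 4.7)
The plan is to reduce the claim to a direct application of Jensen's inequality to the moment generating function of $L$ evaluated at $-n\alpha$. First I would subtract $\calL(0) = \IE[L]$ from the definition $\calL(\alpha) = c + \frac{\ln \Gamma(1+\alpha)}{\alpha} - \frac{1}{n\alpha} \ln \IE[\exp(-n\alpha L)]$ and observe that the desired inequality is equivalent to
\begin{equation*}
-\frac{1}{n\alpha} \ln \IE\!\left[ e^{-n\alpha L} \right] \;\geq\; \IE[L].
\end{equation*}
So the entire content of the proposition is a lower bound on the (rescaled) log-MGF of $L$ by the mean of $L$.

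Next I would invoke Jensen's inequality for the convex function $x \mapsto e^{-n\alpha x}$ (which is convex for any $\alpha \neq 0$ since its second derivative is $(n\alpha)^2 e^{-n\alpha x} > 0$), giving
\begin{equation*}
\IE\!\left[ e^{-n\alpha L} \right] \;\geq\; e^{-n\alpha\, \IE[L]}.
\end{equation*}
Taking logarithms, this becomes $\ln \IE[e^{-n\alpha L}] \geq -n\alpha\, \IE[L]$.

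The only subtle point, and indeed the reason the stated inequality is restricted to $\alpha \in (-1, 0)$, is handling the sign when we divide by $-n\alpha$ to recover the quantity appearing in $\calL(\alpha)$. For $\alpha \in (-1, 0)$ we have $-n\alpha > 0$, so division preserves the direction and yields
\begin{equation*}
-\frac{1}{n\alpha} \ln \IE\!\left[ e^{-n\alpha L} \right] \;\geq\; \IE[L],
\end{equation*}
which is exactly what was needed. Adding back the terms $c + \frac{\ln \Gamma(1+\alpha)}{\alpha}$ on both sides completes the proof. There is no serious obstacle here; the only thing to watch is the sign reversal that would occur for $\alpha > 0$, which is precisely why the proposition excludes that regime.
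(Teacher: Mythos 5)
Your proposal is correct and follows exactly the paper's own argument: the paper's proof is the one-line instruction to apply Jensen's inequality with the convex function $x \mapsto e^{-n\alpha x}$ to the last term of $\calL(\alpha)$ and note that dividing by $-n\alpha > 0$ preserves the inequality for $\alpha \in (-1,0)$, which is precisely the reduction and sign analysis you spell out. No differences worth noting.
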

Note that $\frac{\ln \Gamma(1+\alpha)}{\alpha} + c \leq 0$ for $\alpha \in (-1, 0)$ so this result does \emph{not} imply that the Fr\'echet lower bounds are tighter than the Gumbel lower bound $\calL(0)$; it merely says that they cannot be arbitrarily worse than $\calL(0)$.

\subsection{Relationship between errors of averaged-unary Gumbel perturbations}
\label{app:sec:sum_average:errors}

In this section we write $\bx^{*}$ for the (random) MAP configuration after average-unary perturbation of the potential function, i.e.,
\begin{equation*}
\bx^{*} := \argmax_{\bx \in \calX} \left\{ \phi(\bx) + \frac{1}{n} \sum_{i = 1}^n \gamma_i(x_i) \right\}.
\end{equation*}
where $\{ \gamma_i(x_i) \mid x_i \in \calX_i, 1 \leq i \leq n \} \stackrel{\text{i.i.d.}}{\sim} \Gumbel(-c)$. Let $\qavg(\bx) := \IP[\bx = \bx^{*}]$ be the probability mass function of $\bx^{*}$. The Gumbel trick lower bound on the log partition function $\ln Z$ due to~\citet{hazan_sampling_2013} is:
\begin{equation}
\ln Z
\geq \calL(0)
= \calL_{\phi}(0)
:= \IE_{\gamma}\left[ \min_{\bx \in \calX} \left\{ \phi(\bx) + \frac{1}{n} \sum_{i = 1}^n \gamma_i(x_i) \right\} \right].
\label{app:eq:gumbel_lower_bound}
\end{equation}

We show that the gap of this Gumbel trick lower bound on $\ln Z$ upper bounds the KL divergence between the approximate distribution $\qavg$ and the Gibbs distribution $p$. To this end, we first need an entropy bound for $\qavg$ analogous to Theorem 1 of~\citep{maji_active_2014}.

\begin{theorem}
	\label{thm:UnaryAvgEntropyLowerBound}
	The entropy of $\qavg$ can be lower bounded using expected values of max-perturbations as follows:
	\begin{equation*}
	H(\qavg) \geq \frac{1}{n} \sum_{i = 1}^n \IE_{\gamma_i}\left[ \gamma_i(x^{*}_i) \right]
	\end{equation*}
\end{theorem}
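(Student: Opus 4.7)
My plan exploits the equivalence between this entropy lower bound and the KL-versus-gap inequality of Proposition~\ref{prop:UnaryAvgGapsLink}. Conditioning on the argmax $\bx^{*}$, the averaged-unary Gumbel lower bound satisfies
\begin{equation*}
\calL(0) \;=\; \IE[\phi(\bx^{*})] + \tfrac{1}{n}\sum_{i=1}^{n}\IE_{\gamma}[\gamma_{i}(x_{i}^{*})]
\;=\; \IE_{\qavg}[\phi] + \tfrac{1}{n}\sum_{i=1}^{n}\IE_{\gamma}[\gamma_{i}(x_{i}^{*})].
\end{equation*}
Combining this with the variational identity $\ln Z - \KL{\qavg}{p} = \IE_{\qavg}[\phi] + H(\qavg)$ shows that the target $\tfrac{1}{n}\sum_{i}\IE[\gamma_{i}(x_{i}^{*})] \le H(\qavg)$ rewrites as $\calL(0) \le \ln Z - \KL{\qavg}{p}$, which is precisely Proposition~\ref{prop:UnaryAvgGapsLink}. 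So it suffices to prove either formulation, and I will focus on the entropy form directly.

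My preferred attack is Gibbs' inequality: for any distribution $r$ on $\calX$, $H(\qavg) \ge -\IE_{\qavg}[\ln r]$. With the Gumbel-conditional trial distribution $r(x) \propto \exp\!\bigl(\IE_{\gamma}[\bar\gamma(x) \mid \bx^{*}=x]\bigr)$, where $\bar\gamma(x) := \tfrac{1}{n}\sum_{i}\gamma_{i}(x_{i})$, Gibbs yields
\begin{equation*}
H(\qavg) \;\ge\; \IE_{\gamma}[\bar\gamma(\bx^{*})] + \ln\!\sum_{x\in\calX} e^{\IE[\bar\gamma(x)\mid \bx^{*}=x]},
\end{equation*}
so it suffices to show the log-normalizer is nonnegative, i.e.\ $\sum_{x} e^{\IE[\bar\gamma(x)\mid \bx^{*}=x]} \ge 1$. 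I would try to establish this via a Jensen-type bound applied to the joint law of $(\gamma,\bx^{*})$, exploiting that each $\gamma_{i}(x_{i})$ is zero-mean unconditionally while conditioning on $\{\bx^{*}=x\}$ biases $\gamma_{i}(x_{i})$ upward only for the winning coordinate. A complementary route is to apply Jensen to the convex function $\gamma \mapsto \max_{x}\{\phi(x)+\bar\gamma(x)\}$ to bound $\calL(0)$ by per-coordinate log-sum-exp quantities, which can then be compared to $\ln Z$.

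The main obstacle is structural: in the sum-unary case of \citet{maji_active_2014}, the Gumbel \emph{upper} bound $\calU(0) \ge \ln Z$ chains cleanly with the variational inequality $\ln Z \ge \IE_{q}[\phi]+H(q)$ (both pointing the same way relative to $\ln Z$) to deliver the matching entropy upper bound essentially for free. Here $\calL(0) \le \ln Z$ and $\ln Z \ge \IE_{\qavg}[\phi]+H(\qavg)$ point in the same direction as well but \emph{cannot} be composed to produce the claim, so the proof must exploit the specific independence-across-coordinates structure of $\bar\gamma$ together with Gumbel-specific moment identities rather than a black-box chaining of convex inequalities. I expect that closing the normalization step in the Gibbs-inequality route above—equivalently, arguing that the proposed $r$ is a sub-probability measure—is where the essential Gumbel-specific content enters.
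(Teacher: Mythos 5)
Your reduction to Proposition~\ref{prop:UnaryAvgGapsLink} is correct but circular for present purposes: in the paper, Proposition~\ref{prop:UnaryAvgGapsLink} is \emph{derived from} Theorem~\ref{thm:UnaryAvgEntropyLowerBound}, so the equivalence does not supply a proof of either. The substantive problem is with your main attack: the inequality you call Gibbs' inequality is stated in the wrong direction. Gibbs' inequality $\KL{\qavg}{r}\ge 0$ gives $H(\qavg) \le -\IE_{\qavg}[\ln r]$ (entropy is bounded \emph{above} by cross-entropy), not $H(\qavg) \ge -\IE_{\qavg}[\ln r]$. Since the theorem asks for a \emph{lower} bound on $H(\qavg)$, no choice of trial distribution $r$ can make this route work; the subsequent display and the reduction to $\sum_x e^{\IE[\bar\gamma(x)\mid\bx^{*}=x]}\ge 1$ therefore do not follow (and that normalization claim is in any case left open, only ``to be established via a Jensen-type bound'').

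The mechanism your proposal does not reach is the variational duality $H(\qavg)=\inf_{\varphi}\{\ln Z_{\varphi}-\sum_{\bx}\qavg(\bx)\varphi(\bx)\}$: because the infimum sits on the side being lower-bounded, one may replace $\ln Z_{\varphi}$ by the Gumbel lower bound $\calL_{\varphi}(0)\le\ln Z_{\varphi}$ uniformly in $\varphi$ and still conclude $H(\qavg)\ge\inf_{\varphi}\{\calL_{\varphi}(0)-\sum_{\bx}\qavg(\bx)\varphi(\bx)\}$. The genuinely nontrivial step --- which you correctly sense must exist but do not identify --- is showing that this perturbed infimum is attained at $\varphi=\phi$. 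The paper does this by proving that $\calL_{\varphi}(0)$ is convex in $\varphi$ with continuous partial derivatives $\partial\calL_{\varphi}(0)/\partial\varphi(\bx)=q_{\varphi}(\bx)$ (Propositions~\ref{prop:BoundConvex}, \ref{prop:LowerBoundPartialDerivatives} and~\ref{prop:UnaryAvgMaxPerturbDistributionContinuous}), so the objective has gradient $q_{\varphi}-\qavg$, which vanishes at $\varphi=\phi$; evaluating there and conditioning on the argmax yields $\calL_{\phi}(0)-\IE_{\qavg}[\phi]=\frac{1}{n}\sum_{i}\IE_{\gamma_i}[\gamma_i(x^{*}_i)]$. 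Your observation that the sum-unary upper bound of \citet{maji_active_2014} chains ``for free'' while the lower bound does not is exactly right, but the missing ingredient is this first-order optimality argument for the infimum, not a Gumbel-specific moment identity for a sub-probability trial measure.
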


\begin{remark}
	Theorem 1 of~\citep{maji_active_2014} and this Theorem~\ref{thm:UnaryAvgEntropyLowerBound} differ in three aspects: (1) the former is an upper bound and the latter is a lower bound, (2) the former sums the expectations while the latter averages them, and (3) the distributions $\qsum$ and $\qavg$ of $\bx^{*}$ in the two theorems are different.
\end{remark}

\begin{proof}
	By the duality relation between negative entropy and the log partition function~\citep{wainwright_graphical_2008}, the entropy $H(\qavg)$ of the unary-avg perturb-max distribution $\qavg$ can be expressed as
	\begin{equation*}
	H(\qavg) =
	\inf_{\varphi} \left\{ \ln Z_{\varphi} - \sum_{\bx \in \calX} \qavg(\bx) \varphi(\bx) \right\},
	\end{equation*}
	where the variable $\varphi$ ranges over all potential functions on $\calX$, and $Z_{\varphi} = \sum_{\bx \in \calX} \exp \varphi(\bx)$. Applying the Gumbel trick lower bound on the log partition function gives
	\begin{equation*}
	H(\qavg) \geq
	\inf_{\varphi} \left\{ \calL_{\varphi}(0) - \sum_{\bx \in \calX} \qavg(\bx) \varphi(\bx) \right\},
	\end{equation*}
	Proposition~\ref{prop:BoundConvex} in Appendix~\ref{app:sec:TechnicalResults} shows that $\calL_{\varphi}(0)$ is a convex function of $\varphi$. The expression $- \sum_{\bx \in \calX} q(\bx) \varphi(\bx)$ is a linear function of $\varphi$, so also convex, and thus as a sum of two convex functions, the quantity $\calL_{\varphi}(0) - \sum_{\bx \in \calX} q(\bx) \varphi(\bx)$ within the infimum is a convex function of $\varphi$. Moreover, Proposition~\ref{prop:LowerBoundPartialDerivatives} in Appendix~\ref{app:sec:TechnicalResults} tells us that the partial derivatives can be computed as
	\begin{equation*}
	\frac{\partial}{\partial \varphi(\bx)} \left( \calL_{\varphi}(0) - \sum_{\bx \in \calX} \qavg(\bx) \varphi(\bx) \right)
	= q_{\varphi}(\bx) - \qavg(\bx)
	\end{equation*}
	where $q_{\varphi}(\bx)$ is the unary-avg perturb-max distribution associated with the potential function $\varphi$. Proposition~\ref{prop:UnaryAvgMaxPerturbDistributionContinuous} in Appendix~\ref{app:sec:TechnicalResults} confirms that these partial derivatives are continuous, so we observe that as a function of $\varphi$, the expression $\calL_{\varphi}(0) - \sum_{\bx \in \calX} \qavg(\bx) \varphi(\bx)$ is a convex function with continuous partial derivatives, so it is a differentiable convex function. This is sufficient to establish that the point $\varphi = \phi$ is a global minimum of this function \citep{wright1999numerical}. Hence
	\begin{align*}
	H(\qavg)
	&\geq
	\inf_{\varphi} \left\{ \calL_{\varphi}(0) - \sum_{\bx \in \calX} \qavg(\bx) \varphi(\bx) \right\}
	\\ &=
	\calL_{\phi}(0) - \sum_{\bx \in \calX} \qavg(\bx) \phi(\bx)
	\\ &=
	\sum_{\bx \in \calX} \qavg(\bx) \IE_{\gamma}\left[ \phi(\bx) + \frac{1}{n} \sum_{i = 1}^n \gamma_i(x_i) \mid \bx = \bx^{*} \right]
	- \sum_{\bx \in \calX} \qavg(\bx) \phi(\bx)
	\\ &=
	\frac{1}{n} \sum_{i = 1}^n \IE_{\gamma_i}\left[ \gamma_i(x^{*}_i) \right]
	\end{align*}
	where we conditioned on the maximizing configuration $\bx^{*}$ when expanding $\calL_{\phi}(0)$.
\end{proof}

\begin{remark}
	This proof proceeded in the same way as the proof of~\citet{maji_active_2014} for the upper bound, except that establishing the minimizing configuration of the infimum is a non-trivial step that is actually required in this case. The second revision of~\citep{hazan_high_2016} computes the derivative of $\calU_{\varphi}(0) - \sum_{\bx \in \calX} \qsum(\bx) \varphi(\bx)$, which is similar to our $\calL_{\varphi}(0) - \sum_{\bx \in \calX} \qavg(\bx) \varphi(\bx)$, by differentiating under the expectation.
\end{remark}

Equipped with Theorem~\ref{thm:UnaryAvgEntropyLowerBound}, we can now show a link between the approximation ``errors" of the averaged-unary perturbation MAP configuration distribution $\qavg$ (to the Gibbs distribution $p$) and estimate $\calL(0)$ (to $\ln Z$).

\begin{hprop}[\ref{prop:UnaryAvgGapsLink}.]
	Let $p$ be the Gibbs distribution on $\calX$. Then
	\begin{equation*}
	\underbrace{\ln Z - \calL(0)}_{\text{error in } \ln Z \text{ bound}}
	\geq
	\underbrace{\KL{\qavg}{p}}_{\text{sampling error}}
	\geq 0
	\end{equation*}
	\begin{remark}
		While we knew from~\citet{hazan_sampling_2013} that $\ln Z - \calL(0) \geq 0$ (i.e. that $\calL(0)$ is a lower bound on $\ln Z$), this is a stronger result showing that the size of the gap is an upper bound on the KL divergence between the average-unary perturbation MAP distribution $\qavg$ and the Gibbs distribution $p$.
	\end{remark}
	\begin{proof}
		The Kullback-Leibler divergence in question expands as
		\begin{align*}
		\KL{\qavg}{p} &=
		  - H(\qavg) - \sum_{\bx \in \calX} \qavg(\bx) \ln \frac{\exp \phi(\bx)}{\sum_{\tilde{\bx} \in \calX} \exp \phi(\tilde{\bx})}
		=
		  - H(\qavg) -  \sum_{\bx \in \calX} \qavg(\bx) \phi(\bx) + \ln Z.
		\end{align*}
		From the proof of Theorem~\ref{thm:UnaryAvgEntropyLowerBound} we know that $H(\qavg) \geq \calL(0) - \sum_{\bx \in \calX} \qavg(\bx) \phi(\bx)$, so
		\begin{align*}
		\KL{\qavg}{p} &\leq
		  - \calL(0) + \sum_{\bx \in \calX} \qavg(\bx) \phi(\bx) - \sum_{\bx \in \calX} \qavg(\bx) \phi(\bx) + \ln Z
		=
		  \ln Z - \calL(0).
		\qedhere
		\end{align*}
	\end{proof}
	\label{app:prop:UnaryAvgGapsLink}
\end{hprop}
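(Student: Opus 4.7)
The plan is to reduce the inequality $\ln Z - \calL(0) \geq \KL{\qavg}{p}$ to a lower bound on the entropy of $\qavg$ in terms of $\calL(0)$, and then establish that entropy bound using Fenchel duality combined with the Gumbel trick lower bound itself. First I would expand the KL divergence in the standard way,
\begin{equation*}
\KL{\qavg}{p} = -H(\qavg) - \sum_{\bx \in \calX} \qavg(\bx)\phi(\bx) + \ln Z,
\end{equation*}
so that the target inequality is equivalent to the entropy statement
\begin{equation*}
H(\qavg) + \sum_{\bx \in \calX}\qavg(\bx)\phi(\bx) \geq \calL(0).
\end{equation*}
The right-hand inequality $\KL{\qavg}{p}\geq 0$ is just Gibbs' inequality and needs no further argument.

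For the entropy bound, I would invoke the conjugate-duality representation of negative entropy and the log partition function~\citep{wainwright_graphical_2008},
\begin{equation*}
H(\qavg) = \inf_{\varphi} \Big\{ \ln Z_{\varphi} - \sum_{\bx\in\calX} \qavg(\bx)\varphi(\bx) \Big\},
\end{equation*}
where $\varphi$ ranges over all potentials on $\calX$ and $Z_\varphi = \sum_\bx e^{\varphi(\bx)}$. Since the Gumbel trick lower bound gives $\ln Z_\varphi \geq \calL_\varphi(0)$ for every $\varphi$, I can substitute inside the infimum to obtain $H(\qavg) \geq \inf_\varphi\{\calL_\varphi(0) - \sum_\bx \qavg(\bx)\varphi(\bx)\}$. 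The crux is then to show that this infimum is attained at the true potential $\varphi = \phi$. The approach is: (i) observe that $\calL_\varphi(0) = \IE_\gamma[\max_\bx\{\varphi(\bx) + \frac{1}{n}\sum_i\gamma_i(x_i)\}]$ is convex in $\varphi$ as the expectation of a pointwise maximum of affine functions; (ii) differentiate under the expectation to obtain $\partial \calL_\varphi(0)/\partial \varphi(\bx) = q_\varphi(\bx)$, the averaged-unary perturb-max distribution associated with potential $\varphi$; (iii) note that $-\sum_\bx\qavg(\bx)\varphi(\bx)$ is linear with gradient $-\qavg(\bx)$, so the objective in the infimum is convex, differentiable, with gradient $q_\varphi - \qavg$, which vanishes precisely when $\varphi = \phi$ (since then $q_\varphi = \qavg$). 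By convexity, this critical point is the global minimizer. Plugging $\varphi = \phi$ back in yields $H(\qavg) \geq \calL(0) - \sum_\bx\qavg(\bx)\phi(\bx)$, which combined with the expansion of $\KL{\qavg}{p}$ above completes the proof.

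The main obstacle is the justification of the variational step: verifying that $\varphi \mapsto \calL_\varphi(0)$ is not only convex but differentiable on all of $\IR^{|\calX|}$ with continuous gradient $\varphi \mapsto q_\varphi$, so that the first-order optimality condition $q_\phi = \qavg$ genuinely identifies a global minimizer rather than merely a stationary point. This requires interchanging differentiation and expectation in the definition of $\calL_\varphi(0)$ (legitimate because the integrand is almost-everywhere differentiable in $\varphi$ with a uniformly integrable envelope) and continuity of $\varphi \mapsto q_\varphi$, which is the content of the auxiliary continuity statement (analogous to Proposition~\ref{prop:UnaryAvgMaxPerturbDistributionContinuous} in the appendix). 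Everything else is a short rearrangement, and as a bonus one can read off the intermediate entropy lower bound $H(\qavg) \geq \frac{1}{n}\sum_i \IE_{\gamma_i}[\gamma_i(x_i^{**})]$ by conditioning on the argmax in the formula for $\calL(0)$.
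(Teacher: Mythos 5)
Your proposal is correct and follows essentially the same route as the paper: expanding the KL divergence, reducing to the entropy lower bound $H(\qavg) \geq \calL(0) - \sum_{\bx}\qavg(\bx)\phi(\bx)$, and establishing that bound via the conjugate-duality representation of entropy together with convexity and differentiability of $\varphi \mapsto \calL_\varphi(0)$ with gradient $q_\varphi$. The paper packages these steps as Theorem~\ref{thm:UnaryAvgEntropyLowerBound} and the auxiliary Propositions~\ref{prop:BoundConvex}--\ref{prop:UnaryAvgMaxPerturbDistributionContinuous}, but the substance is identical, including your observation that the first-order optimality condition at $\varphi=\phi$ is the crux requiring continuous partial derivatives.
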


\newpage
\section{Technical results}
\label{app:sec:TechnicalResults}

In this section we write $\calL(\phi)$ instead of $\calL_{\phi}(0)$ for the Gumbel trick lower bound on $\ln Z$ associated with the potential function $\phi$, see equation (\ref{app:eq:gumbel_lower_bound}).

\begin{proposition}
	\label{prop:BoundConvex}
	The Gumbel trick lower bound $\calL(\phi)$, viewed as a function of the potentials $\phi$, is convex.
	\begin{proof} Convexity can be proved directly from definition. Let $\phi_1$ and $\phi_2$ be two arbitrary potential functions on a discrete product space $\calX$, and let $\lambda \in [0, 1]$. Then
		\begin{align*}
		& \calL(\lambda \phi_1 + (1 - \lambda) \phi_2)
		\\ &=
		\IE_{\gamma}\left[ \max_{\bx \in \calX}\left\{ \lambda \phi_1(\bx) + (1 - \lambda) \phi_2(\bx) + \frac{1}{n} \sum_{i = 1}^n \gamma_i(x_i) \right\} \right]
		\\ &=
		\IE_{\gamma}\left[ \max_{\bx \in \calX}\left\{ \lambda \left( \phi_1(\bx) + \frac{1}{n} \sum_{i = 1}^n \gamma_i(x_i) \right) + (1 - \lambda) \left( \phi_2(\bx) + \frac{1}{n} \sum_{i = 1}^n \gamma_i(x_i) \right) \right\} \right]
		\\ &\leq
		\IE_{\gamma}\left[ \lambda \max_{\bx \in \calX}\left\{ \phi_1(\bx) + \frac{1}{n} \sum_{i = 1}^n \gamma_i(x_i) \right\} + (1 - \lambda) \max_{\bx \in \calX}\left\{ \phi_2(\bx) + \frac{1}{n} \sum_{i = 1}^n \gamma_i(x_i) \right\} \right]
		\\ &=
		\lambda \calL(\phi_1) + (1 - \lambda) \calL(\phi_2),
		\end{align*}
		where we have used convexity of the $\max$ function to obtain the inequality, and linearity of expectation to arrive at the final equality.
	\end{proof}
\end{proposition}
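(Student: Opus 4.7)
The plan is to verify convexity directly from the definition, exploiting the fact that the pointwise maximum of affine functions is convex. Fix two arbitrary potential functions $\phi_1,\phi_2$ on $\calX$ and a mixing weight $\lambda\in[0,1]$. I want to show that
\[
\calL(\lambda\phi_1+(1-\lambda)\phi_2)\;\le\;\lambda\calL(\phi_1)+(1-\lambda)\calL(\phi_2).
\]

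First I would unfold the definition of $\calL$ at the mixed potential and rewrite the noise term as its own convex combination: $\frac{1}{n}\sum_i\gamma_i(x_i)=\lambda\cdot\frac{1}{n}\sum_i\gamma_i(x_i)+(1-\lambda)\cdot\frac{1}{n}\sum_i\gamma_i(x_i)$. This regrouping turns the expression inside the $\max_\bx$ into a convex combination, with weights $\lambda$ and $1-\lambda$, of the two perturbed potentials $\phi_1(\bx)+\frac{1}{n}\sum_i\gamma_i(x_i)$ and $\phi_2(\bx)+\frac{1}{n}\sum_i\gamma_i(x_i)$, sharing the same realization of $\gamma$.

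Next I would invoke convexity of $\max_\bx\{\cdot\}$ over functions of $\bx$ (equivalently, the inequality $\max(\lambda a+(1-\lambda)b)\le\lambda\max a+(1-\lambda)\max b$ applied pointwise in $\gamma$) to bound the integrand by $\lambda\max_\bx\{\phi_1(\bx)+\frac{1}{n}\sum_i\gamma_i(x_i)\}+(1-\lambda)\max_\bx\{\phi_2(\bx)+\frac{1}{n}\sum_i\gamma_i(x_i)\}$. Finally, taking expectations over $\gamma$ and applying linearity of expectation recovers $\lambda\calL(\phi_1)+(1-\lambda)\calL(\phi_2)$, completing the argument.

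There is no real obstacle: convexity of $\max$, monotonicity and linearity of expectation are the only tools needed, and the key observation is simply that the Gumbel perturbations do not depend on $\phi$ so they can be shared across the two terms of the convex combination. As a sanity check, one notes that the same line of reasoning shows convexity of the upper bound $\calU(0)=\IE[U]$ and, more generally, of any expectation of a max of affine functions of the potentials.
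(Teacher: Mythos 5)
Your proposal is correct and follows essentially the same route as the paper's own proof: split the shared Gumbel noise term into a convex combination of itself, apply convexity of the pointwise maximum for each fixed realization of $\gamma$, and conclude by linearity (and monotonicity) of expectation. The closing remark that the same argument gives convexity of $\calU_{\phi}(0)$ also matches the remark the paper makes after this proposition.
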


\begin{remark}
	This convexity proof goes through for other (low-dimensional) perturbations as well, e.g.~it also works for $\calU_{\phi}(0)$.
\end{remark}

\begin{proposition}
	\label{prop:LowerBoundPartialDerivatives}
	The Gumbel trick lower bound $\calL(\phi)$, viewed as a function of the potentials $\phi$, has partial derivatives
	\begin{equation*}
	\frac{\partial}{\partial \phi(\tilde{\bx})} \calL(\phi)
	= q_{\phi}(\tilde{\bx})
	\end{equation*}
	where $q_{\phi}$ is the probability mass function of the average-unary perturbation MAP configuration's distribution associated with the potential function $\phi$.
	\begin{proof}
		Let $\tilde{\bx} \in \calX$, so that $\phi(\tilde{\bx})$ is a general component of $\phi$, and let $e_{\tilde{\bx}}$ be the indicator vector of $\tilde{\bx}$. For any $\delta \in \IR$, the change in the lower bound $\calL$ due to replacing $\phi(\tilde{\bx})$ with $\phi(\tilde{\bx}) + \delta$ is
		\begin{align*}
		\calL(\phi + \delta e_{\tilde{\bx}}) - \calL(\phi) &=
		\IE_{\gamma}\left[ \max_{\bx \in \calX}\left\{ \phi(\bx) + \delta \mathbbm{1}\{\bx = \tilde{\bx}\} + \frac{1}{n} \sum_{i = 1}^n \gamma_i(x_i) \right\} \right]
		- \IE_{\gamma}\left[ \max_{\bx \in \calX}\left\{ \phi(\bx) + \frac{1}{n} \sum_{i = 1}^n \gamma_i(x_i) \right\} \right]
		\\&=
		\IE_{\gamma}\left[ \max_{\bx \in \calX}\left\{ \phi(\bx) + \delta \mathbbm{1}\{\bx = \tilde{\bx}\} + \frac{1}{n} \sum_{i = 1}^n \gamma_i(x_i) \right\} - \max_{\bx \in \calX}\left\{ \phi(\bx) + \frac{1}{n} \sum_{i = 1}^n \gamma_i(x_i) \right\} \right]
		\\&=
		\IE_{\gamma}\left[ \Delta(\phi, \delta, \tilde{\bx}, \gamma) \right]
		\end{align*}
		by linearity of expectation, where we have denoted by $\Delta(\phi, \delta, \tilde{\bx}, \gamma)$ the change in maximum due to replacing the potential $\phi(\tilde{\bx})$ with $\phi(\tilde{\bx}) + \delta$. Let's condition on the argmax before modifying $\phi$:
		\begin{align*}
		\calL(\phi + \delta e_{\tilde{\bx}}) - \calL(\phi) &=
		\IE_{\gamma}\left[ \Delta(\phi, \delta, \tilde{\bx}, \gamma) \right]
		=
		\sum_{\bx \in \calX} q_{\phi}(\bx) \IE_{\gamma}\left[ \Delta(\phi, \delta, \tilde{\bx}, \gamma) \mid \bx \text{ is the original argmax} \right]
		\end{align*}
		Now let's condition on the size of the gap $G$ between the maximum and the runner-up:
		\begin{align*}
		\IE_{\gamma}\left[ \Delta(\phi, \delta, \tilde{\bx}, \gamma) \mid \bx \text{ is the original argmax} \right] &=
		\IP(G \leq |\delta|) \IE_{\gamma}\left[ \Delta(\phi, \delta, \tilde{\bx}, \gamma) \mid \bx \text{ is the original argmax}, G \leq |\delta| \right]
		\\&+
		\IP(G > |\delta|) \IE_{\gamma}\left[ \Delta(\phi, \delta, \tilde{\bx}, \gamma) \mid \bx \text{ is the original argmax}, G > |\delta| \right]
		\end{align*}
		Let's examine all four terms on the right-hand side one by one:
		\begin{enumerate}
			\item $\IP(G \leq |\delta|) \to \IP(G = 0) = 0$ as $\delta \to 0$ by monotonicity of measure.
			\item $\IE_{\gamma}\left[ \Delta(\phi, \delta, \tilde{\bx}, \gamma) \mid \bx \text{ is the original argmax}, G \leq |\delta| \right] \leq \delta$ since $|\Delta(\phi, \delta, \tilde{\bx}, \gamma)| \leq |\delta|$ always holds.
			\item $\IP(G > |\delta|) \to \IP(G \geq 0) = 1$ as $\delta \to 0$ by monotonicity of measure.
			\item $\IE_{\gamma}\left[ \Delta(\phi, \delta, \tilde{\bx}, \gamma) \mid \bx \text{ is the original argmax}, G > |\delta| \right] = \delta \mathbbm{1}\{\bx = \tilde{\bx}\}$ since in this case both maximizations in the definition of $\Delta(\phi, \delta, \tilde{\bx}, \gamma)$ are maximized at $\bx$.
		\end{enumerate}
		Therefore, as $\delta \to 0$,
		\begin{equation*}
		\IE_{\gamma}\left[ \Delta(\phi, \delta, \tilde{\bx}, \gamma) \mid \bx \text{ is the original argmax} \right]
		= o(1) o(\delta) + (1 + o(1)) \delta \mathbbm{1}\{\bx = \tilde{\bx}\}
		\end{equation*}
		Putting things together, we have
		\begin{align*}
		\lim_{\delta \to 0} \frac{\calL(\phi + \delta e_{\tilde{\bx}}) - \calL(\phi)}{\delta} &=
		\sum_{\bx \in \calX} q_{\phi}(\bx) \lim_{\delta \to 0} \frac{1}{\delta} \IE_{\gamma}\left[ \Delta(\phi, \delta, \tilde{\bx}, \gamma) \mid \bx \text{ is the original argmax} \right]
		\\ &=
		\sum_{\bx \in \calX} q_{\phi}(\bx) \mathbbm{1}\{\bx = \tilde{\bx}\}
		\\ &=
		q_{\phi}(\tilde{\bx}),
		\end{align*}
		which proves the stated claim directly from definition of a partial derivative.
	\end{proof}
\end{proposition}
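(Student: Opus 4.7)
The plan is to compute the partial derivative directly from the limit definition, by studying how the integrand inside
\[
\calL(\phi) = \IE_\gamma\!\left[ \max_{\bx \in \calX}\{\phi(\bx) + \tfrac{1}{n}\textstyle\sum_{i} \gamma_i(x_i)\} \right]
\]
changes when we perturb a single potential value $\phi(\tilde{\bx})$ by $\delta$. I would write the difference quotient $[\calL(\phi + \delta e_{\tilde{\bx}}) - \calL(\phi)]/\delta$, move the difference inside the expectation using linearity, and then examine the conditional behaviour of the change in the inner maximum given the (random) argmax configuration $\bx^*$ of the unperturbed problem.

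The key observation is that for any fixed realisation of $\gamma$, if the gap $G(\gamma)$ between the top-two perturbed potentials exceeds $|\delta|$, then the argmax after perturbing $\phi(\tilde{\bx})$ still equals $\bx^*$, so the change in the maximum is exactly $\delta\cdot\mathbbm{1}\{\bx^* = \tilde{\bx}\}$. I would therefore split the conditional expectation of the change into a \emph{safe} piece on $\{G > |\delta|\}$ (equal to $\delta\cdot\mathbbm{1}\{\bx^* = \tilde{\bx}\}$ up to the probability that $G > |\delta|$) and a \emph{risky} piece on $\{G \leq |\delta|\}$, then sum over the conditioning event $\bx^*$ weighted by $q_\phi(\bx^*)$.

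The main obstacle is controlling the risky piece. The change in the maximum is always bounded in absolute value by $|\delta|$, since perturbing a single potential by $\delta$ cannot move the maximum by more than $|\delta|$. Hence the risky piece is at most $|\delta|\cdot\IP(G \leq |\delta|)$. Because $\gamma$ has a continuous distribution on $\IR^{|\calX_1|+\cdots+|\calX_n|}$, ties occur with probability zero and $G > 0$ almost surely; by monotonicity of measure, $\IP(G \leq |\delta|) \to \IP(G = 0) = 0$ as $\delta \to 0$, so the risky piece is $o(\delta)$. Dividing by $\delta$ and letting $\delta \to 0$, the risky contribution vanishes and the safe contribution converges to $\mathbbm{1}\{\bx^* = \tilde{\bx}\}$; taking the expectation over $\bx^*$ gives $q_\phi(\tilde{\bx})$, as claimed.

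As a slicker alternative, one could invoke Danskin's envelope theorem pointwise in $\gamma$ to get $\partial_{\phi(\tilde{\bx})} \max_\bx\{\phi(\bx)+\tfrac{1}{n}\sum_i\gamma_i(x_i)\} = \mathbbm{1}\{\bx^* = \tilde{\bx}\}$ almost surely, then interchange differentiation and expectation via dominated convergence using the uniform bound $|\partial_{\phi(\tilde{\bx})}(\cdot)| \leq 1$. This shortcut relies on the same almost-sure uniqueness of $\bx^*$ (continuity of the Gumbel law), so the direct argument above is essentially the cleanest way to expose where that hypothesis is used.
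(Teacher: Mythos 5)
Your proposal is correct and follows essentially the same route as the paper's proof: write the difference quotient, push the difference inside the expectation, condition on the unperturbed argmax, split on whether the gap $G$ between the top two perturbed potentials exceeds $|\delta|$, use the uniform bound $|\Delta| \leq |\delta|$ on the small-gap event together with $\IP(G \leq |\delta|) \to \IP(G=0) = 0$, and observe that on the large-gap event the change is exactly $\delta\,\mathbbm{1}\{\bx^* = \tilde{\bx}\}$. The Danskin/dominated-convergence alternative you mention is a valid shortcut but rests on the same almost-sure uniqueness of the argmax, as you note.
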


\begin{proposition}
	\label{prop:UnaryAvgMaxPerturbDistributionContinuous}
	The probability mass function $q_{\phi}$ of the average-unary perturbation MAP configuration's distribution associated with a potential function $\phi$ is continuous in $\phi$.
	\begin{proof} For any $\bx^{*} \in \calX$ we have from definition
		\begin{align*}
		q_{\phi}(\bx^{*}) &=
		\IP\left[ \bx^{*} = \argmax_{\bx \in \calX}\left\{ \phi(\bx) + \frac{1}{n} \sum_{i = 1}^n \gamma_i(x_i) \right\} \right]
		\\&=
		\IP\left[ \phi(\bx^{*}) + \frac{1}{n} \sum_{i = 1}^n \gamma_i(x^{*}_i) > \max_{\bx \in \calX \setminus \{ \bx^{*} \}}\left\{ \phi(\bx) + \frac{1}{n} \sum_{i = 1}^n \gamma_i(x_i) \right\} \right]
		\\&=
		\IE\left[ \mathbbm{1}\left\{ \phi(\bx^{*}) + \frac{1}{n} \sum_{i = 1}^n \gamma_i(x^{*}_i) > \max_{\bx \in \calX \setminus \{ \bx^{*} \}}\left\{ \phi(\bx) + \frac{1}{n} \sum_{i = 1}^n \gamma_i(x_i) \right\} \right\} \right]
		\end{align*}
		which is continuous in $\phi$ by continuity of $\max$, of $\mathbbm{1}\left\{\cdot > \cdot\right\}$ (as a function of $\phi$)
		and by the Bounded Convergence Theorem.
	\end{proof}
\end{proposition}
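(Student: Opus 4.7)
The plan is to start from the identity already displayed in the proof sketch, namely
\begin{equation*}
q_{\phi}(\bx^{*}) = \IE\bigl[\, h_{\phi}(\gamma) \,\bigr],
\quad
h_{\phi}(\gamma) := \mathbbm{1}\!\Bigl\{\phi(\bx^{*}) + \tfrac{1}{n}\!\sum_{i=1}^n \gamma_i(x^{*}_i) > \max_{\bx \ne \bx^{*}} \bigl\{\phi(\bx) + \tfrac{1}{n}\!\sum_{i=1}^n \gamma_i(x_i)\bigr\}\Bigr\},
\end{equation*}
and to establish continuity by showing that for any sequence $\phi^{(k)} \to \phi$ we have $\IE[h_{\phi^{(k)}}(\gamma)] \to \IE[h_{\phi}(\gamma)]$, via the Bounded Convergence Theorem applied with the uniform bound $0 \le h_{\phi^{(k)}} \le 1$.

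The step that requires care is the pointwise convergence $h_{\phi^{(k)}}(\gamma) \to h_{\phi}(\gamma)$ for almost every $\gamma$. The function $\phi \mapsto h_{\phi}(\gamma)$ is built by composing the continuous maps $\phi \mapsto \phi(\bx) + \tfrac{1}{n}\sum_i \gamma_i(x_i)$ and the continuous function $\max$ with the indicator $\mathbbm{1}\{\cdot > \cdot\}$; this composition is continuous at $\phi$ precisely when the strict inequality inside the indicator is preserved under small perturbations of $\phi$, i.e.\ when there is no tie between $\bx^{*}$ and some other configuration at the value $\phi$. I would therefore identify the ``bad'' set
\begin{equation*}
B_{\phi} := \Bigl\{\gamma : \exists\, \bx \ne \bx^{*} \text{ with } \phi(\bx^{*}) + \tfrac{1}{n}\!\sum_i \gamma_i(x^{*}_i) = \phi(\bx) + \tfrac{1}{n}\!\sum_i \gamma_i(x_i)\Bigr\},
\end{equation*}
and argue that $\IP(B_\phi) = 0$. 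For each fixed $\bx \ne \bx^{*}$ there is some coordinate $i$ with $x_i \ne x^{*}_i$, so the difference of the two sides is a linear combination of independent $\Gumbel(-c)$ variables in which at least one coefficient is nonzero; hence this difference has an absolutely continuous distribution and equals zero with probability zero. A union bound over the finitely many $\bx \ne \bx^{*}$ gives $\IP(B_\phi) = 0$.

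Off $B_\phi$, continuity of $h_{\bullet}(\gamma)$ at $\phi$ gives $h_{\phi^{(k)}}(\gamma) \to h_{\phi}(\gamma)$, so we have a.s.\ pointwise convergence, and the Bounded Convergence Theorem yields $q_{\phi^{(k)}}(\bx^{*}) \to q_{\phi}(\bx^{*})$, completing the proof. The main obstacle, as outlined, is handling the discontinuities of the indicator: the entire argument hinges on the observation that ties among perturbed potentials occur with probability zero because the Gumbel perturbations are continuous. Beyond that, the argument is a routine application of bounded convergence and the continuity of $\max$.
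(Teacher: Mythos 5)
Your proof is correct and follows essentially the same route as the paper's: rewrite $q_{\phi}(\bx^{*})$ as the expectation of an indicator and pass to the limit via the Bounded Convergence Theorem. You are in fact more careful than the paper, whose one-line appeal to ``continuity of $\mathbbm{1}\{\cdot > \cdot\}$'' silently relies on exactly the point you make explicit --- that ties among the perturbed potentials form a null set because a non-degenerate linear combination of independent Gumbel variables has an absolutely continuous distribution, so the indicator is almost surely continuous at $\phi$.
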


\begin{remark}
	The results above show that the Gumbel trick lower bound $\calL(\phi)$, viewed as a function of the potentials $\phi$, is convex and has continuous partial derivatives.
\end{remark}

\end{document}